\newtheorem{theorem}{Theorem}
\newtheorem{corollary}{Corollary}
\newtheorem{remark}{Remark}
\newtheorem{definition}{Definition}
\newtheorem{proposition}{Proposition}
\newtheorem{lemma}{Lemma}
\newcommand{\bq}{\boldsymbol{q}}
\newcommand{\bp}{\boldsymbol{p}}
\newcommand{\balpha}{\boldsymbol{\alpha}}
\newcommand{\blue}{}
\title{UCBoost: A Boosting Approach to Tame Complexity and Optimality for Stochastic Bandits}
\author{
Fang Liu$^1$, 
Sinong Wang$^1$, 
Swapna Buccapatnam$^2$,
Ness Shroff$^1$ 
\\ 
$^1$ The Ohio State University \\
$^2$ AT\&T Labs Research\\
liu.3977@osu.edu,
wang.7691@osu.edu,
sb646f@att.com,
shroff.11@osu.edu
}
\begin{document}

\maketitle

\begin{abstract}
In this work, we address the open problem of finding low-complexity near-optimal multi-armed bandit algorithms for sequential decision making problems. Existing bandit algorithms are either sub-optimal and computationally simple (e.g., UCB1) or optimal and computationally complex (e.g., kl-UCB). We propose a boosting approach to Upper Confidence Bound based algorithms for stochastic bandits, that we call UCBoost. Specifically, we propose two types of UCBoost algorithms. We show that UCBoost($D$) enjoys $O(1)$ complexity for each arm per round as well as regret guarantee that is $1/e$-close to that of the kl-UCB algorithm. We propose an approximation-based UCBoost algorithm, UCBoost($\epsilon$), that enjoys a regret guarantee $\epsilon$-close to that of kl-UCB as well as $O(\log(1/\epsilon))$ complexity for each arm per round. Hence, our algorithms provide practitioners a practical way to trade optimality with computational complexity. Finally, we present numerical results which show that UCBoost($\epsilon$) can achieve the same regret performance as the standard kl-UCB while incurring only $1\%$ of the computational cost of kl-UCB.
\end{abstract}

\section{Introduction}\label{sec:intro}
\begin{table*}[t]
\centering
\caption{Regret guarantee and computational complexity per arm per round of various algorithms}
\label{table:complexityintro}
\begin{center}
\begin{tabular}{|c|c|c|c|c|}
\hline
& kl-UCB & UCBoost($\epsilon$) & UCBoost($D$) &UCB1\\
\hline
Regret/$\log(T)$ & $O\left(\sum\limits_{a}\frac{\mu^*-\mu_a}{d_{kl}(\mu_a,\mu^*)}\right)$ &$O\left(\sum\limits_{a}\frac{\mu^*-\mu_a}{d_{kl}(\mu_a,\mu^*)-\epsilon}\right)$ & $O\left(\sum\limits_{a}\frac{\mu^*-\mu_a}{d_{kl}(\mu_a,\mu^*)-1/e}\right)$ & $O\left(\sum\limits_{a}\frac{\mu^*-\mu_a}{2(\mu^*-\mu_a)^2}\right)$\\
\hline
Complexity & unbounded & $O(\log(1/\epsilon))$ & $O(1)$ & O(1)\\
\hline
\end{tabular}
\end{center}
\end{table*}

Multi-armed bandits, introduced by \citeauthor{thompson}~\shortcite{thompson}, have been used as quintessential models for sequential decision making. In the classical setting, at each time, a decision maker must choose an arm from a set of $K$ arms with unknown probability distributions. Choosing an arm $i$ at time $t$ reveals a random reward $X_{i}(t)$ drawn from the probability distribution of arm $i.$  The goal is to find policies that minimize the expected regret due to uncertainty about arms' distributions over a given time horizon $T$. \citeauthor{lai1985asymptotically}~\shortcite{lai1985asymptotically}, followed by \citeauthor{burnetas1996optimal}~\shortcite{burnetas1996optimal}, have provided an asymptotically lower bound on the expected regret. 

Upper confidence bounds (UCB) based algorithms are an important class of bandit algorithms. The most celebrated UCB-type algorithm is UCB1 proposed by \citeauthor{auer2002finite}~\shortcite{auer2002finite}, which enjoys simple computations per round as well as $O(\log T)$ regret guarantee. Variants of UCB1, such as UCB-V proposed by \citeauthor{audibert2009exploration}~\shortcite{audibert2009exploration} and MOSS proposed by \citeauthor{audibert2010regret}~\shortcite{audibert2010regret}, have been studied and shown improvements on the regret guarantees. However, the regret guarantees of these algorithms have unbounded gaps to the lower bound. Recently, \citeauthor{maillard2011finite}~\shortcite{maillard2011finite} and \citeauthor{garivier2011kl}~\shortcite{garivier2011kl} have proposed a UCB algorithm based on the Kullback-Leibler divergence, kl-UCB, and proven it to be asymptotically optimal when all arms follow a Bernoulli distribution, i.e., they reach the lower bound by \citeauthor{lai1985asymptotically}~\shortcite{lai1985asymptotically}. They have generalized the algorithm to KL-UCB \cite{cappe2013kullback}, which is asymptotically optimal under general distributions with bounded supports.

However, these UCB algorithms exhibit a complexity-optimality dilemma in the real world applications that are computationally sensitive. On one hand, the UCB1 algorithm enjoys closed-form updates per round while its regret gap to the lower bound can be unbounded. On the other hand, the kl-UCB algorithm is asymptotically optimal but it needs to solve a convex optimization problem for each arm at each round. Though there are many standard optimization tools to solve the convex optimization problem numerically, there is no regret guarantee for the implemented kl-UCB with arbitrary numerical accuracy. Practitioners usually set a sufficient accuracy (for example, $10^{-5}$) so that the behaviors of the implemented kl-UCB converge to the theory. However, this means that the computational cost per round by kl-UCB can be out of budget for applications with a large number of arms. The complexity-optimality dilemma is because there is currently no available algorithm that can trade-off between complexity and optimality. 
 
Such a dilemma occurs in a number of applications with a large $K$. For example, in an online recommendation system \cite{li2010contextual,buccapatnam2017reward}, the algorithm needs to recommend an item from hundreds of thousands of items to a customer within a second. Another example is the use of bandit algorithms as a meta-algorithm for other machine learning problems, e.g., using bandits for classifier boosting \cite{busa2010fast}. The number of data points and features can be large.

Another scenario that the dilemma appears is in real-time applications such as robotic systems \cite{matikainen2013multi}, 2D planning \cite{laskey2015multi} and portfolio optimization \cite{moeini2016portfolio}. In these applications, a delayed decision may turn out to be catastrophic.

\citeauthor{cappe2013kullback}~\shortcite{cappe2013kullback} proposed the open problem of finding a low-complexity optimal UCB algorithm, which has remained open till now. In this work, we make the following contributions to this open problem. (Table \ref{table:complexityintro} summarizes the main results.)
\begin{itemize}
\item We propose a generic UCB algorithm. By plugging a semi-distance function, one can obtain a specific UCB algorithm with regret guarantee (Theorem \ref{theorem:UCB}). As a by-product, we propose two new UCB algorithms that are alternatives to UCB1 (Corollary \ref{cor:bq} and \ref{cor:h}).
\item We propose a boosting algorithm, UCBoost, which can obtain a strong (i.e., with regret guarantee close to the lower bound) UCB algorithm from a set of weak (i.e., with regret guarantee far away from the lower bound) generic UCB algorithms (Theorem \ref{theorem:UCBoost}). By boosting a finite number of weak generic UCB algorithms, we find a UCBoost algorithm that enjoys the same complexity as UCB1 as well as a regret guarantee that is $1/e$-close to the kl-UCB algorithm (Corollary \ref{cor:ucboost})\footnote{Note that $e$ is the natural number}. That is to say, such a UCBoost algorithm is low-complexity and near-optimal under the Bernoulli case.
\item We propose an approximation-based UCBoost algorithm, UCBoost($\epsilon$), that enjoys $\epsilon$-optimal regret guarantee under the Bernoulli case and $O(\log(1/\epsilon))$ computational complexity for each arm at each round for any $\epsilon>0$ (Theorem \ref{theorem:thmstepapprox}). This algorithm provides a non-trivial trade-off between complexity and optimality. 
\end{itemize}

{\blue
{\bf \noindent Related Work.} There are other asymptotically optimal algorithms, such as Thompson Sampling~\cite{agrawal2012analysis}, Bayes-UCB~\cite{kaufmann2012bayesian} and DMED~\cite{honda2010asymptotically}. However, the computations involved in these algorithms become non-trivial in non-Bernoulli cases. First, Bayesian methods, including Thompson Sampling, Information Directed Sampling~\cite{russo2014learning,liu2017information} and Bayes-UCB, require updating and sampling from the posterior distribution, which is computationally difficult for models other than exponential families~\cite{korda2013thompson}. Second, the computational complexity of DMED policy is larger than UCB policies because the computation involved in DMED is formulated as a univariate convex optimization problem. In contrast, our algorithms are computationally efficient in general bounded support models and don't need the knowledge of prior information on the distributions of the arms.

Our work is also related to DMED-M proposed by \citeauthor{honda2012stochastic}~\shortcite{honda2012stochastic}. DMED-M uses the first $d$ empirical moments to construct a lower bound of the objective function involved in DMED. As $d$ goes to infinity, the lower bound converges to the objective function and DMED-M converges to DMED while the computational complexity increases. However, DMED-M has no explicit form when $d>4$ and there is no guarantee on the regret gap to the optimality for any finite $d$. Unlike DEMD-M, our UCBoost algorithms can provide guarantees on the complexity and regret performance for arbitrary $\epsilon$, which offers a controlled tradeoff between complexity and optimality.
}

\citeauthor{agarwal2017corralling}~\shortcite{agarwal2017corralling} proposed a boosting technique to obtain a strong bandit algorithm from the existing algorithms, that is adaptive to the environment. However, our boosting technique is specifically designed for stochastic setting and hence allows us to obtain near-optimal algorithms that have better regret gurantees than those obtained using the boosting technique by \citeauthor{agarwal2017corralling}~\shortcite{agarwal2017corralling}.

\section{Problem Formulation}\label{sec:probform}
We consider a stochastic bandit problem with finitely many arms indexed by $a\in\mathcal{K}\triangleq\{1,\ldots,K\}$, where $K$ is a finite positive integer. Each arm $a$ is associated with an unknown probability distribution $v_a$ over the bounded support\footnote{If the supports are bounded in another interval, they can be rescaled to [0,1].} $\Theta=[0,1]$. At each time step $t=1,2,\ldots,$ the agent chooses an action $A_t$ according to past observations (possibly using some independent randomization) and receives a reward $X_{A_t,N_{A_t}(t)}$ independently drawn from the distribution $v_{A_t}$, where $N_a(t)\triangleq\sum_{s=1}^t\mathbbm{1}\{A_s=a\}$ denotes the number of times that arm $a$ was chosen up to time $t$. Note that the agent can only observe the reward $X_{A_t,N_{A_t}(t)}$ at time $t$. Let $\bar{X}_a(t)$ be the empirical mean of arm $a$ based on the observations up to time $t$.

For each arm $a$, we denote by $\mu_a$ the expectation of its associated probability distribution $v_a$. Let $a^*$ be any optimal arm, that is
\begin{equation}
a^*\in\arg\max_{a\in\mathcal{K}}\mu_a.
\end{equation}
We write $\mu^*$ as a shorthand notation for the largest expectation $\mu_{a^*}$ and denote the gap of the expected reward of arm $a$ to $\mu^*$ as $\Delta_a=\mu^*-\mu_a$. The performance of a policy $\pi$ is evaluated through the standard notion of expected regret, which is defined at time horizon $T$ as
\begin{align}
R^\pi(T)&\triangleq \mathbb{E}\left[T\mu^*-\sum_{t=1}^TX_{A_t,N_{A_t}(t)}\right]\\\nonumber
&=\mathbb{E}\left[T\mu^*-\sum_{t=1}^T\mu_{A_t}\right]=\sum_{a\in\mathcal{K}}\Delta_a\mathbb{E}[N_a(T)].
\end{align}
Note that the first equality follows from the tower rule. The goal of the agent is to minimize the expected regret.

\section{Preliminaries}
In this section, we introduce the concept of semi-distance functions and show several related properties. A semi-distance function measures the distance between two expectations of random variables over $\Theta$.
First, we introduce a weak notion of semi-distance function, which is called candidate semi-distance function.
\begin{definition}
\emph{(Candidate semi-distance function)}
A function $d:\Theta\times\Theta\rightarrow \mathbb{R}$ is said to be a candidate semi-distance function if 
\begin{enumerate}
\item $d(p,p)\leq0$, $\forall p\in\Theta;$
\item $d(p,q)\leq d(p,q^\prime)$, $\forall p\leq q\leq q^\prime\in\Theta;$
\item $d(p,q)\geq d(p^\prime,q)$, $\forall p\leq p^\prime\leq q\in\Theta.$
\end{enumerate}
\end{definition}
Clearly, a candidate semi-distance function satisfies the monotone properties\footnote{The monotone properties are equivalent to the triangle inequality in one-dimensional case.} of a distance function. However, it does not need to be non-negative and symmetric. As we show later, such a class of functions plays an important role in the boosting and approximation methods. Moreover, a candidate semi-distance function can be easily modified to a semi-distance function defined as follows.

\begin{definition}
\emph{(Semi-distance function)}
A function $d:\Theta\times\Theta\rightarrow \mathbb{R}$ is said to be a semi-distance function if 
\begin{enumerate}
\item $d(p,q)\geq 0$, $\forall p,q\in\Theta;$
\item $d(p,p)=0$, $\forall p\in\Theta;$
\item $d(p,q)\leq d(p,q^\prime)$, $\forall p\leq q\leq q^\prime\in\Theta;$
\item $d(p,q)\geq d(p^\prime,q)$, $\forall p\leq p^\prime\leq q\in\Theta.$
\end{enumerate}
\end{definition}
A semi-distance function satisfies the non-negative condition, and is stronger than a candidate semi-distance function. The following lemma reveals a simple way to obtain a semi-distance function from a candidate semi-distance function. The proof is provided in Section \ref{app:candsemi}.
\begin{lemma}\label{lem:candsemi}
If $d_1:\Theta\times\Theta\rightarrow \mathbb{R}$ is a candidate semi-distance function and $d_2:\Theta\times\Theta\rightarrow \mathbb{R}$ is a semi-distance function, then $\max(d_1,d_2)$ is a semi-distance function.
\end{lemma}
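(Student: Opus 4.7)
The plan is to verify each of the four defining conditions of a semi-distance function for the pointwise maximum $d \triangleq \max(d_1, d_2)$, using the corresponding properties of $d_1$ (as a candidate semi-distance function) and $d_2$ (as a semi-distance function). This is essentially a direct check, so I would organize it as four short bullets, one per condition.

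First, for non-negativity, I would observe that since $d_2(p,q)\geq 0$ for all $p,q\in\Theta$, we immediately get $d(p,q)\geq d_2(p,q)\geq 0$. This uses only the semi-distance property of $d_2$, so the weaker $d_1$ cannot break it. Second, for the zero-on-diagonal condition, I would combine the two identities at $p=q$: the candidate property gives $d_1(p,p)\leq 0$, while the semi-distance property gives $d_2(p,p)=0$. Taking the max yields $d(p,p)=\max(d_1(p,p),0)=0$. This is the only place where the sign hypothesis on $d_1(p,p)$ is actually used, and it is the step that justifies why a mere candidate semi-distance function suffices in the first slot of the max.

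Third and fourth, for the two monotonicity conditions, I would invoke the elementary fact that $\max$ preserves pointwise inequalities: if $f_i(x)\leq g_i(x)$ for $i=1,2$, then $\max(f_1,f_2)(x)\leq \max(g_1,g_2)(x)$. Applying this with condition~2 of the candidate definition and condition~3 of the semi-distance definition gives $d(p,q)\leq d(p,q')$ whenever $p\leq q\leq q'$; and symmetrically, applying it to the third properties of each definition yields $d(p,q)\geq d(p',q)$ whenever $p\leq p'\leq q$.

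There is no real obstacle here; the lemma is essentially a bookkeeping result whose content is that taking a max with an already-valid semi-distance function is exactly the operation that ``lifts'' a candidate to a genuine semi-distance (by killing off any negative values on the diagonal while leaving the monotonicity structure intact). The only thing worth stating cleanly is the diagonal case, since that is the single point where the candidate hypothesis is quantitatively weaker than the semi-distance hypothesis.
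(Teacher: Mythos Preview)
Your proposal is correct and follows essentially the same approach as the paper: a direct verification of each of the four defining conditions, using non-negativity of $d_2$ for condition~1, the combination $d_1(p,p)\leq 0$ and $d_2(p,p)=0$ for condition~2, and preservation of pointwise inequalities under $\max$ for conditions~3 and~4.
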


\begin{remark}\label{remark:0semi-distance}
In particular, $d\equiv0$ is a semi-distance function. So one can easily obtain a semi-distance function from a candidate semi-distance function. 
\end{remark}

As discussed in Remark \ref{remark:0semi-distance}, a semi-distance function may not distinguish two different distributions. So we introduce the following strong notion of semi-distance functions.
\begin{definition}
\emph{(Strong semi-distance function)}
A function $d:\Theta\times\Theta\rightarrow \mathbb{R}$ is said to be a strong semi-distance function if 
\begin{enumerate}
\item $d(p,q)\geq 0$, $\forall p,q\in\Theta;$
\item $d(p,q)=0$, if and only if $p= q\in\Theta;$
\item $d(p,q)\leq d(p,q^\prime)$, $\forall p\leq q\leq q^\prime\in\Theta;$
\item $d(p,q)\geq d(p^\prime,q)$, $\forall p\leq p^\prime\leq q\in\Theta.$
\end{enumerate}
\end{definition}

Similar to Lemma \ref{lem:candsemi}, one can obtain a strong semi-distance function from a candidate semi-distance function as shown in Lemma \ref{lem:candstrong}. The proof of Lemma \ref{lem:candstrong} is provided in Section \ref{app:candstrong}.
\begin{lemma}\label{lem:candstrong}
If $d_1:\Theta\times\Theta\rightarrow \mathbb{R}$ is a candidate semi-distance function and $d_2:\Theta\times\Theta\rightarrow \mathbb{R}$ is a strong semi-distance function, then $\max(d_1,d_2)$ is a strong semi-distance function.
\end{lemma}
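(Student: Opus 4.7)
The plan is to verify the four defining properties of a strong semi-distance function for $d := \max(d_1,d_2)$, mirroring the structure of the proof of Lemma \ref{lem:candsemi} but paying extra attention to the ``if and only if'' clause in property 2, which is the only substantive difference.

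First I would handle non-negativity (property 1): since $d_2$ is a strong semi-distance function we have $d_2(p,q) \geq 0$, hence $d(p,q) = \max(d_1(p,q), d_2(p,q)) \geq d_2(p,q) \geq 0$. Next, for the monotonicity properties 3 and 4, I would argue that the pointwise maximum of two functions sharing a monotonicity property inherits that property: if $p \leq q \leq q'$, then $d_i(p,q) \leq d_i(p,q')$ for both $i=1,2$ (the candidate axiom for $d_1$ and the strong axiom for $d_2$), so taking max on each side preserves the inequality, and analogously for property 4. These parts are essentially the same one-line arguments as in Lemma \ref{lem:candsemi}.

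The main (though still easy) step is the iff condition in property 2. For the forward direction, if $p = q$ then the candidate axiom gives $d_1(p,p) \leq 0$ and the strong axiom gives $d_2(p,p) = 0$, so $d(p,p) = \max(d_1(p,p), 0) = 0$. For the reverse direction, suppose $d(p,q) = 0$. Then in particular $d_2(p,q) \leq 0$, but $d_2(p,q) \geq 0$ by strong property 1, so $d_2(p,q) = 0$, and then the iff clause for $d_2$ forces $p = q$.

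I do not expect any real obstacle: the argument is a direct verification, and the only spot requiring minor care is the observation that $d_1(p,p) \leq 0$ (rather than $= 0$) still yields $\max(d_1(p,p), d_2(p,p)) = 0$ because $d_2(p,p) = 0$ pins the maximum from below. No additional hypothesis on $d_1$ is used beyond the three candidate axioms, so the lemma is proved in the same few lines as Lemma \ref{lem:candsemi}.
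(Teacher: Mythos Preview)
Your proposal is correct and essentially the same as the paper's proof. The only packaging difference is that the paper invokes Lemma~\ref{lem:candsemi} (noting that a strong semi-distance function is in particular a semi-distance function) to cover properties 1, 3, 4 and the forward direction of property 2 in one stroke, and then checks only the remaining ``only if'' clause via the contrapositive ($p\neq q \Rightarrow d_2(p,q)>0 \Rightarrow d(p,q)>0$); you instead re-verify all four axioms inline, but the logical content is identical.
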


A typical strong semi-distance function is the Kullback-Leibler divergence between two Bernoulli distributions,
\begin{equation}
d_{kl}(p,q)=p\log\left(\frac{p}{q}\right)+(1-p)\log\left(\frac{1-p}{1-q}\right).
\end{equation}
In this work, we are interested in semi-distance functions that are dominated by the KL divergence as mentioned above.

\begin{definition}
\emph{(kl-dominated function)}
A function $d:\Theta\times\Theta\rightarrow \mathbb{R}$ is said to be kl-dominated if 
$d(p,q)\leq d_{kl}(p,q),$ $\forall p,q\in\Theta$.
\end{definition}

Consider a set of candidate semi-distance functions. If one can obtain a kl-dominated and strong semi-distance function by taking the maximum, then the set is said to be feasible. A formal definition is presented in Definition \ref{def:feasible}. 

\begin{definition}\label{def:feasible}
\emph{(Feasible set)} A set $D$ of functions from $\Theta\times\Theta$ to $\mathbb{R}$ is said to be feasible if 
\begin{enumerate}
\item $\max\limits_{d\in D}d$ is a strong semi-distance function; 
\item $\max\limits_{d\in D}d$ is kl-dominated.  
\end{enumerate}
\end{definition}

The following proposition shows a sufficient condition for a set to be feasible. 
\begin{proposition}\label{prop:feasible}
A set $D$ of functions from $\Theta\times\Theta$ to $\mathbb{R}$ is feasible if
\begin{enumerate}
\item $\forall d\in D$, $d$ is a candidate semi-distance function; 
\item $\exists d\in D$ such that $d$ is a strong semi-distance function; 
\item $\forall d\in D$, $d$ is kl-dominated.  
\end{enumerate}
\end{proposition}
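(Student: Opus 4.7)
The plan is simply to verify the two conditions in Definition \ref{def:feasible} for $d^\ast := \max_{d\in D}d$ by combining the hypotheses pointwise. The monotonicity and domination properties pass through the supremum without effort; the only place where hypothesis~2 (existence of a strong member $d_0 \in D$) is really needed is to upgrade non-negativity into a strict-positivity statement away from the diagonal.

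First I would reduce the work by noting that the pointwise maximum of any family of candidate semi-distance functions is itself a candidate semi-distance function: properties 2 and 3 of Definition 1 are preserved because $d(p,q)\le d(p,q')$ for every $d$ implies $\sup_d d(p,q)\le \sup_d d(p,q')$, and the diagonal bound $d(p,p)\le 0$ for every $d$ implies $\sup_d d(p,p)\le 0$. Then $d^\ast = \max\bigl(\max_{d\in D\setminus\{d_0\}} d,\; d_0\bigr)$ is the maximum of a candidate semi-distance function and a strong semi-distance function, so Lemma \ref{lem:candstrong} delivers that $d^\ast$ is a strong semi-distance function. (If one prefers not to invoke Lemma \ref{lem:candstrong}, the same four axioms can be checked directly: non-negativity from $d^\ast(p,q)\ge d_0(p,q)\ge 0$; $d^\ast(p,p)=0$ because all candidate members contribute at most $0$ on the diagonal while $d_0(p,p)=0$; $d^\ast(p,q)>0$ whenever $p\ne q$ because $d^\ast(p,q)\ge d_0(p,q)>0$; and monotonicity inherited termwise.)

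For the second feasibility condition, kl-domination is immediate: since $d(p,q)\le d_{kl}(p,q)$ for every $d\in D$ and every $(p,q)\in\Theta\times\Theta$, taking the supremum on the left yields $d^\ast(p,q)\le d_{kl}(p,q)$.

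There is no real obstacle; the conceptual point is just that conditions 1--3 of Proposition \ref{prop:feasible} were designed precisely so that the pointwise maximum inherits every axiom one-for-one, with the strong member providing the only piece (strict positivity off the diagonal) that candidate members alone cannot supply.
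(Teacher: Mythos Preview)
Your proof is correct and follows essentially the same route as the paper's: both arguments reduce the strong-semi-distance claim to Lemma~\ref{lem:candstrong}, and both treat kl-domination as the trivial pointwise supremum bound. The only cosmetic difference is in how Lemma~\ref{lem:candstrong} is invoked: the paper assumes $D=\{d_1,\dots,d_M\}$ with $d_1$ strong and builds up $\hat d_k=\max(\hat d_{k-1},d_k)$ inductively, applying Lemma~\ref{lem:candstrong} at each step, whereas you first collapse all of $D\setminus\{d_0\}$ into a single candidate semi-distance function and then apply Lemma~\ref{lem:candstrong} once. Your version has the mild advantage of not requiring $D$ to be finite, and your parenthetical direct verification of the four axioms makes the argument self-contained; otherwise the two proofs are interchangeable.
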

The proof is provided in Section \ref{app:feasible}. Note that we only need one of the functions to be a strong semi-distance function in order to have a feasible set. This allows us to consider some useful candidate semi-distance functions in our boosting approach.

\section{Boosting}\label{sec:boosting}
We first present a generic form of UCB algorithm, which can generate a class of UCB algorithms that only use the empirical means of the arms. We then provide a boosting technique to obtain a good UCBoost algorithm based on these weak UCB algorithms.
\subsection{The Generic UCB Algorithm}
\begin{algorithm}[t]
\caption{The generic UCB algorithm}
\label{alg:ucb}
\begin{algorithmic}
\REQUIRE semi-distance function $d$
\STATE{Initialization: $t$ {\bfseries from} $1$ {\bfseries to} $K$, play arm $A_t=t$.}
\FOR{$t$ {\bfseries from} $K+1$ {\bfseries to} $T$}
\STATE{Play arm $A_t=\arg\max_{a\in\mathcal{K}}\max\{q\in\Theta:N_a(t-1)d(\bar{X}_a(t-1),q)\leq \log (t)+c\log(\log(t))\}$}
\ENDFOR
\end{algorithmic}
\end{algorithm}

Algorithm \ref{alg:ucb} presents a generic form of UCB algorithm, which only uses the empirical means. The instantiation of the UCB algorithm requires a semi-distance function. Given a semi-distance function $d$, UCB($d$) algorithm finds upper confidence bounds $\{u_a(t)\}_{a\in\mathcal{K}}$ such that the distance $d(\bar{X}_{a}(t-1),u_a(t))$ is at most the exploration bonus ($(\log(t)+c\log(\log(t)))/N_a(t-1)$) for any arm $a$. Note that $c$ is a constant to be determined. In other words, $u_a(t)$ is the solution of the following optimization problem $P_1(d)$,
\begin{align}
P_1(d): \max_{q\in\Theta}~~& q\\
s.t. ~~&d(p,q)\leq \delta,
\end{align}
where $p\in\Theta$ is the empirical mean and $\delta>0$ is the exploration bonus. The computational complexity of the UCB($d$) algorithm depends on the complexity of solving the problem $P_1(d)$. The following result shows that the regret upper bound of the UCB($d$) algorithm depends on the property of the semi-distance function $d$. The detailed proof is presented in Section \ref{app:UCB}.

\begin{theorem}\label{theorem:UCB}
If $d:\Theta\times\Theta\rightarrow \mathbb{R}$ is a strong semi-distance function and is also kl-dominated, then the regret of the UCB($d$) algorithm (generated by plugging $d$ into the generic UCB algorithm) when $c=3$ satisfies:
\begin{equation}
\limsup\limits_{T\to\infty} \frac{\mathbb{E}[R^{\emph{UCB}(d)}(T)]}{\log T}\leq \sum_{a:\mu_a<\mu^*}\frac{\Delta_a}{d(\mu_a,\mu^*)}.
\end{equation}
\end{theorem}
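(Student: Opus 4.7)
The plan is to adapt the Garivier--Capp\'e analysis of kl-UCB to our setting, with $d_{kl}$ replaced by the generic strong semi-distance function $d$. The monotonicity properties (3) and (4) of $d$ carry the bulk of the argument; kl-domination is invoked only to control the concentration of the optimal arm's UCB index.

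Fix a suboptimal arm $a$, a small $\eta>0$, and a point $\gamma\in(\mu_a,\mu^*)$ with $d(\gamma,\mu^*)$ close to $d(\mu_a,\mu^*)$. Set $K_T=(1+\eta)\log(T)/d(\gamma,\mu^*)$ and decompose
\begin{equation*}
\mathbb{E}[N_a(T)]\le K_T+\sum_{t=K+1}^{T}\mathbb{P}\bigl(A_t=a,\ N_a(t-1)\ge K_T\bigr).
\end{equation*}
Since the selection rule forces $u_a(t)\ge u_{a^*}(t)$ whenever $A_t=a$, the event in the sum is contained in
\begin{equation*}
\{u_{a^*}(t)<\mu^*\}\ \cup\ \{u_a(t)\ge\mu^*,\ N_a(t-1)\ge K_T\}.
\end{equation*}
On the second piece intersected with $\{\bar{X}_a(t-1)\le\gamma\}$, property (3) applied to $\bar{X}_a(t-1)\le\gamma<\mu^*\le u_a(t)$ yields $d(\bar{X}_a(t-1),\mu^*)\le d(\bar{X}_a(t-1),u_a(t))\le(\log t+3\log\log t)/N_a(t-1)$, while property (4) applied to $\bar{X}_a(t-1)\le\gamma\le\mu^*$ yields $d(\bar{X}_a(t-1),\mu^*)\ge d(\gamma,\mu^*)$; chaining these contradicts $N_a(t-1)\ge K_T$ for $T$ large, so this subevent is empty. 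The complementary subevent $\{\bar{X}_a(t-1)>\gamma\}$ sums over $t$ to $O(1)$ by Hoeffding's inequality, since $\gamma>\mu_a$.

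For the ``optimal underconfidence'' piece, the definition of the UCB index makes $\{u_{a^*}(t)<\mu^*\}$ equivalent to $\{d(\bar{X}_{a^*}(t-1),\mu^*)>(\log t+3\log\log t)/N_{a^*}(t-1)\}$. Kl-domination gives the inclusion $\{d(\bar{X}_{a^*}(t-1),\mu^*)>\delta\}\subseteq\{d_{kl}(\bar{X}_{a^*}(t-1),\mu^*)>\delta\}$, so the Garivier--Capp\'e peeling/maximal inequality (their Theorem 10, valid for $c=3$) controls $\sum_t\mathbb{P}(u_{a^*}(t)<\mu^*)=o(\log T)$. Assembling the pieces gives $\mathbb{E}[N_a(T)]\le(1+\eta)\log T/d(\gamma,\mu^*)+o(\log T)$; sending $T\to\infty$, then $\gamma\to\mu_a^+$ and $\eta\to0^+$, and multiplying by $\Delta_a$ and summing over suboptimal arms yields the claim. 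The main obstacle is the optimal-arm concentration step: since no intrinsic tail bound is available for a generic semi-distance $d$, kl-domination combined with the Garivier--Capp\'e maximal inequality is essential; the remaining monotonicity arguments are routine but require careful bookkeeping of the ordering among $\bar{X}_a(t-1)$, $\gamma$, $\mu^*$, and $u_a(t)$ when invoking properties (3) and (4).
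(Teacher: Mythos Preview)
Your proposal is correct and follows essentially the same route as the paper's proof: both control the optimal-arm underconfidence term via kl-domination combined with the Garivier--Capp\'e maximal inequality (the paper packages this as its Theorem~\ref{theorem:dconcentration}), and both handle the suboptimal-arm term by using properties~(3) and~(4) of $d$ to force $\bar X_a$ above a threshold in $(\mu_a,\mu^*)$, then apply a concentration bound. The only cosmetic differences are that the paper first converts to a sum over pull counts~$s$ via its Lemma~\ref{lem:2ndterm} before splitting at $K_T$, defines the threshold implicitly as $r(\epsilon)$ solving $d(r(\epsilon),\mu^*)=d(\mu_a,\mu^*)/(1+\epsilon)$, and uses the kl-Chernoff bound $\exp(-s\,d_{kl}(r(\epsilon),\mu_a))$ rather than Hoeffding---none of which changes the substance of the argument.
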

Theorem \ref{theorem:UCB} is a generalization of the regret gurantee of kl-UCB proposed by \citeauthor{garivier2011kl}~\shortcite{garivier2011kl}, which is recovered by UCB($d_{kl}$). Recall that $d_{kl}$ is the KL divergence between two Bernoulli distributions. Note that Theorem \ref{theorem:UCB} holds for general distributions over the support $\Theta$. If the reward distributions are Bernoulli, the kl-UCB algorithm is asymptotically optimal in the sense that the regret of kl-UCB matches the lower bound provided by \citeauthor{lai1985asymptotically}~\shortcite{lai1985asymptotically}:
\begin{equation}
\liminf\limits_{T\to\infty} \frac{\mathbb{E}[R^\pi(T)]}{\log T}\geq \sum_{a:\mu_a<\mu^*}\frac{\Delta_a}{d_{kl}(\mu_a,\mu^*)}.
\end{equation}
However, there is no closed-form solution to the problem $P_1(d_{kl})$. Practical implementation of kl-UCB needs to solve the problem $P_1(d_{kl})$ via numerical methods with high accuracy, which means that the computational complexity is non-trivial.

In addition to the KL divergence function $d_{kl}$, we can find other kl-dominated and strong semi-distance functions such that the complexity of solving $P_1(d)$ is $O(1)$. Then we can obtain some low-complexity UCB algorithms with possibly weak regret performance. For example, consider the $l_2$ distance function,
\begin{equation}
d_{sq}(p,q)=2(p-q)^2.
\end{equation}
It is clear that $d_{sq}$ is a strong semi-distance function. By Pinsker's inequality, $d_{sq}$ is also kl-dominated. Note that UCB($d_{sq}$) recovers the traditional UCB1 algorithm \cite{auer2002finite}, which has been pointed out in \citeauthor{garivier2011kl}~\shortcite{garivier2011kl}.

Now, we introduce two alternative functions to the function $d_{sq}$: biquadratic distance function and Hellinger distance function.  The biquadratic distance function is
\begin{equation}
d_{bq}(p,q)=2(p-q)^2+\frac{4}{9}(p-q)^4.
\end{equation}
The Hellinger distance function\footnote{Actually, $d_h$ is 2 times the square of the Hellinger distance.} is
\begin{equation}
d_{h}(p,q)=\left(\sqrt{p}-\sqrt{q}\right)^2+\left(\sqrt{1-p}-\sqrt{1-q}\right)^2.
\end{equation}
As shown in Lemma \ref{lem:biquadratic} and Lemma \ref{lem:hellinger}, they are kl-dominated strong semi-distance functions and the solutions of the corresponding $P_1(d)$ have closed forms.
\begin{lemma}\label{lem:biquadratic}
The biquadratic distance function $d_{bq}$ is a kl-dominated and strong semi-distance function. The solution of $P_1(d_{bq})$ is
\begin{equation}
q^*=\min\left\{1,p+\sqrt{-\frac{9}{4}+\sqrt{\frac{81}{16}+\frac{9}{4}\delta}}\right\}.
\end{equation}
\end{lemma}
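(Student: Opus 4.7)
The plan is to verify the four defining properties of a strong semi-distance function, establish kl-domination via a refined Pinsker inequality, and then solve the one-dimensional optimization $P_1(d_{bq})$ in closed form by reducing it to a quadratic.

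First, observe that $d_{bq}(p,q) = \phi((p-q)^2)$, where $\phi(y) = 2y + (4/9)y^2$ is a strictly increasing non-negative function on $[0,\infty)$ with $\phi(0)=0$. Non-negativity and the strict-zero condition $d_{bq}(p,q)=0 \iff p=q$ are then immediate. For the two monotonicity axioms, if $p \le q \le q'$ then $0 \le q-p \le q'-p$, hence $(p-q)^2 \le (p-q')^2$, and applying the monotone $\phi$ yields $d_{bq}(p,q) \le d_{bq}(p,q')$; the dual condition for $p \le p' \le q$ follows analogously from $0 \le q-p' \le q-p$.

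For kl-domination it suffices to prove $d_{kl}(p,q) \ge 2(p-q)^2 + (4/9)(p-q)^4$ on $[0,1]^2$, a Bernoulli instance of Gilardoni's refined Pinsker inequality. My approach would be to fix $p$ and study $f(q) := d_{kl}(p,q) - d_{bq}(p,q)$. A direct computation gives $f(p) = 0$ and $f'(p) = 0$, and the second derivative works out to $f''(q) = \frac{(q-p)^2 + p(1-p)}{[q(1-q)]^2} - 4 - \frac{16}{3}(q-p)^2$, so $f''(p) = 1/[p(1-p)] - 4 \ge 0$ since $p(1-p) \le 1/4$. The main obstacle is to upgrade this local second-order analysis into a global sign statement for $f$ on $[0,1]$, because the correction term $-\frac{16}{3}(q-p)^2$ can swamp the gap $1/[q(1-q)]^2 - 4$ at intermediate $q$. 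I would handle this by exploiting the symmetry $d_{kl}(p,q) = d_{kl}(1-p,1-q)$ to restrict to $p \le 1/2$, and then bounding $f'(q)/(q-p)$ separately on $[0,p]$ and on $[p,1]$ by splitting at $q=1/2$ and tracking the sign of $q - p$.

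Finally, $P_1(d_{bq})$ reduces to a one-dimensional search. Since $d_{bq}(p,\cdot)$ is continuous with $d_{bq}(p,p)=0 \le \delta$ and is strictly increasing on $[p,1]$, the feasible set $\{q \in \Theta : d_{bq}(p,q) \le \delta\}$ is a closed subinterval of $[0,1]$ containing $p$, and its right endpoint furnishes the maximum. That endpoint is $\min\{1,q_*\}$, where $q_* > p$ uniquely solves $d_{bq}(p,q_*) = \delta$. Setting $y = (q_*-p)^2 \ge 0$ turns the equation into the quadratic $(4/9)y^2 + 2y - \delta = 0$, whose positive root is $y = -9/4 + \sqrt{81/16 + (9/4)\delta}$; taking $q_* = p + \sqrt{y}$ and capping at $1$ produces exactly the formula in the statement.
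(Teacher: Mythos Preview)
Your verification of the strong semi-distance axioms via the representation $d_{bq}(p,q)=\phi((p-q)^2)$ is clean and correct, and your closed-form solution of $P_1(d_{bq})$ by the substitution $y=(q_*-p)^2$ is exactly right; the paper merely states these parts are ``straightforward'' without spelling them out.

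The one substantive divergence is the kl-domination step. The paper does not prove $d_{kl}\ge d_{bq}$ at all: it simply cites Kullback's refinement of Pinsker's inequality (Kullback, 1967, with the 1970 correction), which for general distributions reads $D_{KL}(P\|Q)\ge 2\|P-Q\|_{TV}^2+\tfrac{4}{9}\|P-Q\|_{TV}^4$ and specializes to Bernoulli as stated. You instead attempt a self-contained proof. Your second-derivative computation is correct, but the strategy you sketch for globalizing the local inequality---splitting at $q=1/2$ and controlling the sign of $f'(q)/(q-p)$---is not carried through, and it is not obvious that it works as described: the term $-\tfrac{16}{3}(q-p)^2$ can indeed dominate the slack $[q(1-q)]^{-2}-4$ over a sizeable interval, so a naive sign analysis of $f''$ is insufficient, and one really needs an integrated argument (e.g.\ integrating $f''$ against a suitable weight, or a series expansion as in Kullback's original note). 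Since you already identify the inequality as a known refined Pinsker bound, the simplest and complete route is to cite it, as the paper does; if you want a self-contained argument, you would need to fill in the global step rather than leave it as a plan.
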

\begin{lemma}\label{lem:hellinger}
The Hellinger distance function $d_{h}$ is a kl-dominated and strong semi-distance function. The solution of $P_1(d_{h})$ is $q^*=$
\begin{align}\nonumber
\left(\left(1-\frac{\delta}{2}\right)\sqrt{p}+\sqrt{(1-p)\left(\delta-\frac{\delta^2}{4}\right)}\right)^{2\times\mathbbm{1}\{\delta<2-2\sqrt{p}\}},
\end{align}
where $\mathbbm{1}\{\cdot\}$ is the indicator function.
\end{lemma}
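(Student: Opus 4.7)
The plan is to prove the three claims in Lemma \ref{lem:hellinger} in order: that $d_h$ is a strong semi-distance function, that it is kl-dominated, and that $P_1(d_h)$ admits the stated closed-form maximizer. A convenient first step is to rewrite
\begin{equation}
d_h(p,q) = 2 - 2\sqrt{pq} - 2\sqrt{(1-p)(1-q)},
\end{equation}
which makes the structure transparent and will be used throughout.

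For the strong semi-distance property, I would verify the four axioms from the above form. Non-negativity follows immediately from the Cauchy--Schwarz inequality applied to the vectors $(\sqrt{p},\sqrt{1-p})$ and $(\sqrt{q},\sqrt{1-q})$, which gives $\sqrt{pq}+\sqrt{(1-p)(1-q)}\le 1$; the equality case of Cauchy--Schwarz yields $d_h(p,q)=0$ if and only if $p=q$. For the monotonicity axioms, I would differentiate: $\partial d_h/\partial q = -\sqrt{p/q}+\sqrt{(1-p)/(1-q)}$, which is non-negative exactly when $q\ge p$, giving axiom~3. Axiom~4 then follows from the symmetry $d_h(p,q)=d_h(q,p)$ evident in the rewritten form.

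To show kl-domination, my plan is to use the elementary inequality $\log x \le 2(\sqrt{x}-1)$ valid for all $x>0$ (check: both sides agree at $x=1$ and the derivative $1/x-1/\sqrt{x}$ has the right sign). Applying this with $x=q/p$ and $x=(1-q)/(1-p)$ and weighting by $p$ and $1-p$ respectively gives
\begin{equation}
-d_{kl}(p,q) = p\log\tfrac{q}{p}+(1-p)\log\tfrac{1-q}{1-p} \le 2\sqrt{pq}+2\sqrt{(1-p)(1-q)}-2 = -d_h(p,q),
\end{equation}
which is exactly $d_h(p,q)\le d_{kl}(p,q)$.

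For the closed-form solution of $P_1(d_h)$, since $d_h(p,\cdot)$ is strictly increasing on $[p,1]$, the maximum is attained either at $q=1$ (if the constraint is slack there) or at the boundary $d_h(p,q)=\delta$. The boundary condition reads $\sqrt{pq}+\sqrt{(1-p)(1-q)}=1-\delta/2$. Substituting $u=\sqrt{q}$ and isolating the second radical before squaring gives the quadratic $u^2-2(1-\delta/2)\sqrt{p}\,u+(1-\delta/2)^2-(1-p)=0$, whose larger root is $u=(1-\delta/2)\sqrt{p}+\sqrt{(1-p)(\delta-\delta^2/4)}$. Squaring yields the stated formula. Finally, the condition for an interior solution (i.e., $q^*<1$) is exactly $d_h(p,1)>\delta$, which simplifies to $\delta<2-2\sqrt{p}$; otherwise the unconstrained optimum $q^*=1$ applies, which is encoded by the indicator exponent in the lemma. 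I expect the main obstacle to be the algebraic bookkeeping in solving the quadratic and identifying the correct branch, together with matching the cut-off $\delta=2-2\sqrt p$ to the indicator-exponent presentation; the other two properties are fairly direct once the $2-2\sqrt{pq}-2\sqrt{(1-p)(1-q)}$ form is in hand.
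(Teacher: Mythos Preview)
Your proof is correct and follows essentially the same approach as the paper: the kl-domination step uses the same elementary inequality (your $\log x\le 2(\sqrt{x}-1)$ is the paper's $1-x\le -\log x$ applied at $x=\sqrt{q/p}$ and $x=\sqrt{(1-q)/(1-p)}$), and the optimizer is obtained as the root of the same quadratic in $\sqrt{q}$. Your write-up is in fact more complete than the paper's, which merely asserts that the semi-distance axioms are ``straightforward to check'' and that the optimum is ``the root of a quadratic function'' without carrying out the algebra or identifying the $\delta<2-2\sqrt{p}$ cutoff.
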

The proofs of Lemma \ref{lem:biquadratic} and Lemma \ref{lem:hellinger} are presented in Section \ref{app:biquadratic} and \ref{app:hellinger}. The following result follows from Theorem \ref{theorem:UCB} and Lemma \ref{lem:biquadratic}. Note that UCB($d_{bq}$) enjoys the same complexity of UCB1 and better regret guarantee than UCB1.
\begin{corollary}\label{cor:bq}
If $c=3$, then the regret of UCB($d_{bq}$) satisfies
\begin{equation}
\limsup\limits_{T\to\infty} \frac{\mathbb{E}[R^{\emph{UCB}(d_{bq})}(T)]}{\log T}\leq \sum_{a:\mu_a<\mu^*}\frac{\Delta_a}{d_{bq}(\mu_a,\mu^*)}.
\end{equation}
\end{corollary}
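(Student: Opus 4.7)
The plan is to obtain the corollary as a direct instantiation of Theorem \ref{theorem:UCB} with the particular choice $d = d_{bq}$, using Lemma \ref{lem:biquadratic} to verify that the hypotheses of the theorem are satisfied.

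Concretely, Theorem \ref{theorem:UCB} requires two properties of the input semi-distance function: that it be a strong semi-distance function on $\Theta \times \Theta$, and that it be kl-dominated. Lemma \ref{lem:biquadratic} asserts exactly these two properties for $d_{bq}(p,q) = 2(p-q)^2 + \frac{4}{9}(p-q)^4$. Thus the first (and only) step is simply to invoke Lemma \ref{lem:biquadratic} to see that $d_{bq}$ is admissible as the input to the generic UCB algorithm of Algorithm \ref{alg:ucb}, and then apply Theorem \ref{theorem:UCB} with $c=3$ to the resulting UCB($d_{bq}$) algorithm. This yields
\[
\limsup_{T\to\infty} \frac{\mathbb{E}[R^{\text{UCB}(d_{bq})}(T)]}{\log T} \leq \sum_{a:\mu_a<\mu^*} \frac{\Delta_a}{d_{bq}(\mu_a,\mu^*)},
\]
which is exactly the claim.

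There is essentially no obstacle in the proof of the corollary itself, since all substantive work has been pushed into Lemma \ref{lem:biquadratic} (checking non-negativity, the monotonicity conditions in both arguments, the identity-of-indiscernibles condition, and the pointwise inequality $d_{bq}(p,q) \leq d_{kl}(p,q)$) and into Theorem \ref{theorem:UCB} itself. The only minor remark worth noting is that in practice one should also verify that $P_1(d_{bq})$ is feasible and that its optimum is attained, which again follows from the closed-form expression given in Lemma \ref{lem:biquadratic}; but this is implicit in the statement that the algorithm is well-defined. Therefore the proof amounts to one sentence: apply Theorem \ref{theorem:UCB} to $d_{bq}$, whose required properties are guaranteed by Lemma \ref{lem:biquadratic}.
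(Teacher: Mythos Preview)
Your proposal is correct and matches the paper's own argument exactly: the paper states that Corollary \ref{cor:bq} follows directly from Theorem \ref{theorem:UCB} and Lemma \ref{lem:biquadratic}, which is precisely the instantiation you describe.
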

The following result follows from Theorem \ref{theorem:UCB} and Lemma \ref{lem:hellinger}. Note that UCB($d_{h}$) enjoys the same complexity of UCB1. In terms of regret guarantees, no one dominates the other in all cases.
\begin{corollary}\label{cor:h}
If $c=3$, then the regret of UCB($d_{h}$) satisfies
\begin{equation}
\limsup\limits_{T\to\infty} \frac{\mathbb{E}[R^{\emph{UCB}(d_{h})}(T)]}{\log T}\leq \sum_{a:\mu_a<\mu^*}\frac{\Delta_a}{d_{h}(\mu_a,\mu^*)}.
\end{equation}
\end{corollary}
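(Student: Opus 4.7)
The plan is to derive Corollary \ref{cor:h} as a direct specialization of Theorem \ref{theorem:UCB} to the case $d = d_h$. Theorem \ref{theorem:UCB} asserts that whenever a function $d : \Theta \times \Theta \to \mathbb{R}$ is simultaneously a strong semi-distance function and kl-dominated, the generic UCB algorithm instantiated with $d$ and exploration constant $c=3$ enjoys the regret upper bound $\limsup_{T\to\infty} \mathbb{E}[R^{\mathrm{UCB}(d)}(T)]/\log T \le \sum_{a:\mu_a<\mu^*} \Delta_a / d(\mu_a,\mu^*)$. Thus the entire proof reduces to verifying that $d_h$ meets both hypotheses.

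The first step is to invoke Lemma \ref{lem:hellinger}, which establishes exactly the two properties required: $d_h$ is a strong semi-distance function, and $d_h(p,q) \le d_{kl}(p,q)$ for all $p,q \in \Theta$. (Lemma \ref{lem:hellinger} also supplies the closed-form solution to $P_1(d_h)$, but that part is needed only for the complexity claim in the surrounding discussion, not for the regret bound itself.) The second step is to apply Theorem \ref{theorem:UCB} with $d=d_h$ and $c=3$, which yields the stated inequality verbatim.

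Since both ingredients are already available by assumption, there is no genuine obstacle. The only thing worth double-checking is that the hypothesis on $c$ in Theorem \ref{theorem:UCB} matches the choice $c=3$ made in the corollary, and that the algorithm UCB($d_h$) is well-defined, i.e., the feasible set $\{q\in\Theta : N_a(t-1) d_h(\bar X_a(t-1),q) \le \log t + 3\log\log t\}$ is nonempty and the supremum is attained. Nonemptiness follows from $d_h(p,p)=0$ (the first property of a strong semi-distance function), and monotonicity in the second argument (property 3) together with continuity of $d_h$ ensure that the supremum is realized at the explicit $q^*$ given in Lemma \ref{lem:hellinger}. With these routine checks in place, the corollary follows immediately.
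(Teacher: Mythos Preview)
Your proposal is correct and matches the paper's approach exactly: the paper simply states that the corollary follows from Theorem~\ref{theorem:UCB} and Lemma~\ref{lem:hellinger}, which is precisely your argument. The extra well-definedness checks you include are harmless elaborations but not needed for the stated result.
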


\subsection{The UCBoost Algorithm}
\begin{algorithm}[t]
\caption{UCBoost}
\label{alg:ucboost}
\begin{algorithmic}
\REQUIRE candidate semi-distance function set $D$
\STATE{Initialization: $t$ {\bfseries from} $1$ {\bfseries to} $K$, play arm $A_t=t$.}
\FOR{$t$ {\bfseries from} $K+1$ {\bfseries to} $T$}
\STATE{Play arm $A_t=\arg\max_{a\in\mathcal{K}}\min_{d\in D} \max\{q\in\Theta:N_a(t-1)d(\bar{X}_a(t-1),q)\leq \log (t)+c\log(\log(t))\}$}
\ENDFOR
\end{algorithmic}
\end{algorithm}
The generic UCB algorithm provides a way of generating UCB algorithms from semi-distance functions. Among the class of semi-distance functions, some have closed-form solutions of the corresponding problems $P_1(d)$. Thus, the corresponding algorithm UCB($d$) enjoys $O(1)$ computational complexity for each arm in each round. However, these UCB($d$) algorithms are weak in the sense that the regret guarantees of these UCB($d$) algorithms are worse than that of kl-UCB. Moreover, the decision maker does not know which weak UCB($d$) is better when the information $\{\mu_a\}_{a\in\mathcal{K}}$ is unknown. A natural question is: \emph{is there a boosting technique that one can use to obtain a stronger UCB algorithm from these weak UCB algorithms?} The following regret result of Algorithm \ref{alg:ucboost} offers a positive answer.
\begin{theorem}\label{theorem:UCBoost}
If $D$ is a feasible set of candidate semi-distance functions, then the regret of UCBoost($D$) when $c=3$ satisfies:
\begin{equation}\nonumber
\limsup\limits_{T\to\infty} \frac{\mathbb{E}[R^{\emph{UCBoost}(D)}(T)]}{\log T}\leq \sum_{a:\mu_a<\mu^*}\frac{\Delta_a}{\max\limits_{d\in D} d(\mu_a,\mu^*)}.
\end{equation}
\end{theorem}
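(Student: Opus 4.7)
}
My plan is to reduce UCBoost($D$) to a single instance of the generic UCB algorithm driven by the pointwise maximum $d^\star := \max_{d\in D} d$, and then invoke Theorem \ref{theorem:UCB}. Since $D$ is feasible, $d^\star$ is by definition a strong semi-distance function and is kl-dominated, so $d^\star$ meets the hypotheses of Theorem \ref{theorem:UCB} exactly. The content of the proof is therefore to show that the index used inside Algorithm \ref{alg:ucboost} coincides with the index of UCB($d^\star$).

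Concretely, fix a round $t$, an arm $a$, and write $p = \bar{X}_a(t-1)$, $\delta = (\log t + c\log\log t)/N_a(t-1)$. For each $d \in D$ let $u^d := \max\{q \in \Theta : d(p,q) \leq \delta\}$ be the per-distance confidence bound, and let $u^\star := \max\{q \in \Theta : d^\star(p,q) \leq \delta\}$ be the confidence bound that UCB($d^\star$) would compute. The index used by UCBoost($D$) for arm $a$ is $\min_{d \in D} u^d$, while UCB($d^\star$) would use $u^\star$. I will establish
\begin{equation*}
\min_{d\in D} u^d \;=\; u^\star.
\end{equation*}
First I would note that since each $d\in D$ is a candidate semi-distance, $d(p,p)\leq 0 \leq \delta$, so $p$ belongs to every feasible set, and by monotonicity in the second argument the feasible set intersected with $[p,1]$ is an interval of the form $[p, u^d]$; hence $u^d \geq p$. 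The inclusion $\{q : d^\star(p,q)\leq\delta\} = \bigcap_{d\in D}\{q : d(p,q)\leq\delta\}$ immediately gives $u^\star \leq u^d$ for every $d$, so $u^\star \leq \min_d u^d$. For the reverse direction I would take $q^\circ := \min_{d\in D} u^d$, observe $q^\circ \geq p$, and use the monotonicity property $d(p,q^\circ) \leq d(p, u^d) \leq \delta$ for every $d\in D$, which yields $d^\star(p,q^\circ) \leq \delta$ and hence $q^\circ \leq u^\star$.

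With that identity in hand, UCBoost($D$) is pointwise identical to UCB($d^\star$) on every sample path, so their regrets coincide, and Theorem \ref{theorem:UCB} applied to $d^\star$ yields
\begin{equation*}
\limsup_{T\to\infty} \frac{\mathbb{E}[R^{\mathrm{UCBoost}(D)}(T)]}{\log T} \;\leq\; \sum_{a:\mu_a<\mu^*} \frac{\Delta_a}{d^\star(\mu_a,\mu^*)} \;=\; \sum_{a:\mu_a<\mu^*} \frac{\Delta_a}{\max_{d\in D} d(\mu_a,\mu^*)}.
\end{equation*}

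The only mildly delicate step is the second inequality $d(p,q^\circ) \leq d(p, u^d) \leq \delta$: the first piece needs $q^\circ \leq u^d$ together with monotonicity of $d$ in its second argument on $[p,1]$, which is exactly property (3) of a candidate semi-distance; the second piece needs $d(p,u^d) \leq \delta$, which holds provided the "max" in the definition of $u^d$ is attained (continuity of $d(p,\cdot)$ is enough, and holds for all the concrete distances $d_{sq}, d_{bq}, d_h, d_{kl}$ considered in the paper). No other issues arise, so the whole argument is essentially a dictionary translation from "take min over $D$ of UCB indices" to "UCB index of the max-distance", followed by a direct appeal to Theorem \ref{theorem:UCB}.
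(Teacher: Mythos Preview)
Your proposal is correct and matches the paper's own proof essentially line-for-line: the paper also shows that the UCBoost($D$) index $\min_{d\in D} u^d$ equals the UCB($\max_{d\in D} d$) index by the same two-sided inclusion argument, and then invokes Theorem~\ref{theorem:UCB} using feasibility of $D$. If anything, you are slightly more careful than the paper in explicitly noting $q^\circ\ge p$ and flagging the attainment/monotonicity point.
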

\begin{proof}
(sketch) We first show that the upper confidence bound of UCBoost($D$) is equivalent to that of UCB($\max_{d\in D}d$). Then the result follows by Theorem~\ref{theorem:UCB}. See detailed proof in Section~\ref{app:UCBoost}.
\end{proof}
The UCBoost algorithm works as the following. Given a feasible set $D$ of candidate semi-distance functions, UCBoost($D$) algorithm queries the upper confidence bound of each weak UCB($d$) once and takes the minimum as the upper confidence bound. Suppose that for any $d\in D$, UCB($d$) enjoys $O(1)$ computational complexity for each arm in each round. Then, UCBoost($D$) enjoys $O(|D|)$ computational complexity for each arm in each round, where $|D|$ is the cardinality of set $D$. Theorem \ref{theorem:UCBoost} shows that UCBoost($D$) has a regret guarantee that is no worse than any UCB($d$) such that $d\in D$. Hence, the UCBoost algorithm can obtain a stronger UCB algorithm from some weak UCB algorithms. Moreover, the following remark shows that the ensemble does not deteriorate the regret performance. 
\begin{remark}
If $D_1$ and $D_2$ are feasible sets, and $D_1\subset D_2$, then the regret guarantee of UCBoost($D_2$) is no worse than that of UCBoost($D_1$).
\end{remark}
By Theorem \ref{theorem:UCBoost}, UCBoost($\{d_{bq},d_h\}$) enjoys the same complexity as UCB1, UCB($d_{bq}$) and UCB($d_h$), and has a no worse regret guarantee. However, the gap between the regret guarantee of UCBoost($\{d_{bq},d_h\}$) and that of kl-UCB may still be large since $d_{bq}$ and $d_h$ are bounded while $d_{kl}$ is unbounded. To address this problem, we are ready to introduce a candidate semi-distance function that is kl-dominated and unbounded. The candidate semi-distance function is a lower bound of the KL divergence function $d_{kl}$,
\begin{equation}
d_{lb}(p,q)=p\log(p)+(1-p)\log\left(\frac{1-p}{1-q}\right).
\end{equation}
\begin{lemma}\label{lem:lb}
The function $d_{lb}$ is a kl-dominated and candidate semi-distance function. The solution of $P_1(d_{lb})$ is 
\begin{equation}
q^*=1-(1-p)\exp\left(\frac{p\log(p)-\delta}{1-p}\right).
\end{equation}
\end{lemma}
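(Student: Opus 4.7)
The plan breaks into three independent pieces: verify the three axioms for a candidate semi-distance function, check the pointwise inequality $d_{lb} \leq d_{kl}$, and solve the one-dimensional optimization $P_1(d_{lb})$ in closed form. Since the closed-form optimizer is the only nontrivial output here, I would defer it to the end and do the structural properties first.

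For the candidate semi-distance axioms, condition~1 is immediate: $d_{lb}(p,p) = p\log p + (1-p)\log 1 = p\log p \leq 0$ on $\Theta=[0,1]$ (using $0\log 0=0$). For condition~2 I would compute $\partial_q d_{lb}(p,q) = (1-p)/(1-q) \geq 0$ on $[0,1)$, so $d_{lb}(p,\cdot)$ is non-decreasing, which is stronger than what condition~2 requires on $[p,1]$. Condition~3 is the one that needs a little work. Differentiating in the first variable gives
\begin{equation*}
\partial_p d_{lb}(p,q) = \log p - \log(1-p) + \log(1-q) = \log\!\left(\tfrac{p(1-q)}{1-p}\right),
\end{equation*}
and I need this to be $\leq 0$ for all $p \leq q$. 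This is equivalent to $p(2-q) \leq 1$; using $p \leq q$ it suffices to check $q(2-q) \leq 1$, which is just $(1-q)^2 \geq 0$. This is the one algebraic reduction in the whole proof and is the main (though still mild) obstacle.

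Kl-domination is a one-line cancellation: writing out $d_{kl}$ and $d_{lb}$ and subtracting,
\begin{equation*}
d_{kl}(p,q) - d_{lb}(p,q) = p\log(p/q) - p\log p = -p\log q \geq 0
\end{equation*}
for $q \in [0,1]$, so $d_{lb}$ is kl-dominated.

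For the closed form of $P_1(d_{lb})$, I would use that $d_{lb}(p,\cdot)$ is strictly increasing on $[0,1)$ with $d_{lb}(p,q)\uparrow\infty$ as $q\uparrow 1$ (since $(1-p)\log(1/(1-q))\to\infty$ for $p<1$). Hence for $\delta\geq 0$ the feasible set $\{q \in \Theta : d_{lb}(p,q)\leq\delta\}$ is an interval whose right endpoint $q^*$ is the unique solution of $d_{lb}(p,q^*)=\delta$. Moving $p\log p$ to the right-hand side, dividing by $1-p$, and exponentiating yields
\begin{equation*}
\frac{1-q^*}{1-p} = \exp\!\left(\frac{p\log p - \delta}{1-p}\right),
\end{equation*}
which rearranges to the claimed formula. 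The boundary cases $p=0$ and $p=1$ are handled by the usual continuous extension and agree with the formula.
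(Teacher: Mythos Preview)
Your proof is correct and follows essentially the same approach as the paper: the same partial-derivative checks for the three candidate semi-distance axioms, the same one-line cancellation $d_{kl}-d_{lb}=-p\log q\geq 0$ for kl-domination, and the same inversion of $d_{lb}(p,q^*)=\delta$ for the closed-form optimizer. Your treatment of condition~3 is slightly more elaborate than needed (from $p\leq q$ one gets $\tfrac{1-p}{1-q}\geq 1\geq p$ directly, hence $\log p - \log\tfrac{1-p}{1-q}\leq 0$), but your argument via $p(2-q)\leq q(2-q)\leq 1$ is of course also valid.
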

The proof of Lemma \ref{lem:lb} is presented in Section \ref{app:lb}. By Lemma \ref{lem:biquadratic}-\ref{lem:lb}, Proposition \ref{prop:feasible} and Theorem \ref{theorem:UCBoost}, we have the following result.
\begin{corollary}\label{cor:ucboost}
If $D=\{d_{bq},d_{h},d_{lb}\}$, then the regret of UCBoost($D$) when $c=3$ satisfies:
\begin{equation}\nonumber
\limsup\limits_{T\to\infty} \frac{\mathbb{E}[R^{\emph{UCBoost}(D)}(T)]}{\log T}\leq \sum_{a:\mu_a<\mu^*}\frac{\Delta_a}{\max\limits_{d\in D} d(\mu_a,\mu^*)}.
\end{equation}
Note that $d_{kl}(\mu_a,\mu^*)-1/e\leq\max\limits_{d\in D} d(\mu_a,\mu^*)\leq d_{kl}(\mu_a,\mu^*)$ for any $a\in\mathcal{K}$ such that $\mu_a<\mu^*$. Thus, we have that 
\begin{equation}\nonumber
\limsup\limits_{T\to\infty} \frac{\mathbb{E}[R^{\emph{UCBoost}(D)}(T)]}{\log T}\leq \sum_{a:\mu_a<\mu^*}\frac{\Delta_a}{d_{kl}(\mu_a,\mu^*)-1/e}.
\end{equation}
\end{corollary}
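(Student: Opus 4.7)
The plan is to apply Theorem \ref{theorem:UCBoost} to the set $D = \{d_{bq}, d_h, d_{lb}\}$ after verifying that it is feasible, and then to derive the $1/e$-gap bound on $d_{kl}(\mu_a,\mu^*) - \max_{d\in D} d(\mu_a,\mu^*)$ by a direct one-variable estimate that only uses $d_{lb}$.

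First I would check feasibility via Proposition \ref{prop:feasible}. Lemmas \ref{lem:biquadratic}, \ref{lem:hellinger}, and \ref{lem:lb} already establish that each of $d_{bq}$, $d_h$, $d_{lb}$ is a kl-dominated candidate semi-distance function, and that $d_{bq}$ (or equally $d_h$) is a strong semi-distance function. All three hypotheses of Proposition \ref{prop:feasible} are therefore satisfied, so $D$ is feasible and Theorem \ref{theorem:UCBoost} immediately gives the first regret inequality in the corollary. The upper estimate $\max_{d\in D} d(\mu_a,\mu^*) \leq d_{kl}(\mu_a,\mu^*)$ also follows at once since every element of $D$ is kl-dominated.

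The substantive step is the lower estimate $\max_{d\in D} d(\mu_a,\mu^*) \geq d_{kl}(\mu_a,\mu^*) - 1/e$. Here I would discard $d_{bq}$ and $d_h$ and work solely with $d_{lb}$. A direct computation gives
\begin{equation}
d_{kl}(p,q) - d_{lb}(p,q) = p\log(p/q) - p\log p = -p\log q,
\end{equation}
with the usual convention $0\log 0 = 0$ at the boundary. Since the regret bound only involves pairs with $\mu_a < \mu^*$, it suffices to maximize $-p\log q$ over the triangle $T = \{(p,q): 0 \leq p \leq q \leq 1\}$. For fixed $q\in(0,1]$ the map $p \mapsto -p\log q$ is non-decreasing in $p$ (since $-\log q \geq 0$), so the maximum over $p \in [0,q]$ is attained at $p = q$ and equals $-q\log q$. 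A one-variable calculus argument then shows $-q\log q \leq 1/e$ on $[0,1]$, with equality at $q = 1/e$. Hence $d_{lb}(p,q) \geq d_{kl}(p,q) - 1/e$ on $T$, and consequently
\begin{equation}
\max_{d\in D} d(\mu_a,\mu^*) \;\geq\; d_{lb}(\mu_a,\mu^*) \;\geq\; d_{kl}(\mu_a,\mu^*) - \tfrac{1}{e},
\end{equation}
which combined with the first regret inequality yields the final bound. The only mildly delicate point is recognizing that although the pointwise gap $-p\log q$ is unbounded on $[0,1]^2$, it becomes bounded by $1/e$ once one restricts to $p \leq q$; I do not foresee any real obstacle beyond that observation.
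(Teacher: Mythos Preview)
Your proposal is correct and matches the paper's argument essentially line for line: feasibility of $D$ via Lemmas~\ref{lem:biquadratic}--\ref{lem:lb} and Proposition~\ref{prop:feasible}, then Theorem~\ref{theorem:UCBoost} for the first bound, and finally the gap estimate $d_{kl}(p,q)-d_{lb}(p,q)=-p\log q\le -q\log q\le 1/e$ on $\{p\le q\}$. The paper states this last step more tersely (``the gap $\mu_a\log(1/\mu^*)$ is uniformly bounded by $1/e$ since $\mu_a<\mu^*$''), but the content is identical.
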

Although $d_{lb}$ is not a strong semi-distance function, the set $D=\{d_{bq},d_{h},d_{lb}\}$ is still feasible by Proposition \ref{prop:feasible}. The advantage of introducing $d_{lb}$ is that its tightness to $d_{kl}$ improves the regret guarantee of the algorithm. To be specific, the gap between $d_{lb}(\mu_a,\mu^*)$ and $d_{kl}(\mu_a,\mu^*)$ is $\mu_a\log(1/\mu^*)$, which is uniformly bounded by $1/e$ since $\mu_a<\mu^*$. Note that $e$ is the natural number. Hence, UCBoost($\{d_{bq},d_{h},d_{lb}\}$) achieves near-optimal regret performance with low complexity.

Besides the candidate semi-distance function $d_{lb}$, one can find other candidate semi-distance functions and plug them into the set $D$. For example, a shifted tangent line function of $d_{kl}$,
\begin{equation}\nonumber
d_{t}(p,q)=\frac{2q}{p+1}+p\log\left(\frac{p}{p+1}\right)+\log\left(\frac{2}{e(1+p)}\right).
\end{equation}
\begin{lemma}\label{lem:tangent}
The function $d_{t}$ is a kl-dominated and candidate semi-distance function. The solution of $P_1(d_{t})$ is $q^*=$
\begin{equation}\nonumber
\min\left\{1,\frac{p+1}{2}\left(\delta-p\log\left(\frac{p}{p+1}\right)-\log\left(\frac{2}{e(1+p)}\right)\right)\right\}.
\end{equation}
\end{lemma}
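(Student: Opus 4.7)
The plan is to recognize $d_t$ as a tangent line to the convex function $q \mapsto d_{kl}(p,q)$, shifted downward by a nonnegative constant, and read off all four claims from that structure. Since the $q$-slope of $d_t$ is $2/(p+1)$ and $\partial_q d_{kl}(p,q) = -p/q + (1-p)/(1-q)$, setting these equal and simplifying gives the tangent point $q_0 = (p+1)/2$. Evaluating $d_{kl}$ there yields $d_{kl}(p, q_0) = p\log(p/(p+1)) + \log 2$, so the tangent line at $q_0$ is $L(p,q) = \frac{2q}{p+1} + p\log(p/(p+1)) + \log 2 - 1$, and a direct subtraction shows $d_t(p,q) = L(p,q) - \log(1+p)$. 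Because $d_{kl}(p,\cdot)$ is convex, $L \le d_{kl}$ pointwise, and $\log(1+p) \ge 0$ on $\Theta$, so $d_t \le L \le d_{kl}$, which proves kl-domination.

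Next I would verify the three candidate semi-distance conditions. For condition (i), setting $q=p$ gives $d_t(p,p) = L(p,p) - \log(1+p) \le d_{kl}(p,p) - \log(1+p) = -\log(1+p) \le 0$. Condition (ii), monotonicity in $q$, is immediate because $d_t$ is linear in $q$ with positive slope $2/(p+1)$. For condition (iii), differentiating in $p$ (and observing that the three terms involving $+1$, $-p/(p+1)$, and $-1/(1+p)$ combine to zero), I obtain
\[
\partial_p d_t(p,q) = -\frac{2q}{(p+1)^2} + \log\!\left(\frac{p}{p+1}\right),
\]
and both summands are nonpositive on $\Theta$, so $d_t$ is nonincreasing in $p$ everywhere, and in particular on $\{p \le p' \le q\}$.

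Finally, for the closed form of $P_1(d_t)$: since $d_t$ is linear and strictly increasing in $q$, the constraint $d_t(p,q) \le \delta$ is equivalent to $q \le \frac{p+1}{2}\bigl(\delta - p\log(p/(p+1)) - \log(2/(e(1+p)))\bigr)$, and intersecting with $q \in [0,1]$ gives exactly the stated $q^*$. The main obstacle, and really the only piece of genuine insight in the argument, is locating the tangent point $q_0 = (p+1)/2$ and the shift $\log(1+p)$; once those are in hand, every remaining claim reduces either to a linear-in-$q$ bookkeeping computation or to a one-line calculus bound.
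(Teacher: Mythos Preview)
Your proof is correct and follows essentially the same approach as the paper: identify $d_t(p,\cdot)+\log(1+p)$ as the tangent line to $d_{kl}(p,\cdot)$ at $q_0=(p+1)/2$, invoke convexity of $d_{kl}(p,\cdot)$ for kl-domination, verify the three candidate semi-distance conditions by direct differentiation, and solve the linear-in-$q$ constraint for the closed form. Your handling of condition~(i) via $d_t(p,p)=L(p,p)-\log(1+p)\le d_{kl}(p,p)-\log(1+p)=-\log(1+p)\le 0$ is slightly slicker than the paper's (which instead shows $d_t(p,p)$ is decreasing in $p$ with limit $\log(2/e)<0$ as $p\to 0$), but the overall structure is the same.
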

The proof is presented in Section \ref{app:tangent}.

\section{The UCBoost($\epsilon$) Algorithm}
In this section, we show an approximation of the KL divergence function $d_{kl}$. Then we design a UCBoost algorithm based on the approximation, which enjoys low complexity and regret guarantee that is arbitrarily close to that of kl-UCB.

Recall that $p\in\Theta$ and $\delta>0$ are the inputs of the problem $P_1(d_{kl})$. Given any approximation error $\epsilon>0$, let $\eta=\frac{\epsilon}{1+\epsilon}$ and $q_k=1-(1-\eta)^k\in\Theta$ for any ${k\geq0}$. Then there exits $\tau_1(p)=\left\lceil\frac{\log(1-p)}{\log(1-\eta)}\right\rceil$ such that $p\leq q_k$ if and only if $k\geq \tau_1(p)$. There exists $\tau_2(p)=\left\lceil\frac{\log(1-\exp(-\epsilon/p))}{\log(1-\eta)}\right\rceil$ such that $q_k\geq \exp(-\epsilon/p)$ if and only if $k\geq\tau_2(p)$. For each $\tau_1(p)\leq k\leq \tau_2(p)$, we construct a step function,
\begin{equation}
d_s^k(p,q)=d_{kl}(p,q_k)\mathbbm{1}\{q>q_k\}.
\end{equation}
The following result shows that the step function $d_s^k(p,q)$ is a kl-dominated and semi-distance function. The proof is presented in Section \ref{app:step}.
\begin{lemma}\label{lem:step}
For each $k\geq \tau_1(p)$, the step function $d_s^k(p,q)$ is a kl-dominated and semi-distance function. The solution of $P_1(d_s^k)$ is 
\begin{equation}
q^*=q_k^{\mathbbm{1}\{\delta<d_{kl}(p,q_k)\}}.
\end{equation}
\end{lemma}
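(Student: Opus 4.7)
The plan is to verify the four semi-distance properties and the kl-domination property for $d_s^k$ under the standing hypothesis $k \geq \tau_1(p)$, which is equivalent to $p \leq q_k$, and then to solve $P_1(d_s^k)$ by a two-case split on whether $\delta$ exceeds $d_{kl}(p,q_k)$. First I would check the semi-distance axioms. Non-negativity is immediate since $d_{kl}(p,q_k)\geq 0$ and the indicator is non-negative. The identity $d_s^k(p,p)=0$ follows because $k\geq\tau_1(p)$ gives $p\leq q_k$, so $\mathbbm{1}\{p>q_k\}=0$. Monotonicity in the second argument is evident because the step function is $0$ for $q\leq q_k$ and equals the non-negative constant $d_{kl}(p,q_k)$ for $q>q_k$, hence non-decreasing in $q$. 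For monotonicity in the first argument, I would invoke the fact that $d_{kl}(\cdot,q_k)$ is non-increasing on $[0,q_k]$: for $p\leq p'\leq q_k$ the indicator factor coincides on both sides and $d_{kl}(p,q_k)\geq d_{kl}(p',q_k)$, which is the relevant regime under the standing assumption $k\geq\tau_1(p)$.

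Next I would establish the kl-dominated property $d_s^k(p,q)\leq d_{kl}(p,q)$ by a direct split. If $q\leq q_k$, the left-hand side is $0$ while $d_{kl}(p,q)\geq 0$, so the inequality is trivial. If $q>q_k$, then $p\leq q_k<q$ and the map $q\mapsto d_{kl}(p,q)$ is increasing on $[p,1)$, so $d_{kl}(p,q_k)\leq d_{kl}(p,q)$, as required.

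Finally, to solve $P_1(d_s^k)$, observe that the constraint $d_{kl}(p,q_k)\mathbbm{1}\{q>q_k\}\leq \delta$ bifurcates cleanly. If $\delta \geq d_{kl}(p,q_k)$, the constraint holds for every $q\in\Theta$, hence the maximizer is $q^*=1=q_k^{\,0}$. If $\delta<d_{kl}(p,q_k)$, the indicator must vanish, which forces $q\leq q_k$; the maximum is then attained at $q^*=q_k=q_k^{\,1}$. Combining the two cases yields the compact formula $q^*=q_k^{\mathbbm{1}\{\delta<d_{kl}(p,q_k)\}}$.

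The only subtle step is the monotonicity in the first argument: since $d_{kl}(\cdot,q_k)$ is not monotone on all of $[0,1]$, one must use the standing assumption $p\leq q_k$ (equivalently $k\geq\tau_1(p)$) to stay in the region where $d_{kl}(\cdot,q_k)$ is decreasing. Everything else is a routine case check using elementary monotonicity properties of the binary KL divergence, so I expect no further difficulty.
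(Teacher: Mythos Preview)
Your proposal is correct and follows essentially the same approach as the paper: both verify the four semi-distance axioms (with the paper using the explicit derivative $\partial d_{kl}(p,q_k)/\partial p=\log\!\big(\tfrac{p(1-q_k)}{q_k(1-p)}\big)\leq 0$ for $p\leq q_k$ where you invoke monotonicity of $d_{kl}(\cdot,q_k)$ on $[0,q_k]$), then establish kl-domination by the same case split, and finally solve $P_1(d_s^k)$ by the same threshold comparison---your treatment of the last two points is in fact more explicit than the paper's, which merely asserts the inequality and calls the solution ``the root of a simple function.''
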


Let $D(p)=\{d_{sq},d_{lb},d_s^{\tau_1(p)},d_s^{\tau_1(p)+1},\ldots,d_s^{\tau_2(p)}\}$. Then the following result shows that the envelope $\max\limits_{d\in D(p)}d$ is an $\epsilon$-approximation of the function $d_{kl}$ on the interval $[p,1]$. The proof is presented in Section \ref{app:stepapprox}.

\begin{proposition}\label{prop:stepapprox}
Given $p\in\Theta$ and $\epsilon >0$. Let $D(p)=\{d_{sq},d_{lb},d_s^{\tau_1(p)},d_s^{\tau_1(p)+1},\ldots,d_s^{\tau_2(p)}\}$.
For any $q\in[p,1]$, we have that
\begin{equation}
0\leq d_{kl}(p,q)-\max\limits_{d\in D(p)}d(p,q)\leq \epsilon.
\end{equation}
\end{proposition}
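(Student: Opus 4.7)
The plan is to verify the two inequalities separately. The lower bound $\max_{d \in D(p)} d(p,q) \leq d_{kl}(p,q)$ is immediate, since every $d \in D(p)$ is kl-dominated: by Pinsker's inequality for $d_{sq}$, by Lemma \ref{lem:lb} for $d_{lb}$, and by Lemma \ref{lem:step} for each step function $d_s^k$. The substance of the proposition lies in the upper bound $d_{kl}(p,q) - \max_{d \in D(p)} d(p,q) \leq \epsilon$, which I will establish by partitioning $[p,1]$ into three regions tied to the grid $q_k = 1 - (1-\eta)^k$: a left piece $[p, q_{\tau_1(p)}]$ on which no step function is active and a crude bound on $d_{kl}$ alone suffices, a middle piece $(q_{\tau_1(p)}, q_{\tau_2(p)}]$ handled by the step functions $d_s^k$, and a right piece $(q_{\tau_2(p)}, 1]$ handled by $d_{lb}$.

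In the middle piece, for $q \in (q_k, q_{k+1}]$ with $\tau_1(p) \leq k < \tau_2(p)$ the envelope satisfies $\max_{d \in D(p)} d(p,q) \geq d_s^k(p,q) = d_{kl}(p, q_k)$, so monotonicity of $d_{kl}(p,\cdot)$ on $[p,1]$ bounds the approximation error by $d_{kl}(p, q_{k+1}) - d_{kl}(p, q_k)$. Expanding this difference gives
\begin{equation*}
d_{kl}(p, q_{k+1}) - d_{kl}(p, q_k) = -p \log\frac{q_{k+1}}{q_k} - (1-p)\log\frac{1-q_{k+1}}{1-q_k};
\end{equation*}
the first term is non-positive since $q_{k+1} > q_k$, and the key geometric identity $(1-q_{k+1})/(1-q_k) = 1-\eta = 1/(1+\epsilon)$ collapses the second term to $(1-p)\log(1+\epsilon) \leq \epsilon$. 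The left piece reduces to the same computation: on $[p, q_{\tau_1(p)}]$ all step functions vanish, so I use only $\max_d d(p,q) \geq 0$ together with the observation that the defining inequality of $\tau_1(p)$ forces $(1-p)/(1-q_{\tau_1(p)}) \leq 1/(1-\eta) = 1+\epsilon$, which, combined with monotonicity, yields $d_{kl}(p,q) \leq d_{kl}(p, q_{\tau_1(p)}) \leq (1-p)\log(1+\epsilon) \leq \epsilon$.

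The right piece is handled by $d_{lb}$. A direct subtraction gives the clean identity $d_{kl}(p,q) - d_{lb}(p,q) = -p\log q$, which is at most $\epsilon$ whenever $q \geq \exp(-\epsilon/p)$, and the choice of $\tau_2(p)$ was made precisely so that $q_{\tau_2(p)} \geq \exp(-\epsilon/p)$. None of the individual inequalities is deep; the main challenge is orchestration—choosing $\tau_1(p)$ and $\tau_2(p)$ so that the three regions interlock cleanly and the dominant contributor of the envelope in each region is exactly the one whose approximation error has already been bounded. Collecting the three cases gives the uniform error bound $\epsilon$ on all of $[p,1]$.
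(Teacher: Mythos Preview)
Your proof is correct and follows the same three-region decomposition as the paper: a left piece $[p,q_{\tau_1(p)}]$ handled by the trivial bound $\max_d d \geq 0$, a middle piece covered by the step functions $d_s^k$, and a right piece handled by $d_{lb}$ via the identity $d_{kl}(p,q)-d_{lb}(p,q)=-p\log q$. The one genuine difference is in how the key increment $d_{kl}(p,q_{k+1})-d_{kl}(p,q_k)$ is bounded. The paper invokes convexity of $d_{kl}(p,\cdot)$ together with the gradient estimate $L_{k+1}\leq 1/(1-q_{k+1})$ to get $(q_{k+1}-q_k)/(1-q_{k+1})=\eta/(1-\eta)=\epsilon$; you instead expand the difference algebraically and use the geometric identity $(1-q_{k+1})/(1-q_k)=1-\eta=1/(1+\epsilon)$ to obtain $(1-p)\log(1+\epsilon)\leq\epsilon$, and likewise on the left piece you use $(1-p)/(1-q_{\tau_1(p)})\leq 1/(1-\eta)$ directly rather than the paper's bound $q_{\tau_1(p)}-p\leq q_{\tau_1(p)}-q_{\tau_1(p)-1}$. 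Your route is slightly more elementary since it avoids convexity entirely, and it yields the marginally sharper constant $(1-p)\log(1+\epsilon)$ in place of $\epsilon$; the paper's route, on the other hand, would generalize more readily to divergences whose increments are not available in closed form.
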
 
Lemma~\ref{lem:step} and Proposition~\ref{prop:stepapprox} allow us to bound the regret of the UCBoost algorithm based on the approximation, which is shown in the following result.

\begin{theorem}\label{theorem:thmstepapprox}
Given any $\epsilon\in(0,1)$, let $D=\{d_{sq},d_{lb}\}\cup\{d_s^k:k\geq 0\}$. The regret of UCBoost($D$) with $c=3$  that restricts $D$ to $D(p)$ for each arm with empirical mean $p$, satisfies  
\begin{equation}
\limsup\limits_{T\to\infty} \frac{\mathbb{E}[R^{\emph{UCBoost}(D)}(T)]}{\log T}\leq \sum_{a:\mu_a<\mu^*}\frac{\Delta_a} {d_{kl}(\mu_a,\mu^*)-\epsilon}.
\end{equation}
The computational complexity for each arm per round is $O(\log(\frac{1}{\epsilon}))$.
\end{theorem}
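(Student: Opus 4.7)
The plan is to analyze UCBoost$(D(p))$ by sandwiching its upper confidence bound between two kl-UCB-type indices and then adapting the kl-UCB regret proof, while the complexity bound follows from binary search over the step functions.

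\textbf{Reformulation and sandwich.} First I would note that since every $d\in D(p)$ is non-decreasing in its second argument on $[p,1]$, the UCBoost index $u_a(t)=\min_{d\in D(p)}\max\{q\in\Theta:N_a(t-1)d(\bar X_a(t-1),q)\le\delta_t\}$, with $\delta_t=\log t+c\log\log t$, equals $\max\{q\in\Theta:\tilde d(\bar X_a(t-1),q)\le\delta_t/N_a(t-1)\}$, where $\tilde d(p,q):=\max_{d\in D(p)}d(p,q)$. Proposition~\ref{prop:stepapprox} yields $d_{kl}(p,q)-\epsilon\le\tilde d(p,q)\le d_{kl}(p,q)$ for all $q\in[p,1]$. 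Because $d(p,p)\le 0\le\delta_t/N_a(t-1)$ for every $d$, one also has $u_a(t)\ge\bar X_a(t-1)$, so only $q\ge p$ matters.

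\textbf{Two comparisons with kl-UCB.} The sandwich immediately yields (a) $u_a(t)\ge u_a^{\mathrm{kl}}(t):=\max\{q:d_{kl}(\bar X_a(t-1),q)\le\delta_t/N_a(t-1)\}$, and (b) $u_a(t)\le\max\{q:d_{kl}(\bar X_a(t-1),q)\le\delta_t/N_a(t-1)+\epsilon\}$. Inequality (a) says UCBoost's index on the optimal arm is at least the kl-UCB one, so the usual concentration event $\{u_{a^*}(t)<\mu^*\}$ is a subset of its kl-UCB counterpart and can be controlled verbatim via the self-normalized deviation bound of \citeauthor{garivier2011kl}. Inequality (b) says UCBoost's index on a suboptimal arm is dominated by a kl-UCB index whose effective exploration bonus is inflated additively by $\epsilon$.

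\textbf{Regret.} I would then mimic the proof of Theorem~\ref{theorem:UCB}, decomposing $\mathbb E[N_a(T)]$ for a suboptimal $a$ using the threshold $N_0:=(\log T+c\log\log T)/(d_{kl}(\mu_a,\mu^*)-\epsilon-\gamma)$ for arbitrary small $\gamma>0$. Inequality (b) shows that whenever $N_a(t-1)>N_0$ and $\bar X_a(t-1)$ is within $\gamma$ of $\mu_a$, the strict inequality $u_a(t)<\mu^*$ holds, so $a$ cannot be pulled against an optimal arm whose UCB concentrates (via (a)). Standard concentration of $\bar X_a$ around $\mu_a$ absorbs the remaining mass, and $\gamma\downarrow 0$ delivers the target bound $\log T/(d_{kl}(\mu_a,\mu^*)-\epsilon)$.

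\textbf{Complexity and main obstacle.} For each arm, $d_{sq}$ and $d_{lb}$ deliver their UCBs in $O(1)$. For the step family, Lemma~\ref{lem:step} says that the UCB from $d_s^k$ is $q_k$ if $d_{kl}(p,q_k)>\delta_t/N_a(t-1)$ and $1$ otherwise. Since $d_{kl}(p,q_k)$ is monotone in $k$ for $k\ge\tau_1(p)$, binary search locates the critical $k^*\in[\tau_1(p),\tau_2(p)]$ in $O(\log(\tau_2(p)-\tau_1(p)))$ evaluations of $d_{kl}$. Using $\log(1-\eta)=-\log(1+\epsilon)\asymp-\epsilon$ and $\log(1-\exp(-\epsilon/p))\asymp\log(\epsilon/p)$ bounds $\tau_2(p)-\tau_1(p)=O(\epsilon^{-1}\log(1/\epsilon))$, so the binary search costs $O(\log(1/\epsilon))$ as claimed. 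The main obstacle is the regret step: propagating the additive $\epsilon$ through the peeling argument so that it lands cleanly in the denominator without corrupting the lower-order terms requires a careful choice of $N_0$ and of the events on which concentration is invoked.
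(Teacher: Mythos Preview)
Your proposal is correct and reaches the same conclusions, but it takes a somewhat different route than the paper. For the regret bound, the paper proceeds entirely by black-box composition: it invokes Theorem~\ref{theorem:UCBoost} to identify UCBoost$(D)$ with UCB$(\max_{d\in D}d)$, applies Theorem~\ref{theorem:UCB} to that single function (using Proposition~\ref{prop:feasible} and Lemmas~\ref{lem:lb},~\ref{lem:step} to check feasibility), and then substitutes the lower bound $\max_{d\in D}d(\mu_a,\mu^*)\ge d_{kl}(\mu_a,\mu^*)-\epsilon$ from Proposition~\ref{prop:stepapprox} directly into the denominator. You instead sandwich the UCBoost index between two kl-UCB-type indices and rerun the kl-UCB regret decomposition with the inflated bonus $\delta_t/N_a(t-1)+\epsilon$; this is less modular but has the advantage of handling the dependence of $D(p)$ on the empirical mean more transparently, a point the paper's one-line reduction glosses over. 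For the complexity bound, both arguments rely on binary search over $\{d_s^k\}_{\tau_1(p)\le k\le\tau_2(p)}$; the paper controls $\tau_2(p)$ via the clean inequality $e^{-x}+e^{-1/x}\le 1$ (yielding $-\log(1-e^{-\epsilon/p})\le p/\epsilon$ and hence $\tau_2-\tau_1+1\le 2p/\epsilon^2$), whereas you use the asymptotic $\log(1-e^{-\epsilon/p})\asymp\log(\epsilon/p)$ to get $\tau_2-\tau_1=O(\epsilon^{-1}\log(1/\epsilon))$. Either estimate suffices after taking a logarithm, though your asymptotic is only valid for $\epsilon/p$ small and you should note that the regime $p\lesssim\epsilon$ is harmless since $\tau_2(p)$ is then $O(1)$.
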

The proof of Theorem \ref{theorem:thmstepapprox} is presented in Section \ref{app:thmstepapprox}. We denote the algorithm described in Theorem \ref{theorem:thmstepapprox} as UCBoost($\epsilon$) for shorthand. The UCBoost($\epsilon$) algorithm offers an efficient way to trade regret performance with computational complexity.

\begin{remark}
The practical implementation of kl-UCB needs numerical methods for searching the $q^*$ of $P_1(d_{kl})$ with some sufficiently small error $\epsilon$. For example, the bisection search can find a solution $q^\prime$ such that $|q\prime-q^*|\leq \epsilon$ with $O(\log(\frac{1}{\epsilon}))$ iterations. However, there is no regret guarantee of the implemented kl-UCB when $\epsilon$ is arbitrary. Our UCBoost($\epsilon$) algorithm fills this gap and bridges computational complexity to regret performance. Moreover, the empirical performance of the implemented kl-UCB when $\epsilon$ is relatively large, becomes unreliable. This is because the gap $|d_{kl}(p,q^*)-d_{kl}(p,q^\prime)|$ is unbounded even though $|q\prime-q^*|$ is bounded. On the contrary, our approximation method guarantees bounded KL divergence gap, thus allowing reliable regret performance.
\end{remark}

\section{Numerical Results}
\begin{figure*}[t]
\centering
\begin{subfigure}[b]{0.3\textwidth}
    \includegraphics[width=\textwidth]{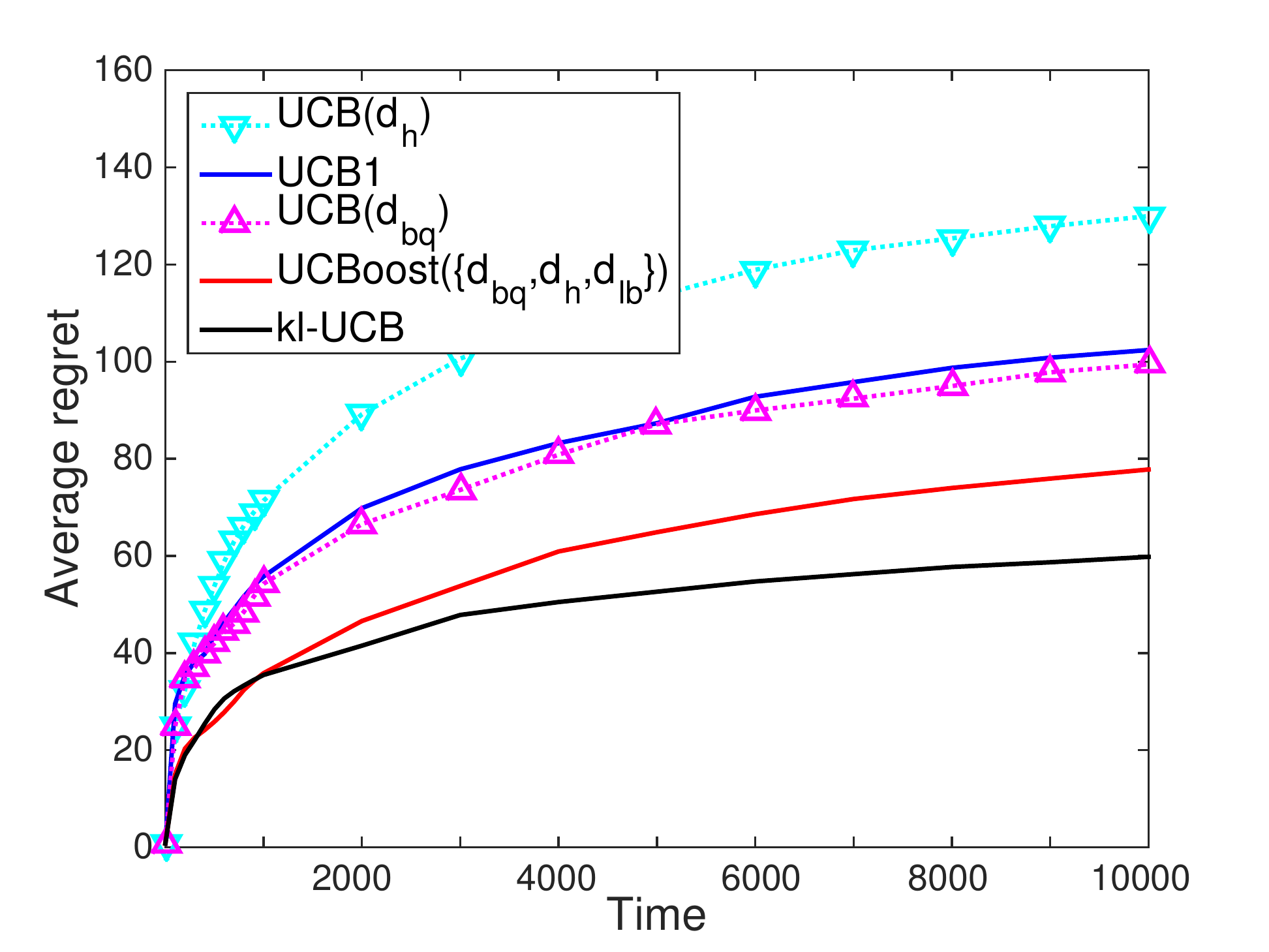}
\end{subfigure}
\begin{subfigure}[b]{0.3\textwidth}
    \includegraphics[width=\textwidth]{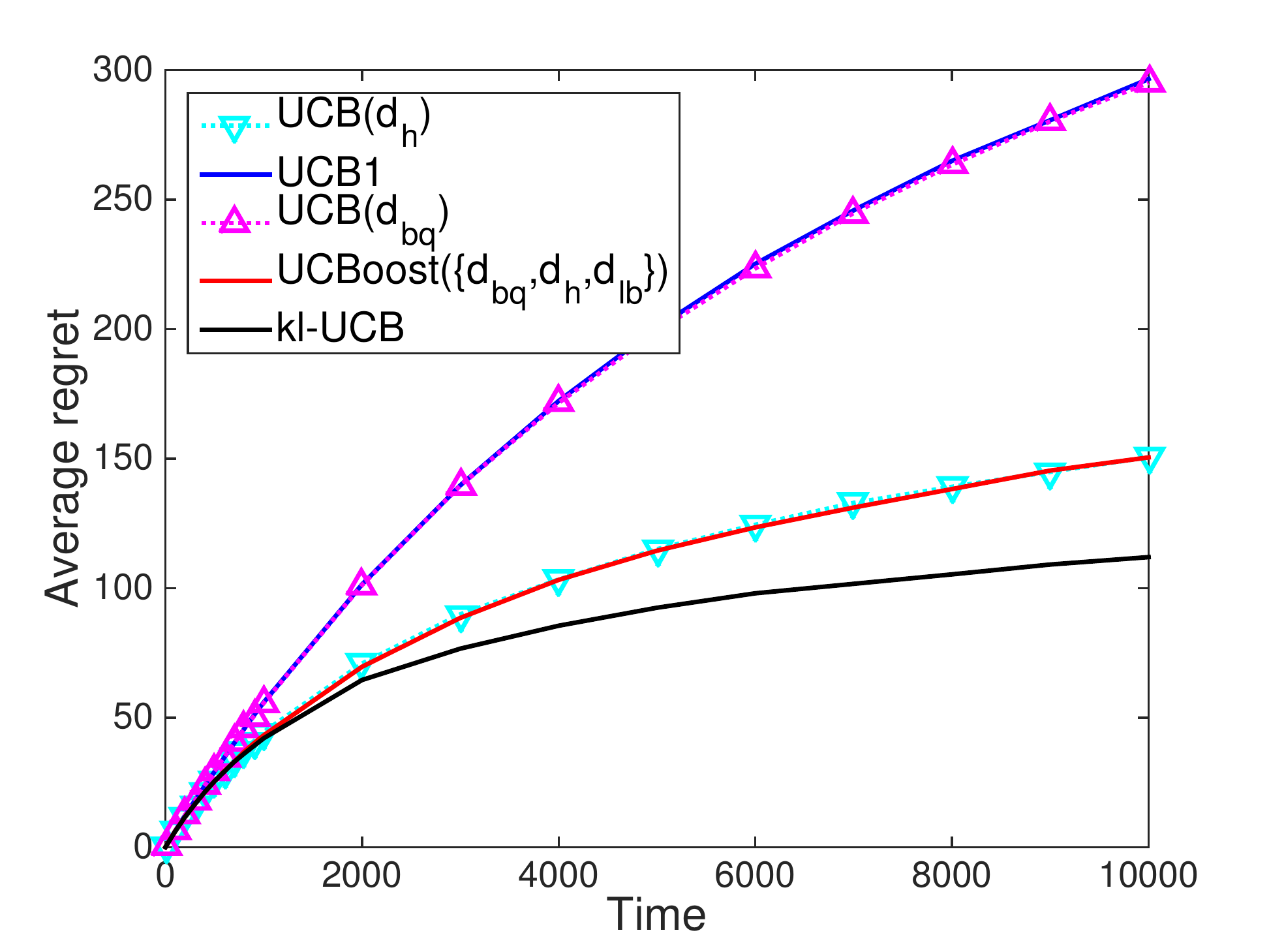}
\end{subfigure}
\begin{subfigure}[b]{0.3\textwidth}
    \includegraphics[width=\textwidth]{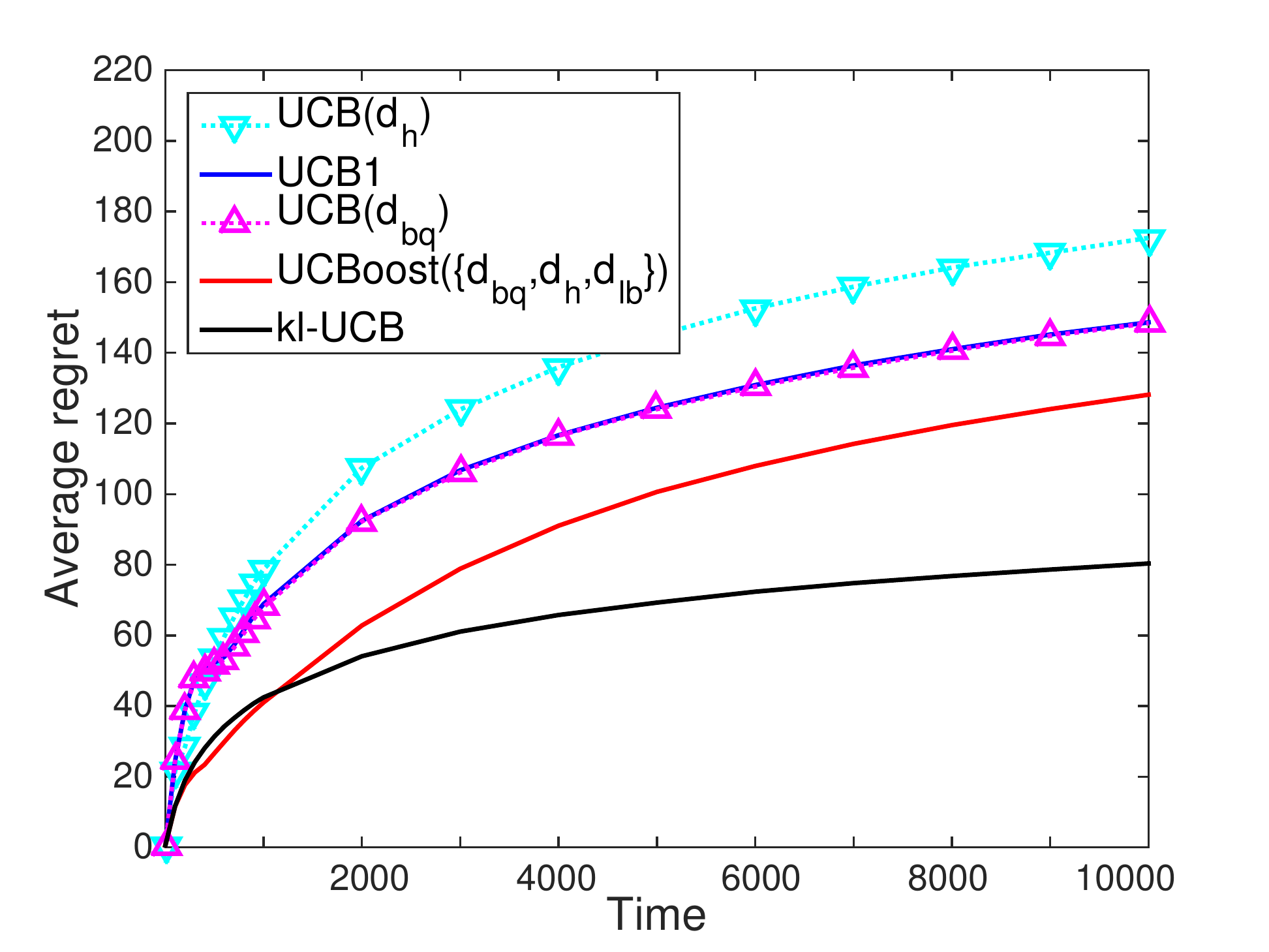}
\end{subfigure}

\begin{subfigure}[b]{0.3\textwidth}
    \includegraphics[width=\textwidth]{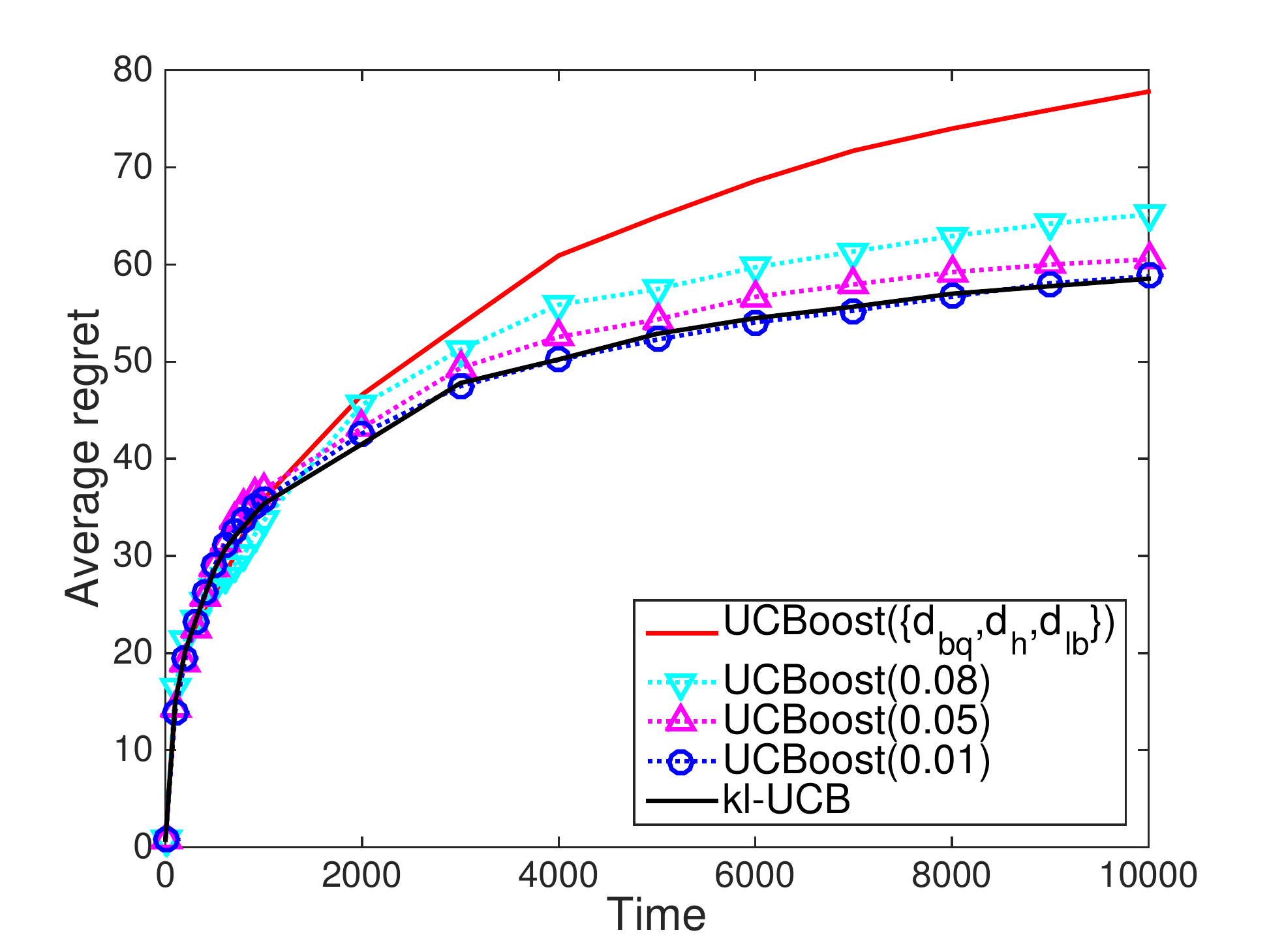}
     \caption{Bernoulli scenario 1}
     \label{fig:epsilonucboost}
\end{subfigure}
\begin{subfigure}[b]{0.3\textwidth}
    \includegraphics[width=\textwidth]{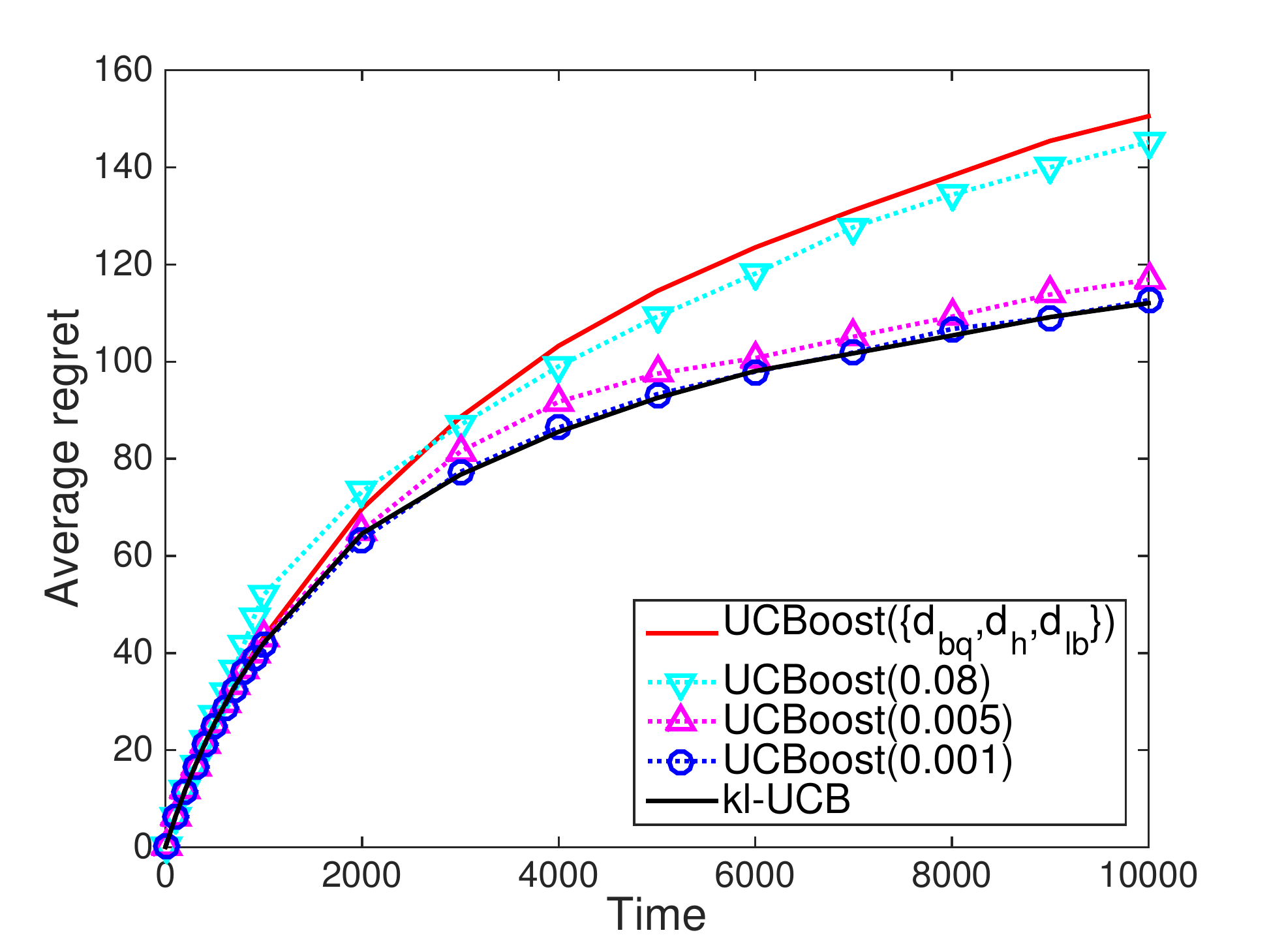}
     \caption{Bernoulli scenario 2}
     \label{fig:lowepsilonucboost}
\end{subfigure}
\begin{subfigure}[b]{0.3\textwidth}
    \includegraphics[width=\textwidth]{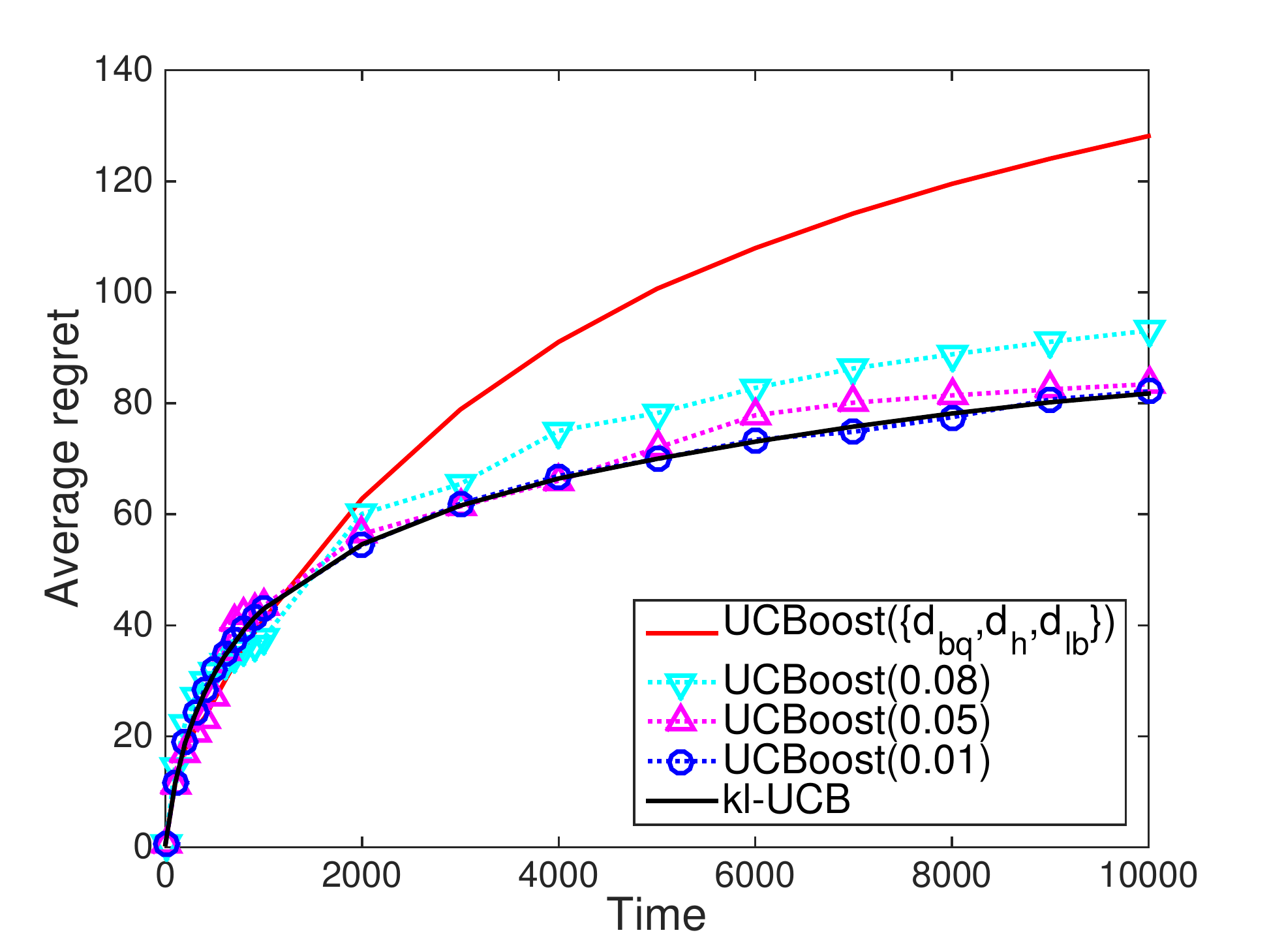}
     \caption{Beta scenario}
     \label{fig:betaepsilonucboost}
\end{subfigure}
\caption{Regret of the various algorithms as a function of time in three scenarios.}
\label{fig:numerical}
\end{figure*}

\begin{table*}[t]
\centering
\caption{Average computational time for each arm per round of various algorithms.}
\label{table:complexity}
\begin{center}
\begin{tabular}{|c|c|c|c|c|c|c|}
\hline
\multirow{2}{*}{Scenario} & \multirow{2}{*}{kl-UCB} & UCBoost($\epsilon$) &UCBoost($\epsilon$)&UCBoost($\epsilon$) &\multirow{2}{*}{UCBoost($\{d_{bq},d_h,d_{lb}\}$)} &\multirow{2}{*}{UCB1}\\
 &  & $\epsilon=0.01 (0.001)$ & $\epsilon=0.05 (0.005)$ & $\epsilon=0.08$&  & \\
 \hline
 Bernoulli 1 & $933\mu s$ & $7.67\mu s$ &$6.67\mu s$&$5.78\mu s$&$1.67\mu s$&$0.31\mu s$\\
 \hline
 Bernoulli 2 & $986\mu s$ & $8.76\mu s$ &$7.96\mu s$&$6.27\mu s$&$1.60\mu s$&$0.30\mu s$\\
 \hline
 Beta & $907\mu s$ & $8.33\mu s$ &$6.89\mu s$&$5.89\mu s$&$2.01\mu s$&$0.33\mu s$\\
 \hline
\end{tabular}
\end{center}
\end{table*}

The results of the previous sections show that UCBoost offers a framework to trade-off the complexity and regret performance. In this section, we support these results by numerical experiments that compare our algorithms with the baseline algorithms in three scenarios. All the algorithms are run exactly as described in the previous sections. For implementation of kl-UCB, we use the py/maBandits package developed by \citeauthor{mabandit}~\shortcite{mabandit}. Note that we choose $c=0$ in the experiments as suggested by \citeauthor{garivier2011kl}~\shortcite{garivier2011kl}. All the results are obtained from $10,000$ independent runs of the algorithms. As shorthand, we denote our approximation-based UCBoost algorithm as UCBoost($\epsilon$) for any $\epsilon>0$. 
\subsection{Bernoulli Scenario 1}
We first consider the basic scenario with Bernoulli rewards. There are $K=9$ arms with expectations $\mu_i=i/10$ for each arm $i$. The average regret of various algorithms as a function of time is shown in Figure~\ref{fig:epsilonucboost}. 

First, UCB($d_{bq}$) performs as expected, though it is slightly better than UCB1. However, UCB($d_h$) performs worse than UCB1 in this scenario. This is reasonable since the regret guarantee of UCB($d_h$) under this scenario is worse than that of UCB1. 

Second, the performance of UCBoost($\{d_{bq},d_h,d_{lb}\}$) is between that of UCB1 and kl-UCB. UCBoost($\{d_{bq},d_h,d_{lb}\}$) outperforms UCB($d_{h}$) and UCB($d_{bq}$) as expected, which demonstrates the power of boosting. The candidate semi-distance function $d_{lb}$ plays an important role in improving the regret performance. 

Third, UCBoost($\epsilon$) algorithm fills the gap between UCBoost($\{d_{bq},d_h,d_{lb}\}$) and kl-UCB with moderate $\epsilon$. As $\epsilon$ decreases, UCBoost($\epsilon$) approaches to kl-UCB, which verifies our result in Theorem \ref{theorem:thmstepapprox}. When $\epsilon=0.01$, UCBoost($\epsilon$) matches the regret of kl-UCB. Note that the numerical method for kl-UCB, such as Newton method and bisection search, usually needs the accuracy to be at least $10^{-5}$. Otherwise, the regret performance of kl-UCB becomes unreliable. Compared to kl-UCB, UCBoost($\epsilon$) can achieve the same regret performance with less complexity by efficiently bounding the KL divergence gap. 
\subsection{Bernoulli Scenario 2}
We consider a more difficult scenario of Bernoulli rewards, where the expectations are very low. This scenario has been considered by \citeauthor{garivier2011kl}~\shortcite{garivier2011kl} to model the situations like online recommendations and online advertising. For example, in Yahoo! Front Page Today experiments \cite{li2010contextual}, the rewards are the click through rates of the news and articles. The rewards are binary and the average click through rates are very low. In this scenario, we consider ten arms, with $\mu_1=\mu_2=\mu_3=0.01$, $\mu_4=\mu_5=\mu_6=0.02$, $\mu_7=\mu_8=\mu_9=0.05$ and $\mu_{10}=0.1$. The average regret of various algorithms as a function of time is shown in Figure \ref{fig:lowepsilonucboost}. 

First, the performance of UCB($d_{bq}$) is the same as UCB1. This is because the term $\Delta_a^4$ vanishes for all suboptimal arms in this scenario. So the improvement of UCB($d_{bq}$) over UCB1 vanishes as well. However, UCB($d_h$) outperforms UCB1 in this scenario. The reason is that the Hellinger distance between $\mu_a$ and $\mu^*$ is much larger than the $l_2$ distance in this scenario. So UCB($d_h$) enjoys better regret performance than UCB1 in this scenario. 

Second, UCBoost($\{d_{bq},d_h,d_{lb}\}$) performs as expected and is between UCB1 and kl-UCB. Although the gap between UCB1 and kl-UCB becomes larger when compared to Bernoulli scenario 1, the gap between UCBoost($\{d_{bq},d_h,d_{lb}\}$) and kl-UCB remains. This verifies our result in Corollary \ref{cor:ucboost} that the gap between the constants in the regret guarantees is bounded by $1/e$. This result also demonstrates the power of boosting in that UCBoost($\{d_{bq},d_h,d_{lb}\}$) performs no worse than UCB($d_h$) and UCB($d_{bq}$) in all cases.

Third, UCBoost($\epsilon$) algorithm fills the gap between UCBoost($\{d_{bq},d_h,d_{lb}\}$) and kl-UCB, which is consistent with the results in Bernoulli scenario 1. The regret of UCBoost($\epsilon$) matches with that of kl-UCB when $\epsilon=0.001$. Compared to the results in Bernoulli scenario 1, we need more accurate approximation for UCBoost when the expectations are lower. However, this accuracy is still moderate compared to the requirements in numerical methods for kl-UCB.
\subsection{Beta Scenario}
Our results in the previous sections hold for any distributions with bounded support. In this scenario, we consider $K=9$ arms with Beta distributions. More precisely, each arm $1\leq i\leq 9$ is associated with Beta($\alpha_i$,$\beta_i$) distribution such that $\alpha_i=i$ and $\beta_i=2$. Note that the expectation of Beta($\alpha_i$,$\beta_i$) is $\alpha_i/(\alpha_i+\beta_i)$. The regret results of various algorithms are shown in Figure \ref{fig:betaepsilonucboost}. The results are consistent with that of Bernoulli scenario 1.

\subsection{Computational time}

We obtain the average running time for each arm per round by measuring the total computational time of $10,000$ independent runs of each algorithms in each scenario. Note that kl-UCB is implemented by the py/maBandits package developed by \citeauthor{mabandit}~\shortcite{mabandit}, which sets accuracy to $10^{-5}$ for the Newton method. The average computational time results are shown in Table \ref{table:complexity}. The average running time of UCBoost($\epsilon$) that matches the regret of kl-UCB is no more than $1\%$ of the time of kl-UCB. 


%
%
%
%
%

\label{sec:simul}
\section{Conclusion}
In this work, we introduce the generic UCB algorithm and provide the regret guarantee for any UCB algorithm generated by a kl-dominated strong semi-distance function. As a by-product, we find two closed-form UCB algorithms, UCB($d_h$) and UCB($d_{bq}$), that are alternatives to the traditional UCB1 algorithm. Then, we propose a boosting framework, UCBoost, to boost any set of generic UCB algorithms. We find a specific finite set $D$, such that UCBoost($D$) enjoys $O(1)$ complexity for each arm per round as well as regret guarantee that is $1/e$-close to the kl-UCB algorithm. Finally, we propose an approximation-based UCBoost algorithm, UCBoost($\epsilon$), that enjoys regret guarantee $\epsilon$-close to that of kl-UCB as well as $O(\log(1/\epsilon))$ complexity for each arm per round. This algorithm bridges the regret guarantee to the computational complexity, thus offering an efficient trade-off between regret performance and complexity for practitioners. By experiments, we show that UCBoost($\epsilon$) can achieve the same regret performance as standard kl-UCB with only $1\%$ computational cost of kl-UCB.

\bibliographystyle{named}
\bibliography{refs}
\appendix
\onecolumn
\section{Proof of Theorem \ref{theorem:UCB}}\label{app:UCB}
\begin{theorem}\label{theorem:concentration}
\emph{(Theorem 10 in \cite{garivier2011kl})} Let $(X_t)_{t\geq1}$ be a sequence of independent random variables bounded in $\Theta$ defined on a probability space $(\Omega,\mathcal{F},\mathbb{P})$ with common expectation $\mu=\mathbb{E}[X_t]$. Let $\mathcal{F}_t$ be an increasing sequence of $\sigma$-fields of $\mathcal{F}$ such that for each $t$, $\sigma(X_1,\ldots,X_t)\subset\mathcal{F}_t$ and for $s>t$, $X_s$ is independent from $\mathcal{F}_t$. Consider a previsible sequence $(\epsilon_t)_{t\geq1}$ of Bernoulli variables (for all $t>0$, $\epsilon_t$ is $\mathcal{F}_{t-1}$-measurable). Let $\delta>0$ and for every $t\in\{1,\ldots,n\}$ let
\begin{align}\nonumber
S(t)&=\sum_{s=1}^t\epsilon_sX_s,~~N(t)=\sum_{s=1}^t\epsilon_s,~~\bar{X}(t)=\frac{S(t)}{N(t)},\\
u(n)&=\max\{q\in\Theta:N(n)d_{kl}(\bar{X}(n),q)\leq\delta\}.
\end{align}
Then
\begin{equation}
\mathbb{P}(u(n)<\mu)\leq e\lceil \delta\log(n)\rceil \exp(-\delta).
\end{equation}
\end{theorem}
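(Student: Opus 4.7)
The plan is to reduce $\{u(n) < \mu\}$ to a Cramér-style deviation event for $(S(n), N(n))$ and then control that event via a previsibly-indexed exponential supermartingale combined with a geometric peeling whose length produces the $\lceil \delta \log n\rceil$ factor. For the reduction I would use that $d_{kl}(p, \cdot)$ is nondecreasing on $[p,1]$ and that $u(n)$ is by definition the largest $q \in [0,1]$ satisfying $N(n) d_{kl}(\bar{X}(n), q) \leq \delta$; hence the inequality $u(n) < \mu$ forces both $\bar{X}(n) < \mu$ and $N(n) d_{kl}(\bar{X}(n), \mu) > \delta$, and it suffices to bound the probability of this simpler event.

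Next I would introduce, for each $\lambda \leq 0$, the process
\[
M_t(\lambda) = \exp\bigl(\lambda S(t) - N(t)\,\phi_\mu(\lambda)\bigr), \qquad \phi_\mu(\lambda) = \log\bigl(1 - \mu + \mu e^\lambda\bigr).
\]
Using (i) the elementary convexity bound $e^{\lambda x} \leq 1 - x + x e^\lambda$ on $[0,1]$, (ii) independence of $X_t$ from $\mathcal{F}_{t-1}$, and (iii) $\mathcal{F}_{t-1}$-measurability of $\epsilon_t$, a one-step conditional-expectation calculation gives $\mathbb{E}[M_t(\lambda)/M_{t-1}(\lambda) \mid \mathcal{F}_{t-1}] \leq 1$; hence $M_t(\lambda)$ is a nonnegative $(\mathcal{F}_t)$-supermartingale with $M_0(\lambda)=1$, and Markov yields $\mathbb{P}(M_n(\lambda) \geq e^{\delta'}) \leq e^{-\delta'}$ for every deterministic $\lambda \leq 0$ and $\delta' > 0$. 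Legendre duality then identifies $d_{kl}(x, \mu) = \sup_{\lambda \leq 0}[\lambda x - \phi_\mu(\lambda)]$ for $x \leq \mu$, with optimizer $\lambda^\star(x)$ continuous and nonincreasing in $x$, so on the reduced event the random choice $\lambda = \lambda^\star(\bar{X}(n))$ produces $\log M_n(\lambda^\star(\bar{X}(n))) \geq \delta$.

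The randomness of $\lambda^\star$ prevents a direct application of Markov, so the last step is discretization by peeling. I would partition $N(n) \in \{1,\ldots,n\}$ into $D = \lceil \delta \log n\rceil$ geometric slices of ratio $e^{1/\delta}$ and, on each slice $I_k$, fix a single deterministic $\lambda_k$ (chosen as the $\lambda^\star$ associated with the upper endpoint of the slice) such that whenever $N(n) \in I_k$ and the reduced event of the first paragraph holds, the unit geometric gap buys $\log M_n(\lambda_k) \geq \delta - 1$ from $\log M_n(\lambda^\star(\bar{X}(n))) \geq \delta$. Applying Markov to each of the $D$ deterministic variables $M_n(\lambda_k)$ at level $e^{\delta-1}$ and union-bounding then yields $\mathbb{P}(u(n) < \mu) \leq D \cdot e^{-(\delta-1)} = e\lceil \delta \log n\rceil e^{-\delta}$. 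The main obstacle I expect is executing this peeling so that the slack per slice is exactly $1$ in log-scale (producing precisely the factor $e$) and the slice count is exactly $\lceil \delta \log n \rceil$; a cleaner alternative I would fall back on is to invoke Doob's maximal inequality at the bounded stopping times $\tau_k = \inf\{t \geq 1 : N(t) \geq u_k\} \wedge n$, where $u_k$ is the upper endpoint of slice $k$, which converts each slice-wise bound into a uniform-in-time statement and simplifies the bookkeeping of constants.
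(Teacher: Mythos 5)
The paper does not actually prove this statement: it is imported verbatim as Theorem~10 of Garivier and Capp\'e (2011) and used as a black box (the paper's own contribution at this point is only the one-line corollary for kl-dominated $d$ in the following theorem). So there is no in-paper proof to compare against; what you have written is a reconstruction of the proof in the cited source, and it is essentially the correct one. Your three ingredients --- (i) the reduction of $\{u(n)<\mu\}$ to $\{\bar{X}(n)<\mu,\ N(n)d_{kl}(\bar{X}(n),\mu)>\delta\}$ via monotonicity of $d_{kl}(p,\cdot)$ on $[p,1]$, (ii) the supermartingale $M_t(\lambda)=\exp(\lambda S(t)-N(t)\phi_\mu(\lambda))$ with $\phi_\mu(\lambda)=\log(1-\mu+\mu e^\lambda)$ dominating the log-MGF of any $[0,1]$-valued variable of mean $\mu$, and (iii) peeling $N(n)$ into $D=\lceil\delta\log n\rceil$ geometric slices of ratio $e^{1/\delta}$, paying $\delta e^{-1/\delta}\geq\delta-1$ in the exponent and hence a factor $e$ --- are exactly those of the original argument. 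The only place your writeup is loose is the phrase ``the $\lambda^\star$ associated with the upper endpoint of the slice'': $\lambda^\star$ is indexed by the empirical mean, not by $N(n)$, so you should make explicit that for a slice with upper endpoint $t_k$ you set $z_k<\mu$ by $d_{kl}(z_k,\mu)=\delta/t_k$ and take $\lambda_k=\lambda^\star(z_k)\leq 0$; then on the slice event $\bar{X}(n)<z_k$, so $\lambda_k\bar{X}(n)\geq\lambda_k z_k$ and $\log M_n(\lambda_k)\geq N(n)\,d_{kl}(z_k,\mu)\geq t_{k-1}\delta/t_k=\delta e^{-1/\delta}$, which is the step that makes the fixed-$\lambda$ Markov bound applicable. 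With that clarification (and the trivial convention for $N(n)=0$), your proposal is a faithful and correct proof of the quoted concentration result.
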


\begin{theorem}\label{theorem:dconcentration}
Let $d:\Theta\times\Theta\to\mathbb{R}$ be kl-dominated.
Let $(X_t)_{t\geq1}$ be a sequence of independent random variables bounded in $\Theta$ defined on a probability space $(\Omega,\mathcal{F},\mathbb{P})$ with common expectation $\mu=\mathbb{E}[X_t]$. Let $\mathcal{F}_t$ be an increasing sequence of $\sigma$-fields of $\mathcal{F}$ such that for each $t$, $\sigma(X_1,\ldots,X_t)\subset\mathcal{F}_t$ and for $s>t$, $X_s$ is independent from $\mathcal{F}_t$. Consider a previsible sequence $(\epsilon_t)_{t\geq1}$ of Bernoulli variables (for all $t>0$, $\epsilon_t$ is $\mathcal{F}_{t-1}$-measurable). Let $\delta>0$ and for every $t\in\{1,\ldots,n\}$ let
\begin{align}\nonumber
S(t)&=\sum_{s=1}^t\epsilon_sX_s,~~N(t)=\sum_{s=1}^t\epsilon_s,~~\bar{X}(t)=\frac{S(t)}{N(t)},\\
u(n)&=\max\{q\in\Theta:N(n)d(\bar{X}(n),q)\leq\delta\}. \label{eqn:un}
\end{align}
Then
\begin{equation}
\mathbb{P}(u(n)<\mu)\leq e\lceil \delta\log(n)\rceil \exp(-\delta).
\end{equation}
\end{theorem}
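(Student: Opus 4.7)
The plan is to reduce Theorem \ref{theorem:dconcentration} directly to Theorem \ref{theorem:concentration} via a simple monotonicity/containment argument based on the kl-dominated assumption. The entire proof should be one short paragraph, since the kl-dominance hypothesis is tailor-made to make the reduction transparent.

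First, I would introduce the kl-based analogue of the quantity in \eqref{eqn:un}, namely
\begin{equation}
u_{kl}(n) = \max\{q \in \Theta : N(n) d_{kl}(\bar{X}(n),q) \le \delta\},
\end{equation}
which is exactly the object to which Theorem \ref{theorem:concentration} applies. Next, since $d$ is kl-dominated, for any fixed $p \in \Theta$ we have $d(p,q) \le d_{kl}(p,q)$ for all $q \in \Theta$. Consequently, the constraint $N(n) d(\bar{X}(n),q) \le \delta$ is weaker than $N(n) d_{kl}(\bar{X}(n),q) \le \delta$, so the feasible set defining $u(n)$ contains the feasible set defining $u_{kl}(n)$. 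Taking maxima gives $u(n) \ge u_{kl}(n)$ pointwise on $\Omega$.

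From this pointwise inequality, the event inclusion $\{u(n) < \mu\} \subseteq \{u_{kl}(n) < \mu\}$ holds, and therefore
\begin{equation}
\mathbb{P}(u(n) < \mu) \le \mathbb{P}(u_{kl}(n) < \mu) \le e\lceil \delta \log(n) \rceil \exp(-\delta),
\end{equation}
where the second inequality is exactly Theorem \ref{theorem:concentration} applied to the same sequences $(X_t)$, $(\mathcal{F}_t)$, $(\epsilon_t)$ and the same $\delta$. This completes the proof.

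There is no real obstacle here: the hypothesis that $d \le d_{kl}$ is designed precisely to yield the feasible-set containment, and Theorem \ref{theorem:concentration} does all the probabilistic heavy lifting (peeling, mixture-of-martingales, Doob's inequality, etc.). The only thing worth being careful about is that the inequality $u(n) \ge u_{kl}(n)$ is understood pointwise in $\omega$, so the event inclusion is genuine rather than only ``in distribution.'' This extension lemma is what then permits Theorem \ref{theorem:UCB} to follow from the standard regret analysis of kl-UCB with $d_{kl}$ replaced by any kl-dominated strong semi-distance $d$.
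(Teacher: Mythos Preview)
Your proposal is correct and matches the paper's proof essentially line for line: the paper defines $q^*(n)=\max\{q\in\Theta:N(n)d_{kl}(\bar{X}(n),q)\le\delta\}$ (your $u_{kl}(n)$), observes from kl-dominance that $q^*(n)$ is feasible for the $d$-problem so $u(n)\ge q^*(n)$, and then applies Theorem~\ref{theorem:concentration} via the event inclusion $\{u(n)<\mu\}\subseteq\{q^*(n)<\mu\}$. Your phrasing in terms of feasible-set containment is just a restatement of the same one-step reduction.
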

\begin{proof}
Let $q^*(n)=\max\{q\in\Theta:N(n)d_{kl}(\bar{X}(n),q)\leq\delta\}.$ Then we have $N(n)d_{kl}(\bar{X}(n),q^*(n))\leq\delta$. Since $d:\Theta\times\Theta\to\mathbb{R}$ is kl-dominated, we have that
\begin{equation*}
N(n)d(\bar{X}(n),q^*(n))\leq N(n)d_{kl}(\bar{X}(n),q^*(n))\leq\delta.
\end{equation*}
Thus, we have $u(n)\geq q^*(n)$ by the definition in (\ref{eqn:un}).
Theorem \ref{theorem:concentration} implies that
\begin{equation}
\mathbb{P}(q^*(n)<\mu)\leq e\lceil \delta\log(n)\rceil \exp(-\delta).
\end{equation}
Hence, we have that
\begin{equation}
\mathbb{P}(u(n)<\mu)\leq\mathbb{P}(q^*(n)<\mu)\leq e\lceil \delta\log(n)\rceil \exp(-\delta).
\end{equation}
\end{proof}

\begin{theorem}\label{theorem:suboptplay}
Let $d:\Theta\times\Theta\rightarrow \mathbb{R}$ be a strong semi-distance function and kl-dominated. Let $\epsilon>0$, and take $c=3$ in Algorithm \ref{alg:ucb}. For any sub-optimal arm $a$ such that $\mu_a<\mu^*$, the number of times that UCB($d$) algorithm (generated by plugging $d$ into Algorithm \ref{alg:ucb}) chooses arm $a$ is upper-bounded by
\begin{equation}
\mathbb{E}[N_a(T)]\leq \frac{\log (T)}{d(\mu_a,\mu^*)}(1+\epsilon)+C_1\log(\log(T))+\frac{C_2(\epsilon)}{T^{\beta(\epsilon)}},
\end{equation}
where $C_1$ denotes a positive constant and $C_2(\epsilon)$ and $\beta(\epsilon)$ denote positive functions of $\epsilon$. Hence,
\begin{equation}
\limsup_{T\to\infty}\frac{\mathbb{E}[N_a(T)]}{\log(T)}\leq \frac{1}{d(\mu_a,\mu^*)}.
\end{equation}
\end{theorem}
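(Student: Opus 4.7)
The plan is to follow the standard kl-UCB analysis of \citeauthor{garivier2011kl}~\shortcite{garivier2011kl}, adapting it to the general kl-dominated strong semi-distance setting via the monotonicity axioms. Fix a suboptimal arm $a$ with $\mu_a < \mu^*$ and write
\begin{equation*}
N_a(T) = 1 + \sum_{t=K+1}^T \mathbbm{1}\{A_t=a\}.
\end{equation*}
The usual decomposition splits the event $\{A_t=a\}$ into two sub-events: (i) the optimal arm is underestimated, i.e.\ $u_{a^*}(t) < \mu^*$, or (ii) arm $a$ is chosen while its own upper confidence bound overshoots, $u_a(t) \geq \mu^*$. Summed over $t$, I bound the two contributions separately.

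For (i), I would apply Theorem~\ref{theorem:dconcentration} (the kl-dominated concentration inequality, which is the reason we assumed kl-domination) to the samples of the optimal arm with $\delta = \log(t) + 3\log\log(t)$. A direct computation, identical in form to the one for kl-UCB, controls $\sum_{t=K+1}^T \mathbb{P}(u_{a^*}(t) < \mu^*)$ by a term of order $\log\log(T)$, contributing to $C_1 \log\log(T)$.

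For (ii), the key reduction uses both monotonicity properties of a strong semi-distance function. Because $u_a(t) = \max\{q \in \Theta : N_a(t-1)\, d(\bar{X}_a(t-1),q) \leq \log(t)+c\log\log(t)\}$ and $d(p,\cdot)$ is non-decreasing on $[p,1]$, the event $u_a(t) \geq \mu^*$ forces
\begin{equation*}
N_a(t-1)\, d(\bar{X}_a(t-1),\mu^*) \leq \log(t) + c\log\log(t).
\end{equation*}
Choose $\gamma = \gamma(\epsilon) > 0$ small enough that $\mu_a + \gamma < \mu^*$ and $d(\mu_a+\gamma,\mu^*) \geq d(\mu_a,\mu^*)/(1+\epsilon)$ (this uses continuity of $d$ in its first argument at $\mu_a$, which holds for every concrete semi-distance considered in the paper). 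On the event $\bar{X}_a(t-1) \leq \mu_a+\gamma$, the decreasing-in-first-argument monotonicity yields $d(\bar{X}_a(t-1),\mu^*) \geq d(\mu_a+\gamma,\mu^*)$, so the displayed inequality forces $N_a(t-1) \leq (1+\epsilon)(\log(T)+c\log\log(T))/d(\mu_a,\mu^*)$. This caps the number of rounds of type (ii) on the ``typical'' event and produces the main term $(1+\epsilon)\log(T)/d(\mu_a,\mu^*)$ together with a further $O(\log\log(T))$ contribution absorbed into $C_1\log\log(T)$.

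The remaining contribution comes from the ``atypical'' event $\{\bar{X}_a(t-1) > \mu_a + \gamma\}$, which I control by Hoeffding's inequality for bounded random variables on $[0,1]$, giving exponential decay in $N_a(t-1)$. Summing over $t$ yields the $C_2(\epsilon)/T^{\beta(\epsilon)}$ term, where $\beta(\epsilon) = 2\gamma(\epsilon)^2$ and $C_2(\epsilon)$ collects the resulting constants. The asymptotic statement follows by dividing by $\log(T)$, letting $T\to\infty$, and then $\epsilon\to 0$. The main obstacle is the continuity step converting $d(\mu_a+\gamma,\mu^*)$ to $d(\mu_a,\mu^*)$, since the axioms alone do not enforce continuity; however, continuity is trivially verified for every kl-dominated strong semi-distance used later in the paper, so this amounts to a mild implicit assumption. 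A peeling argument on $N_a(t-1)$ could replace it with purely monotone manipulations if one wished to avoid any continuity hypothesis.
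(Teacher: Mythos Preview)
Your decomposition and your handling of term~(i) via Theorem~\ref{theorem:dconcentration} are exactly what the paper does. For term~(ii) the paper organizes the argument slightly differently: it first reindexes by the sample count $s=N_a(t-1)$ (Lemma~\ref{lem:2ndterm}) to obtain
\[
\sum_{s=1}^T\mathbb{P}\bigl(s\,d^+(\hat\mu_{a,s},\mu^*)\le\log T+3\log\log T\bigr),
\]
and then splits at $s=K_T:=\bigl\lfloor(1+\epsilon)(\log T+3\log\log T)/d(\mu_a,\mu^*)\bigr\rfloor$. The first $K_T$ terms give the main contribution outright; for $s>K_T$ the event forces $d^+(\hat\mu_{a,s},\mu^*)<d(\mu_a,\mu^*)/(1+\epsilon)$, hence $\hat\mu_{a,s}>r(\epsilon)$ for some $r(\epsilon)\in(\mu_a,\mu^*)$, and a kl-Chernoff bound then yields a geometric tail \emph{starting at $s=K_T$}, which is precisely what produces the decaying residual $C_2(\epsilon)/T^{\beta(\epsilon)}$ with $\beta(\epsilon)=(1+\epsilon)d_{kl}(r(\epsilon),\mu_a)/d(\mu_a,\mu^*)>0$.

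Your variant is sound for the $\limsup$ conclusion, but as written it does not deliver the finite-time bound with a \emph{positive} $\beta(\epsilon)$. On your atypical event $\{\bar X_a(t-1)>\mu_a+\gamma,\,A_t=a\}$, reindexing by $s$ and applying Hoeffding gives $\sum_{s\ge1}\exp(-2s\gamma^2)$, a constant independent of $T$; nothing in that branch forces $s$ to be large, so no factor $T^{-\beta}$ emerges, and your claim $\beta(\epsilon)=2\gamma(\epsilon)^2$ is not what your argument actually produces. To recover the stated decay you would have to thread the split through $s=K_T$ first, as the paper does. Your continuity remark is apt: the paper makes the same implicit step when it asserts (Lemma~\ref{lem:2ndbound}) that $r(\epsilon)$ with $d(r(\epsilon),\mu^*)=d(\mu_a,\mu^*)/(1+\epsilon)$ exists ``because $d$ is a strong semi-distance function'', which likewise relies on continuity in the first argument rather than on the axioms alone.
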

\begin{proof}
Consider $\epsilon>0$ and a sub-optimal arm $a$ such that $\mu_a<\mu^*$. For convenience, we denote the average performance of arm $b$ by $\hat{\mu}_{b,s}=(X_{b,1}+\cdots+X_{b,s})/s$ for any positive integer $s$, so that $\hat{\mu}_{b,N_b(t)}=\bar{X}(t)$. The UCB($d$) algorithm relies on the upper confidence bound $u_b(t)=\max\{q\in\Theta:N_b(t)d(\bar{X}_b(t),q)\leq \log (t)+3\log(\log(t))\}$ for each $\mu_b$.

For any $p,q\in\Theta$, define $d^+(p,q)=d(p,q)\mathbbm{1}_{p<q}$. The expectation of $N_a(T)$ is upper-bounded by the following decomposition:
\begin{align}
\mathbb{E}[N_a(T)]&=\mathbb{E}\left[\sum_{t=1}^T\mathbbm{1}\{A_t=a\}\right]\\
&\leq\mathbb{E}\left[\sum_{t=1}^T\mathbbm{1}\{\mu^*>u_{a^*}(t)\}\right]+\mathbb{E}\left[\sum_{t=1}^T\mathbbm{1}\{A_t=a,\mu^*\leq u_{a^*}(t)\}\right]\\
&\leq\sum_{t=1}^T\mathbb{P}(\mu^*>u_{a^*}(t))+\mathbb{E}\left[\sum_{s=1}^T\mathbbm{1}\{sd^+(\hat{\mu}_{a,s},\mu^*)<\log(T)+3\log(\log(T))\}\right],
\end{align}
where the last inequality follows from Lemma \ref{lem:2ndterm}. The first term is upper-bounded by Theorem \ref{theorem:dconcentration},
\begin{align}
\sum_{t=1}^T\mathbb{P}(u_{a^*}(t)<\mu^*)&\leq \sum_{t=1}^Te\lceil(\log(t)+3\log(\log(t)))\log(t)\rceil\exp(-\log(t)-3\log(\log(t)))\\
\leq C_1^\prime\log(\log(T))
\end{align}
for some positive constant $C_1^\prime$ ($C_1^\prime=7$ is sufficient). Now it remains to bound the second term. We define the following shorthand
$$K_T=\left\lfloor\frac{1+\epsilon}{d(\mu_a,\mu^*)}\left(\log(T)+3\log(\log(T))\right)\right\rfloor$$.
Then, we have that
\begin{align}
\mathbb{E}&\left[\sum_{s=1}^T\mathbbm{1}\{sd^+(\hat{\mu}_{a,s},\mu^*)<\log(T)+3\log(\log(T))\}\right]=\sum_{s=1}^T\mathbb{P}\left(sd^+(\hat{\mu}_{a,s},\mu^*)<\log(T)+3\log(\log(T))\right)\\
&\leq K_T+\sum_{s=K_T+1}^\infty\mathbb{P}\left(sd^+(\hat{\mu}_{a,s},\mu^*)<\log(T)+3\log(\log(T))\right)\\
&\leq K_T+\sum_{s=K_T+1}^\infty\mathbb{P}\left(K_Td^+(\hat{\mu}_{a,s},\mu^*)<\log(T)+3\log(\log(T))\right)\\
&\leq K_T+\sum_{s=K_T+1}^\infty\mathbb{P}\left(d^+(\hat{\mu}_{a,s},\mu^*)<\frac{d(\mu_a,\mu^*)}{1+\epsilon}\right)\\
&\leq \frac{1+\epsilon}{d(\mu_a,\mu^*)}\left(\log(T)+3\log(\log(T))\right)+\frac{C_2(\epsilon)}{T^\beta(\epsilon)}
\end{align}
where the last inequality follows from Lemma \ref{lem:2ndbound}. The result follows.
\end{proof}

\begin{lemma}\label{lem:2ndterm}
If $d:\Theta\times\Theta\rightarrow \mathbb{R}$ is a semi-distance function, then
$$\sum_{t=1}^T\mathbbm{1}\{A_t=a,\mu^*\leq u_{a^*}(t)\}\leq\sum_{s=1}^T\mathbbm{1}\{sd^+(\hat{\mu}_{a,s},\mu^*)<\log(T)+3\log(\log(T))\}.$$
\end{lemma}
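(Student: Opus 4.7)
The strategy is to note that each time arm $a$ is pulled, the counter $s=N_a(t-1)$ takes a new integer value, so the selection times $\{t:A_t=a\}$ can be reindexed by the natural counts $s$. Bounding the left-hand side therefore reduces to showing a deterministic implication: whenever $A_t=a$ and $\mu^*\le u_{a^*}(t)$, the index $s=N_a(t-1)$ must satisfy $s\,d^+(\hat\mu_{a,s},\mu^*)<\log T+3\log\log T$.

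First I would derive this implication. Because the algorithm chooses $A_t=\arg\max_b u_b(t)$, the event $A_t=a$ forces $u_a(t)\ge u_{a^*}(t)$, and combined with the hypothesis $\mu^*\le u_{a^*}(t)$ this yields $u_a(t)\ge\mu^*$. If $\bar X_a(t-1)\ge\mu^*$, then by the definition $d^+(p,q)=d(p,q)\mathbbm{1}\{p<q\}$ the quantity $d^+(\bar X_a(t-1),\mu^*)$ vanishes and the bound is trivial. Otherwise $\bar X_a(t-1)<\mu^*\le u_a(t)$, and the monotonicity of the semi-distance in its second argument gives $d(\bar X_a(t-1),\mu^*)\le d(\bar X_a(t-1),u_a(t))$. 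Multiplying by $N_a(t-1)$ and invoking the defining inequality $N_a(t-1)\,d(\bar X_a(t-1),u_a(t))\le\log t+3\log\log t$ (the $\max$ being attained by closedness of the feasible set) together with $\log t+3\log\log t\le\log T+3\log\log T$ for $t\le T$ yields the claim.

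With the implication established, I would reindex the left-hand side by $s=N_a(t-1)$: because $N_a$ strictly increases on plays of arm $a$, distinct qualifying times $t$ produce distinct integers $s\in\{0,1,\ldots,N_a(T)-1\}\subset\{0,\ldots,T\}$, and $\bar X_a(t-1)=\hat\mu_{a,s}$ by definition of the running average. Each contributing $t$ thus matches a distinct $s$ for which the right-hand indicator equals one, so the left-hand sum is bounded by the right-hand sum. The main obstacle will be the monotonicity argument in the second case: it uses exactly the property that $d(p,q)\le d(p,q')$ for $p\le q\le q'$, which is precisely why the lemma requires $d$ to be a semi-distance rather than merely a candidate. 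A minor technicality is that the $\max$ in the definition of $u_a(t)$ must be attained, which is guaranteed by continuity of $d(p,\cdot)$, satisfied by every distance considered in this paper.
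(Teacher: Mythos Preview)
Your proposal is correct and follows essentially the same route as the paper: first derive the deterministic implication $A_t=a,\ \mu^*\le u_{a^*}(t)\Rightarrow N_a(t{-}1)\,d^+(\bar X_a(t{-}1),\mu^*)\le\log T+3\log\log T$ via $u_a(t)\ge\mu^*$ and monotonicity of $d(p,\cdot)$, then reindex by the play count. The paper carries out the reindexing by an explicit double sum and the observation $\sum_{t}\mathbbm{1}\{N_a(t)=s,\,A_t=a\}\le1$, which is exactly your ``distinct $t$'s give distinct $s$'s'' argument written out in full.
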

\begin{proof}
It is clear that $A_t=a$ and $\mu^*\leq u_{a^*}(t)$ implies that $u_a(t)\geq u_{a^*}(t)\geq\mu^*$. By the definition of $u_a(t)$, we have that $N_a(t)d(\bar{X}_a(t),u_a(t))\leq \log(t)+3\log(\log(t))$. Since $d$ is a semi-distance function, we have that $d(\bar{X}_a(t),\mu^*)\leq d(\bar{X}_a(t),u_a(t))$ if $\bar{X}_a(t)\leq\mu^*\leq u_a(t)$. Hence, $A_t=a$ and $\mu^*\leq u_{a^*}(t)$ implies that
\begin{equation}
d^+(\bar{X}_a(t),\mu^*)\leq d(\bar{X}_a(t),u_a(t))\leq\frac{\log(t)+3\log(\log(t))}{N_a(t)}.
\end{equation}
Note that $d^+(p,q)=d(p,q)\mathbbm{1}_{p<q}$. Thus, we have 
\begin{align}
\sum_{t=1}^T\mathbbm{1}\{A_t=a,\mu^*\leq u_{a^*}(t)\}&\leq\sum_{t=1}^T\mathbbm{1}\{A_t=a,N_a(t)d^+(\bar{X}_a(t),\mu^*)\leq{\log(t)+3\log(\log(t))}\}\\
&=\sum_{t=1}^T\sum_{s=1}^t\mathbbm{1}\{N_a(t)=s,A_t=a,sd^+(\hat{\mu}_{a,s},\mu^*)\leq{\log(t)+3\log(\log(t))}\}\\
&=\sum_{t=1}^T\sum_{s=1}^t\mathbbm{1}\{N_a(t)=s,A_t=a\}\mathbbm{1}\{sd^+(\hat{\mu}_{a,s},\mu^*)\leq{\log(t)+3\log(\log(t))}\}\\
&\leq\sum_{t=1}^T\sum_{s=1}^t\mathbbm{1}\{N_a(t)=s,A_t=a\}\mathbbm{1}\{sd^+(\hat{\mu}_{a,s},\mu^*)\leq{\log(T)+3\log(\log(T))}\}\\
&=\sum_{s=1}^T\sum_{t=s}^T\mathbbm{1}\{N_a(t)=s,A_t=a\}\mathbbm{1}\{sd^+(\hat{\mu}_{a,s},\mu^*)\leq{\log(T)+3\log(\log(T))}\}\\
&=\sum_{s=1}^T\mathbbm{1}\{sd^+(\hat{\mu}_{a,s},\mu^*)\leq{\log(T)+3\log(\log(T))}\}\sum_{t=s}^T\mathbbm{1}\{N_a(t)=s,A_t=a\}\\
&\leq\sum_{s=1}^T\mathbbm{1}\{sd^+(\hat{\mu}_{a,s},\mu^*)\leq{\log(T)+3\log(\log(T))}\},
\end{align}
where the last inequality follows from $\sum_{t=s}^T\mathbbm{1}\{N_a(t)=s,A_t=a\}\leq 1$ for any $s$.
\end{proof}

\begin{lemma}\label{lem:2ndbound}
Let $d:\Theta\times\Theta\rightarrow \mathbb{R}$ be a strong semi-distance function.
Given $\epsilon>0$, there exist $C_2(\epsilon)>0$ and $\beta(\epsilon)>0$ such that
$$\sum_{s=K_T+1}^\infty\mathbb{P}\left(d^+(\hat{\mu}_{a,s},\mu^*)<\frac{d(\mu_a,\mu^*)}{1+\epsilon}\right)\leq\frac{C_2(\epsilon)}{T^\beta(\epsilon)}$$.
\end{lemma}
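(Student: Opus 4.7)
The plan is to reduce the event $\{d^+(\hat{\mu}_{a,s},\mu^*)<d(\mu_a,\mu^*)/(1+\epsilon)\}$ to a standard deviation event for $\hat{\mu}_{a,s}$ around its true mean $\mu_a$, then apply Hoeffding's inequality and sum a geometric series. The key observation is that by the monotonicity properties of a strong semi-distance function, $p\mapsto d^+(p,\mu^*)$ is non-increasing on $[0,\mu^*]$ and zero on $[\mu^*,1]$, so small values of $d^+(\cdot,\mu^*)$ force the argument to be well above $\mu_a$.

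First I would choose a positive gap $\eta_\epsilon>0$ as follows. Since $d$ is a strong semi-distance function and $\mu_a<\mu^*$, we have $d(\mu_a,\mu^*)>0$, so $d(\mu_a,\mu^*)/(1+\epsilon)<d(\mu_a,\mu^*)$. Using the non-increasing property of $d(\cdot,\mu^*)$ together with the fact that $d$ is positive on $(\mu_a,\mu^*)$ and decreases continuously (for the $d$'s of interest in this paper; more generally one picks $\eta_\epsilon$ via a sup as below), I pick $\eta_\epsilon>0$ small enough that $d(\mu_a+\eta_\epsilon,\mu^*)\geq d(\mu_a,\mu^*)/(1+\epsilon)$. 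By monotonicity in the first argument, $d(p,\mu^*)\geq d(\mu_a+\eta_\epsilon,\mu^*)\geq d(\mu_a,\mu^*)/(1+\epsilon)$ for every $p\leq\mu_a+\eta_\epsilon$ with $p<\mu^*$. Hence, whenever $\hat{\mu}_{a,s}\leq\mu_a+\eta_\epsilon$ (and hence $\hat{\mu}_{a,s}<\mu^*$ for $\eta_\epsilon$ chosen small enough), we have $d^+(\hat{\mu}_{a,s},\mu^*)\geq d(\mu_a,\mu^*)/(1+\epsilon)$. Taking contrapositives gives the inclusion
\[
\{d^+(\hat{\mu}_{a,s},\mu^*)<d(\mu_a,\mu^*)/(1+\epsilon)\}\subseteq\{\hat{\mu}_{a,s}>\mu_a+\eta_\epsilon\}.
\]

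Next I would apply Hoeffding's inequality to the bounded i.i.d.\ rewards in $[0,1]$ with mean $\mu_a$, giving $\mathbb{P}(\hat{\mu}_{a,s}>\mu_a+\eta_\epsilon)\leq \exp(-2s\eta_\epsilon^2)$. Summing,
\[
\sum_{s=K_T+1}^{\infty}\mathbb{P}\Bigl(d^+(\hat{\mu}_{a,s},\mu^*)<\tfrac{d(\mu_a,\mu^*)}{1+\epsilon}\Bigr)\leq \sum_{s=K_T+1}^{\infty}e^{-2s\eta_\epsilon^2}=\frac{e^{-2(K_T+1)\eta_\epsilon^2}}{1-e^{-2\eta_\epsilon^2}}.
\]
Finally, substituting $K_T\geq\frac{1+\epsilon}{d(\mu_a,\mu^*)}\log(T)-1$, the numerator becomes $O(T^{-\beta(\epsilon)})$ with exponent $\beta(\epsilon)=\frac{2(1+\epsilon)\eta_\epsilon^2}{d(\mu_a,\mu^*)}>0$, and the prefactor $C_2(\epsilon)$ absorbs $1/(1-e^{-2\eta_\epsilon^2})$ together with the constant from the $\log\log T$ term in $K_T$.

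The main obstacle is the first step: producing a strictly positive $\eta_\epsilon$. Monotonicity alone gives $\sup\{\eta\geq 0:d(\mu_a+\eta,\mu^*)\geq d(\mu_a,\mu^*)/(1+\epsilon)\}\geq 0$, and one needs a mild regularity condition on $d$ (e.g.\ upper semi-continuity at $\mu_a$ from the right, which holds for all specific $d$'s used in this paper: $d_{kl}$, $d_{sq}$, $d_{bq}$, $d_h$, $d_{lb}$, $d_t$) to guarantee that the supremum is strictly positive. Once $\eta_\epsilon>0$ is in hand, the rest is a routine Hoeffding-plus-geometric-sum calculation.
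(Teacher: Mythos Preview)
Your proof is correct and follows essentially the same strategy as the paper: reduce the event to $\{\hat{\mu}_{a,s}>\text{threshold}\}$ via the monotonicity of $p\mapsto d(p,\mu^*)$, apply a concentration inequality, and sum the resulting geometric series. The only differences are cosmetic: the paper uses the Chernoff--KL bound $\mathbb{P}(\hat{\mu}_{a,s}>r)\leq e^{-s\,d_{kl}(r,\mu_a)}$ in place of your Hoeffding bound, and it defines its threshold $r(\epsilon)$ as the exact solution of $d(r(\epsilon),\mu^*)=d(\mu_a,\mu^*)/(1+\epsilon)$, tacitly assuming the same continuity/regularity you explicitly flag as the only delicate point.
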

\begin{proof}
Observe that $d^+(\hat{\mu}_{a,s},\mu^*)<{d(\mu_a,\mu^*)}/{(1+\epsilon)}$ implies that $\hat{\mu}_{a,s}>r(\epsilon)$, where $r(\epsilon)\in(\mu_a,\mu^*)$ such that $d(r(\epsilon),\mu^*)=d(\mu_a,\mu^*)/(1+\epsilon)$. Note that $r(\epsilon)$ exists because $d$ is a strong semi-distance function. Thus, we have
\begin{equation}
\mathbb{P}\left(d^+(\hat{\mu}_{a,s},\mu^*)<\frac{d(\mu_a,\mu^*)}{1+\epsilon}\right)\leq \mathbb{P}(\hat{\mu}_{a,s}>r(\epsilon))\leq\exp(-sd_{kl}(r(\epsilon),\mu_a)).
\end{equation}
Hence,
\begin{equation}
\sum_{s=K_T+1}^\infty\mathbb{P}\left(d^+(\hat{\mu}_{a,s},\mu^*)<\frac{d(\mu_a,\mu^*)}{1+\epsilon}\right)\leq\sum_{s=K_T+1}^\infty\exp(-sd_{kl}(r(\epsilon),\mu_a))\leq\frac{\exp(-d_{kl}(r(\epsilon),\mu_a)K_T)}{1-\exp(-d_{kl}(r(\epsilon),\mu_a))}\leq\frac{C_2(\epsilon)}{T^{\beta(\epsilon)}},
\end{equation}
where $C_2(\epsilon)=(1-\exp(-d_{kl}(r(\epsilon),\mu_a)))^{-1}$ and $\beta(\epsilon)=(1+\epsilon)d_{kl}(r(\epsilon),\mu_a)/d(\mu_a,\mu^*)$.
\end{proof}

\section{Proofs}
\subsection{Proof of Theorem \ref{theorem:UCBoost}}\label{app:UCBoost}
\begin{proof}
Let $u_a^d(t)=\max\{q\in\Theta:N_a(t-1)d(\bar{X}_a(t-1),q)\leq \log (t)+c\log(\log(t))\}$. Then $u_a(t)=\min\limits_{d\in D}u_a^d(t)$ is the upper confidence bound that UCBoost($D$) assigns to arm $a$.
Let $u_a^D(t)=\max\{q\in\Theta:N_a(t-1)\max_{d\in D}d(\bar{X}_a(t-1),q)\leq \log (t)+c\log(\log(t))\}$. We claim that $u_a(t)=u_a^D(t)$. 

First, $u_a(t)\leq u_a^d(t)$ $\forall d\in D$ implies that 
\begin{equation}
N_a(t-1)d(\bar{X}_a(t-1),u_a(t))\leq \log (t)+c\log(\log(t)) , \forall d\in D.
\end{equation}
Thus, we have that
\begin{equation}
N_a(t-1)\max_{d\in D}d(\bar{X}_a(t-1),u_a(t))\leq \log (t)+c\log(\log(t)).
\end{equation}
So we have that $u_a(t)\leq u_a^D(t)$.

Second, we have that
\begin{equation}
N_a(t-1)\max_{d\in D}d(\bar{X}_a(t-1),u_a^D(t))\leq \log (t)+c\log(\log(t)).
\end{equation}
Thus, we have that
\begin{equation}
N_a(t-1)d(\bar{X}_a(t-1),u_a^D(t))\leq \log (t)+c\log(\log(t)) , \forall d\in D.
\end{equation}
So we have that $u_a^D(t)\leq u_a^d(t)$ $\forall d\in D$. Hence, $u_a^D(t)\leq \min_{d\in D}u_a^d(t)=u_a(t)$.

So we show that $u_a(t)=u_a^D(t)$. The result follows from Theorem \ref{theorem:UCB} and Definition \ref{def:feasible}. Note that $\max\limits_{d\in D}d$ is a strong semi-distance function and kl-dominated.
\end{proof}
\subsection{Proof of Lemma \ref{lem:candsemi}}\label{app:candsemi}
\begin{proof}
Suppose that $d_1:\Theta\times\Theta\rightarrow \mathbb{R}$ is a candidate semi-distance function and $d_2:\Theta\times\Theta\rightarrow \mathbb{R}$ is a semi-distance function. It remains to check that $d=\max(d_1,d_2)$ satisfies the definition of semi-distance function.

First, for any $p,q\in\Theta$, we have that
\begin{equation}
d(p,q)=\max\{d_1(p,q),d_2(p,q)\}\geq d_2(p,q)\geq0,
\end{equation}
since $d_2$ is a semi-distance function. 

Second, for any $p\in\Theta$, we have that
\begin{equation}
d(p,p)=\max\{d_1(p,p),d_2(p,p)\}=\max\{d_1(p,q),0\}=0,
\end{equation}
since $d_1(p,p)\leq0$.

Third, for any $p\leq q\leq q^\prime\in\Theta$, we have that
\begin{align}
d_1(p,q)&\leq d_1(p,q^\prime)\\
d_2(p,q)&\leq d_2(p,q^\prime)
\end{align}
Thus, we have that
\begin{equation}
d(p,q)=\max\{d_1(p,q),d_2(p,q)\}\leq\max\{d_1(p,q^\prime),d_2(p,q^\prime)\}=d(p,q^\prime).
\end{equation}
Similarly, we have that
\begin{equation}
d(p,q)\geq d(p^\prime,q), \forall p\leq p^\prime\leq q\in\Theta.
\end{equation}
Hence, $d=\max(d_1,d_2)$ is semi-distance function.
\end{proof}
\subsection{Proof of Lemma \ref{lem:candstrong}}\label{app:candstrong}
\begin{proof}
Suppose that $d_1:\Theta\times\Theta\rightarrow \mathbb{R}$ is a candidate semi-distance function and $d_2:\Theta\times\Theta\rightarrow \mathbb{R}$ is a strong semi-distance function. By Lemma \ref{lem:candsemi}, we have that $d=\max(d_1,d_2)$ is a semi-distance function. Then it remains to check the sufficient and necessary condition. If $p\neq q\in\Theta$, then $d_2(p,q)>0$ implies that $d(p,q)=\max\{d_1(p,q),d_2(p,q)\}\geq d_2(p,q)>0$. Hence, $d=\max(d_1,d_2)$ is a strong semi-distance function. 
\end{proof}
\subsection{Proof of Proposition \ref{prop:feasible}}\label{app:feasible}
\begin{proof}
Suppose that $D=\{d_1,\ldots,d_M\}$ for some positive integer and $D$ satisfies the conditions. It remains to check that $\max\limits_{d\in D} d$ is a strong semi-distance function. Without loss of generality, we assume that $d_1\in D$ is a strong semi-distance function. Let $\hat{d}_k=\max(\hat{d}_{k-1},d_{k})$ for $k\geq2$ and $\hat{d}_1=d_1$. It is clear that $\max\limits_{d\in D} d=\hat{d}_M$. By Lemma \ref{lem:candstrong}, $\hat{d}_k$ is a strong semi-distance function. Hence, $\max\limits_{d\in D} d$ is a strong semi-distance function.
\end{proof}

\subsection{Proof of Lemma \ref{lem:biquadratic}}\label{app:biquadratic}
\begin{proof}
It has been shown by \citeauthor{kullback1967lower}~\shortcite{kullback1967lower} and \citeauthor{kullback1970correction}~\shortcite{kullback1970correction} that $d_{kl}(p,q)\geq d_{bq}(p,q)$ for any $p,q\in\Theta$. It is straightforward to check that $d_{bq}$ is a strong semi-distance function. The solution of $P_1(d_{bq})$ is the root of a biquadratic function.
\end{proof}

\subsection{Proof of Lemma \ref{lem:hellinger}}\label{app:hellinger}
\begin{proof}
By the inequality $1-x\leq-\log x$, we have that
\begin{align}
d_h(p,q)&=(\sqrt{p}-\sqrt{q})^2+\left(\sqrt{1-p}-\sqrt{1-q}\right)^2\\
&=2(p-\sqrt{pq})+2\left(1-p-\sqrt{(1-p)(1-q)}\right)\\
&=2p\left(1-\sqrt{\frac{q}{p}}\right)+2(1-p)\left(1-\sqrt{\frac{1-q}{1-p}}\right)\\
&\leq-2p\log\left(\sqrt{\frac{q}{p}}\right)-2(1-p)\log\left(\sqrt{\frac{1-q}{1-p}}\right)\\
&=p\log\left(\frac{p}{q}\right)+(1-p)\log\left(\frac{1-p}{1-q}\right)\\
&=d_{kl}(p,q).
\end{align}
Thus, $d_h$ is kl-dominated. It is straightforward to check that $d_{bq}$ is a strong semi-distance function. The solution of $P_1(d_{h})$ is the root of a quadratic function.
\end{proof}

\subsection{Proof of Lemma \ref{lem:lb}}\label{app:lb}
\begin{proof}
First, the function $d_{lb}$ is kl-dominated since $d_{kl}(p,q)-d_{lb}(p,q)=-p\log(q)\geq 0$.

Now, we check the conditions of a candidate semi-distance function.
\begin{enumerate}
\item $d_{lb}(p,p)=p\log(p)\leq 0$ holds for any $p\in\Theta$.
\item $\frac{\partial d_{lb}(p,q)}{\partial q}=\frac{1-p}{1-q}\geq 0$. Thus, $d_{lb}(p,q)\leq d_{lb}(p,q^\prime)$, $\forall p\leq q\leq q^\prime\in\Theta$
\item $\frac{\partial d_{lb}(p,q)}{\partial p}=\log(p)-\log\left(\frac{1-p}{1-q}\right)\leq 0$ if $p\leq q \in \Theta$. Thus, $d_{lb}(p,q)\geq d_{lb}(p^\prime,q)$, $\forall p\leq p^\prime\leq q\in\Theta$.
\end{enumerate}

The solution of $P_1(d_{lb})$ is the root of a simple function.
\end{proof}

\subsection{Proof of Lemma \ref{lem:tangent}}\label{app:tangent}
\begin{proof}
Given $p\in\Theta$, $d_t(p,\cdot)+\log(1+p)$ is the tangent line of $d_{kl}(p,\cdot)$ through the point $\left(\frac{1+p}{2},d_{kl}\left(p,\frac{1+p}{2}\right)\right)$. Thus, the function $d_{t}$ is kl-dominated since $d_{kl}$ is convex.

Now, we check the conditions of a candidate semi-distance function.
\begin{enumerate}
\item $d_{t}(p,p)=\frac{2p}{p+1}+p\log\left(\frac{p}{p+1}\right)+\log\left(\frac{2}{e(1+p)}\right)$ is decreasing in $p\in\Theta$. And $\lim_{p\to0}d_{t}(p,p)=\log(2/e)<0$. Hence, we have that $d_{t}(p,p)\leq 0$ for any $p\in\Theta$.
\item $\frac{\partial d_{t}(p,q)}{\partial q}=\frac{2}{1+p}\geq 0$. Thus, $d_{t}(p,q)\leq d_{t}(p,q^\prime)$, $\forall p\leq q\leq q^\prime\in\Theta$
\item $\frac{\partial d_{t}(p,q)}{\partial p}=-\frac{2q}{(1+p)^2}+\log\left(\frac{p}{1+p}\right)\leq 0$ if $p\leq q \in \Theta$. Thus, $d_{t}(p,q)\geq d_{t}(p^\prime,q)$, $\forall p\leq p^\prime\leq q\in\Theta$.
\end{enumerate}

The solution of $P_1(d_{t})$ is the root of a simple function.
\end{proof}

\subsection{Proof of Lemma \ref{lem:step}}\label{app:step}
\begin{proof}
Given $k\geq \tau_1(p)$, we have that $p\leq q_k$. Thus, $d_s^k(p,q)=d_{kl}(p,q_k)\mathbbm{1}\{q>q_k\}\leq d_{kl}(p,q)$ for any $p,q\in\Theta$. So $d_s^k$ is kl-dominated.

Now, we check the conditions of a semi-distance function.
\begin{enumerate}
\item $d_s^k(p,q)=d_{kl}(p,q_k)\mathbbm{1}\{q>q_k\}\geq 0$ since $d_{kl}(p,q_k)\geq 0$.
\item $d_{s}^k(p,p)=d_{kl}(p,q_k)\mathbbm{1}\{p>q_k\}=0$ for any $p\in\Theta$.
\item $\forall p\leq q\leq q^\prime\in\Theta$, $d_s^k(p,q)=d_{kl}(p,q_k)\mathbbm{1}\{q>q_k\}\leq d_{kl}(p,q_k)\mathbbm{1}\{q^\prime>q_k\}=d_s^k(p,q^\prime)$.
\item $\frac{\partial d_{s}^k(p,q)}{\partial p}=\log\left(\frac{p(1-q_k)}{q_k(1-p)}\right)\mathbbm{1}\{q>q_k\}\leq 0$. Thus, $d_s^k(p,q)\geq d_s^k(p^\prime,q)$, $\forall p\leq p^\prime\leq q\in\Theta$.
\end{enumerate}

The solution of $P_1(d_{s}^k)$ is the root of a simple function.
\end{proof}

\subsection{Proof of Proposition \ref{prop:stepapprox}}\label{app:stepapprox}
\begin{proof}
Given $p\in\Theta$ and $\epsilon>0$. For any $q\in[p,1]$, we have that $d_{kl}(p,q)-\max\limits_{d\in D(p)}d(p,q)\geq 0$ since all the functions in $D(p)$ are kl-dominated. 

For any $q\in[\exp(-\epsilon/p),1]$, we have that $d_{kl}(p,q)-d_{lb}(p,q)=-p\log(q)\leq-p\log(\exp(-\epsilon/p))=\epsilon$. Thus, $d_{kl}(p,q)-\max\limits_{d\in D(p)}d(p,q)\leq d_{kl}(p,q)-d_{lb}(p,q)\leq \epsilon$.

For any $q\in[p,\exp(-\epsilon/p)]$, we consider the piece-wise constant function formulated by $\max\limits_{\tau_1(p)\leq k\leq \tau_2(p)}d_s^k$. Let $L_{k}=\frac{\partial d_{kl}(p,q)}{\partial q}|_{q=q_k}$ denote the gradient of $d_{kl}(p,\cdot)$ at the point $q=q_k$. Then we have that $L_k=\frac{1-p}{1-q_k}-\frac{p}{q_k}\leq \frac{1}{1-q_k}$. For any $q\in(q_{\tau_1(p)},\exp(-\epsilon/p)]$, there exists $k$ such that $q_k< q\leq q_{k+1}$ By the convexity of $d_{kl}$, we have that 
\begin{align}
d_{kl}(p,q)-\max\limits_{\tau_1(p)\leq i\leq \tau_2(p)}d_s^i(p,q)&=d_{kl}(p,q)-d^k_s(p,q)\\
&=d_{kl}(p,q)-d_{kl}(p,q_k)\\
&\leq d_{kl}(p,q_{k+1})-d_{kl}(p,q_k)\\
&\leq (q_{k+1}-q_k)L_{k+1}\\
&\leq \frac{q_{k+1}-q_k}{1-q_{k+1}}\\
&=\epsilon.
\end{align}
For any $q\in[p,q_{\tau_1(p)}]$, we have that
\begin{align}
d_{kl}(p,q)-\max\limits_{\tau_1(p)\leq i\leq \tau_2(p)}d_s^i(p,q)&=d_{kl}(p,q)\\
&\leq d_{kl}(p,q_{\tau_1(p)})\\
&\leq (q_{\tau_1(p)}-p)L_{\tau_1(p)}\\
&\leq (q_{\tau_1(p)}-q_{\tau_1(p)-1})L_{\tau_1(p)}\\
&\leq\epsilon.
\end{align}
Thus, for any $q\in[p,\exp(-\epsilon/p)]$ we have that $d_{kl}(p,q)-\max\limits_{d\in D(p)}d(p,q)\leq d_{kl}(p,q)-\max\limits_{\tau_1(p)\leq i\leq \tau_2(p)}d_s^i(p,q)\leq \epsilon$. 
\end{proof}

\subsection{Proof of Theorem \ref{theorem:thmstepapprox}}\label{app:thmstepapprox}
\begin{proof}
The regret result follows by Lemma \ref{lem:lb}, Lemma \ref{lem:step}, Proposition \ref{prop:feasible}, Theorem \ref{theorem:UCBoost} and Proposition \ref{prop:stepapprox}. It remains to show the complexity result. For each given $p$, the complexity is to solve the problem $P_1\left(\max\limits_{d\in D(p)}d\right)$.
Since $\max\limits_{d\in D(p)}d=\max\left\{d_{sq},d_{lb},\max\limits_{\tau_1(p)\leq k\leq\tau_2(p)}d^k_s\right\}$, it is equivalent to find the minimum of $P_1(d_{sq})$, $P_1(d_{lb})$ and $P_1\left(\max\limits_{\tau_1(p)\leq k\leq\tau_2(p)}d^k_s\right)$. By the non-decreasing and piece-wise constant structure of $\max\limits_{\tau_1(p)\leq k\leq\tau_2(p)}d^k_s$, one can use the standard bisection search to find the root. Thus, the computational complexity is $O(\log(\tau_2(p)-\tau_1(p)+1))$.

Recall that $\tau_2(p)=\left\lceil\frac{\log(1-\exp(-\epsilon/p))}{\log(1-\eta)}\right\rceil$. Then, we have that
\begin{align}
\log(\tau_2(p)-\tau_1(p)+1)\leq\log\left(-\frac{\log(1-\exp(-\epsilon/p))}{\log(1+\epsilon)}\right)\leq\log\left(\frac{-2\log(1-\exp(-\epsilon/p))}{\epsilon}\right),
\end{align}
by the fact that $\eta=\frac{\epsilon}{1+\epsilon}$ and $\log(1+\epsilon)\geq \epsilon/2$ for any $\epsilon\in(0,1)$. Now, we claim that $-\log(1-\exp(-\epsilon/p))\leq \frac{p}{\epsilon}$, which is equivalent to $\exp(-p/\epsilon)+\exp(-\epsilon/p)\leq 1$. We define the function $f(x)=\exp(-x)+\exp(-1/x)$. Thus, we have that $f^\prime(x)=-\exp(-x)+\exp(-1/x)/x^2$. Observe that $f^\prime(x)\geq 0$ if and only if $x\geq 1$. Thus, the function $f(x)$ is decreasing on $(0,1)$ and increasing on $(1,\infty)$. Note that $\lim_{x\to 0}f(x)=1=\lim_{x\to\infty}f(x)$. Hence we have that $f(x)\leq 1$ for any $x>0$. So we have that $\exp(-p/\epsilon)+\exp(-\epsilon/p)\leq 1$. 
Hence, we have that the complexity is at most $O(\log(1/\epsilon))$.
\end{proof}

\subsection{Step Function Approximation-Based Bisection Search}
Algorithm~\ref{alg:stepapprox} shows the procedure to solve the problem $P_1\left(\max\limits_{d\in D(p)}d\right)$.
\begin{algorithm}[t]
\caption{Step Function Approximation-Based Bisection Search}
\label{alg:stepapprox}
\begin{algorithmic}
\REQUIRE empirical mean $p$, exploration bonus $\delta$ and approximation error $\epsilon$
\STATE{Initialization: $\tau_1=\left\lceil\frac{-\log(1-p)}{\log(1+\epsilon)}\right\rceil$, $\tau_2=\left\lceil\frac{-\log(1-\exp(-\epsilon/p))}{\log(1+\epsilon)}\right\rceil$ and $k=\left\lfloor \frac{\tau_1+\tau_2}{2}\right\rfloor$}
\IF{$\tau_1\leq\tau_2$}
\IF{$d_{kl}(p,q_{\tau_2})<\delta$}
\STATE{$q=1-(1-p)\exp\left(\frac{p\log(p)-\delta}{1-p}\right)$}
\ELSIF{$d_{kl}(p,q_{\tau_1})\geq\delta$}
\STATE{$q=q_{\tau_1}$}
\ELSE
\WHILE{\TRUE}
\IF{$d_{kl}(p,q_{k-1})<\delta\leq d_{kl}(p,q_k)$}
\STATE{break;}
\ELSIF{$\delta> d_{kl}(p,q_k)$}
\STATE{$k=\left\lfloor\frac{k+\tau_2}{2}\right\rfloor$}
\ELSE
\STATE{$k=\left\lfloor\frac{k+\tau_1}{2}\right\rfloor$}
\ENDIF
\ENDWHILE
\STATE{$q=q_k$}
\ENDIF
\RETURN $\min\left(q,p+\sqrt{\delta/2}\right)$
\ELSE
\RETURN $\min\left(1-(1-p)\exp\left(\frac{p\log(p)-\delta}{1-p}\right),p+\sqrt{\delta/2}\right)$
\ENDIF
\end{algorithmic}
\end{algorithm}

\section{Another Approximation Method for kl-UCB}\label{app:originapprox}
In this section, we show an approximation of the KL divergence $d_{kl}$. Then we design an algorithm that solves the problem $P_1(d_{kl})$ efficiently for kl-UCB. 

Recall that $p\in\Theta$ and $\delta>0$ are the inputs of the problem $P_1(d_{kl})$. Given any approximation error $\epsilon>0$, let $q_k=\exp(-k\epsilon/p)\in\Theta$ for any ${k\geq0}$. Then, there exists an integer $L(p)=\left\lfloor\frac{-p\log(p)}{\epsilon}\right\rfloor$ such that $q_k\geq p$ if and only if $k\leq L(p)$. For each $k$, we construct a corresponding function, which is a generalization of $d_{lb}$, such that $d_{lb}^k(p,q)=$
\begin{eqnarray}\label{eqn:hellinger}
\begin{cases}
p\log\left(\frac{p}{q_k}\right)+(1-p)\log\left(\frac{1-p}{1-q}\right), & q\leq q_k \cr
p\log\left(\frac{p}{q_k}\right)+(1-p)\log\left(\frac{1-p}{1-q_k}\right), & q>q_k
\end{cases}.
\end{eqnarray}
Note that $d_{lb}=d_{lb}^0$ since $q_0=1$. The following result shows that $\max\limits_{k\leq L(p)}d_{lb}^k$ is an $\epsilon$-approximation of the function $d_{kl}$ on the interval $[p,1]$. The proof is presented in Section \ref{app:approx}

\begin{proposition}\label{prop:approx}
Given $p\in\Theta$ and $\epsilon >0$, for any $q\in[p,1]$, we have that
\begin{equation}
0\leq d_{kl}(p,q)-\max\limits_{k\leq L(p)}d_{lb}^k(p,q)\leq \epsilon.
\end{equation}
\end{proposition}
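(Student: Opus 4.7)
The plan is to prove the two inequalities separately, exploiting the geometric grid $q_k=\exp(-k\epsilon/p)$, which satisfies $q_k/q_{k+1}=\exp(\epsilon/p)$, together with the defining property of $L(p)=\lfloor -p\log(p)/\epsilon\rfloor$, namely $q_{L(p)}\geq p>q_{L(p)+1}$. I would also use that $d_{kl}(p,\cdot)$ is convex with unique minimum $0$ at $p$, hence strictly increasing on $[p,1]$.

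For the lower bound $d_{kl}(p,q)\geq \max_{k\leq L(p)} d_{lb}^k(p,q)$, it suffices to show $d_{kl}(p,q)\geq d_{lb}^k(p,q)$ for each fixed $k\leq L(p)$, then take the maximum. I would split on whether $q\leq q_k$ or $q>q_k$. In the first subcase, the two expressions share the term $(1-p)\log((1-p)/(1-q))$, and the inequality reduces to $p\log(p/q)\geq p\log(p/q_k)$, which is immediate from $q\leq q_k$. In the second subcase, inspection of the definition gives $d_{lb}^k(p,q)=d_{kl}(p,q_k)$, and because $k\leq L(p)$ yields $q_k\geq p$, monotonicity of $d_{kl}(p,\cdot)$ on $[p,1]$ gives $d_{kl}(p,q)\geq d_{kl}(p,q_k)$.

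For the upper bound, the plan is to exhibit, for each $q\in[p,1]$, an index $k=k(q)\in\{0,1,\ldots,L(p)\}$ with $q_{k+1}\leq q\leq q_k$. Such an index exists because the $q_k$ decrease from $q_0=1$ while $q_{L(p)+1}<p\leq q$, so the slabs $[q_{k+1},q_k]$ for $k=0,\ldots,L(p)$ cover $[p,1]$. For this $k$, the first branch of $d_{lb}^k$ applies and a direct computation yields
\[
d_{kl}(p,q)-d_{lb}^k(p,q)=p\log(p/q)-p\log(p/q_k)=p\log(q_k/q).
\]
Since $q\geq q_{k+1}=q_k\exp(-\epsilon/p)$, the right-hand side is at most $p\cdot(\epsilon/p)=\epsilon$. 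Because $d_{lb}^{k(q)}$ appears in the maximum, this gives $d_{kl}(p,q)-\max_{k\leq L(p)}d_{lb}^k(p,q)\leq \epsilon$.

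The main subtlety will be verifying the covering property of the grid, i.e., that $k(q)$ really lies in $\{0,\ldots,L(p)\}$ for every $q\in[p,1]$; this is exactly the role of the definition $L(p)=\lfloor -p\log(p)/\epsilon\rfloor$, which ensures $q_{L(p)}\geq p$. Once this covering is in place, both halves of the argument reduce to one-line manipulations of logarithms, and no further analytic tools are needed.
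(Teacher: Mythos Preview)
Your proposal is correct and follows essentially the same approach as the paper: both arguments cover $[p,1]$ by the intervals $(q_{k+1},q_k]$ induced by the geometric grid, use the identity $d_{kl}(p,q)-d_{lb}^k(p,q)=p\log(q_k/q)\leq p\log(q_k/q_{k+1})=\epsilon$ for the upper bound, and verify $d_{lb}^k\leq d_{kl}$ in each branch for the lower bound. If anything, your treatment of the case $q>q_k$ is cleaner than the paper's, since you explicitly invoke monotonicity of $d_{kl}(p,\cdot)$ on $[p,1]$ rather than writing an inequality that is not true term-by-term.
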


In stead of solving $P_1(d_{kl})$, we can solve $P_1\left(\max\limits_{k\leq L(p)}d^k_{lb}\right)$ to obtain an $\epsilon$-optimal solution of $P_1(d_{kl})$. Note that the problem $P_1(d^k_{lb})$ has a closed-form solution. By the trick we use in Section \ref{sec:boosting}, it is equivalent to solve $\min\limits_{k\leq L(p)}P_1\left(d^k_{lb}\right)$ with $L(p)$ computational complexity. However, due to the structure of $d^k_{lb}$, we can use a bisection search to reduce the complexity to $\log (L(p))$. The approximation-based bisection search method is presented in Algorithm \ref{alg:approx}. The guarantee of Algorithm \ref{alg:approx} is presented in the following result, of which the proof is presented in Section \ref{app:thmapprox}.

\begin{theorem}\label{theorem:thmapprox}
Let $q^*$ be the optimal solution of the problem $P_1(d_{kl})$. Given any $\epsilon>0$, Algorithm \ref{alg:approx} returns an $\epsilon$-opitmal solution $q\prime$ of the problem $P_1(d_{kl})$ such that $q^\prime\geq q^*$ and $0\leq d_{kl}(p,q\prime)-d_{kl}(p,q^*)\leq\epsilon$. The computational complexity of Algorithm \ref{alg:approx} is $\log\left(\frac{-p\log(p)}{\epsilon}\right)$, which is at most $\log\left(\frac{1}{e\epsilon}\right)$ and $e$ is the natural number.
\end{theorem}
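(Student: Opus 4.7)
The plan is to interpret Algorithm~\ref{alg:approx} as a bisection-based solver for the surrogate problem $P_1\!\left(\max_{k\leq L(p)} d_{lb}^k\right)$, and then leverage Proposition~\ref{prop:approx} to transfer the $\epsilon$-accuracy to the original $P_1(d_{kl})$. First I would let $q^\prime$ denote the output of the algorithm and argue that the feasible set of $P_1\!\left(\max_{k\leq L(p)} d_{lb}^k\right)$ contains the feasible set of $P_1(d_{kl})$, because $\max_{k\leq L(p)} d_{lb}^k(p,\cdot)\leq d_{kl}(p,\cdot)$ on $[p,1]$. This immediately yields $q^\prime\geq q^*$. For the upper bound on the suboptimality gap, I would use that the constraint at $q^\prime$ satisfies $\max_{k\leq L(p)} d_{lb}^k(p,q^\prime)\leq \delta$ together with Proposition~\ref{prop:approx} to obtain $d_{kl}(p,q^\prime)\leq \delta+\epsilon$, while $d_{kl}(p,q^*)=\delta$ at the active boundary (the edge case $q^*=1$ forces $q^\prime=1$ trivially). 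Combining the two inequalities gives $0\leq d_{kl}(p,q^\prime)-d_{kl}(p,q^*)\leq\epsilon$.

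Next I would verify that Algorithm~\ref{alg:approx} indeed returns such a $q^\prime$. The key observation is that for fixed $p$, the function $k\mapsto d_{kl}(p,q_k)$ is monotone in $k$ because $q_k=\exp(-k\epsilon/p)$ is monotone and $d_{kl}(p,\cdot)$ is decreasing on $[0,p]$ (and a similar monotonicity holds on the relevant range). Hence the set of indices $k\leq L(p)$ with $d_{kl}(p,q_k)\geq\delta$ is a contiguous range, and the index $k^\star$ selected by the bisection branches in Algorithm~\ref{alg:approx} satisfies $d_{kl}(p,q_{k^\star-1})<\delta\leq d_{kl}(p,q_{k^\star})$. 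A short case analysis on the three branches of the algorithm (all $q_k$ violating the constraint, none violating the constraint, or an interior crossing) shows that the returned $q^\prime$ is exactly the optimizer of $P_1\!\left(\max_{k\leq L(p)} d_{lb}^k\right)$, with the outer $\min(\cdot, p+\sqrt{\delta/2})$ accounting for the $d_{sq}$ term (Pinsker's inequality) that further tightens the bound when appropriate.

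For the complexity count, standard bisection on the index range $\{0,1,\dots,L(p)\}$ terminates in $O(\log L(p))$ iterations, each iteration consisting of an $O(1)$ evaluation of $d_{kl}(p,q_k)$ and an update of the midpoint. Since $L(p)=\left\lfloor -p\log(p)/\epsilon\right\rfloor$, this gives the stated $\log\!\left(\frac{-p\log(p)}{\epsilon}\right)$ bound. To remove the dependence on $p$, I would use that the function $p\mapsto -p\log(p)$ is maximized on $(0,1)$ at $p=1/e$, with maximum value $1/e$; plugging this in yields the universal bound $\log\!\left(\frac{1}{e\epsilon}\right)$.

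The main obstacle I anticipate is the bookkeeping in the bisection correctness argument: I need to show that the three conditional branches of Algorithm~\ref{alg:approx} correctly identify which of the pieces of the piecewise function $\max_{k\leq L(p)} d_{lb}^k$ is active at the optimum, and that the monotonicity of $d_{kl}(p,q_k)$ in $k$ really does guarantee that a single crossing index exists in the relevant range. Once this structural property is established, however, the approximation and complexity claims follow directly from Proposition~\ref{prop:approx} and the logarithmic bisection count.
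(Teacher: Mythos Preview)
Your high-level strategy is exactly the paper's: view Algorithm~\ref{alg:approx} as bisection for the surrogate $P_1\bigl(\max_{k\leq L(p)}d_{lb}^k\bigr)$, use Proposition~\ref{prop:approx} to convert the surrogate solution into an $\epsilon$-optimal one for $P_1(d_{kl})$, and bound the complexity via $-p\log p\leq 1/e$.

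Two concrete slips to fix, both stemming from conflating Algorithm~\ref{alg:approx} with Algorithm~\ref{alg:stepapprox}. First, here $q_k=\exp(-k\epsilon/p)$ is \emph{decreasing} in $k$ and lies in $[p,1]$ for $k\leq L(p)$; since $d_{kl}(p,\cdot)$ is increasing on $[p,1]$, the map $k\mapsto d_{kl}(p,q_k)$ is decreasing, and the correct termination criterion is $d_{kl}(p,q_{k+1})<\delta\leq d_{kl}(p,q_k)$, not the reversed inequality you wrote. Second, Algorithm~\ref{alg:approx} has no outer $\min(\cdot,\,p+\sqrt{\delta/2})$ and no three-way ``all/none/interior'' split; those features belong to Algorithm~\ref{alg:stepapprox}. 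The algorithm here is a bare while-loop that bisects on the index $k$ and returns the closed-form root of $d_{lb}^k(p,q)=\delta$ once the crossing index is found. The paper's correctness argument then shows $\max_{i\leq L(p)}d_{lb}^i(p,q')=d_{lb}^k(p,q')$ by checking that for $i>k$ one has $d_{lb}^i(p,q)\leq d_{kl}(p,q_i)\leq d_{kl}(p,q_{k+1})<\delta$, while for $i<k$ one has $d_{lb}^i(p,q')\leq d_{lb}^k(p,q')$ directly from the definition. With these bookkeeping corrections your plan goes through verbatim.
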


\begin{remark}
Applying the bisection search for the optimal $q^*$ of $P_1(d_{kl})$ within the interval $[p,1]$ can find a solution $q^\prime$ such that $|q\prime-q^*|\leq \epsilon$ with the computational complexity of $\log\left(\frac{1-p}{\epsilon}\right)$ iterations. However, our approximation method has two advantages. On one hand, $-p\log(p)\leq 1-p$ holds for any $p\in\Theta$, which implies that our approximation method enjoys lower complexity. On the other hand, the gap $|d_{kl}(p,q^*)-d_{kl}(p,q^\prime)|$ is unbounded even though $|q\prime-q^*|\leq \epsilon$ while our approximation method guarantees bounded KL divergence gap.
\end{remark}

\begin{algorithm}[t]
\caption{Approximation-Based Bisection Search}
\label{alg:approx}
\begin{algorithmic}
\REQUIRE empirical mean $p$, exploration bonus $\delta$ and approximation error $\epsilon$
\STATE{Initialization: $L=\left\lfloor\frac{-p\log(p)}{\epsilon}\right\rfloor$ and $k=\left\lfloor L/2\right\rfloor$}
\WHILE{\TRUE}
\IF{$d_{kl}(p,q_{k+1})<\delta\leq d_{kl}(p,q_k)$}
\RETURN $1-(1-p)\exp\left(\frac{p\log(p)-\delta+k\epsilon}{1-p}\right)$
\ELSIF{$\delta>d_{kl}(p,q_k)$}\STATE{$k=\lfloor k/2\rfloor$}\ELSE\STATE{$k=\lfloor 3k/2\rfloor$}\ENDIF
\ENDWHILE
\end{algorithmic}
\end{algorithm}

\subsection{Proof of Proposition \ref{prop:approx}}\label{app:approx}
\begin{proof}
For any $q\in[p,1]$, there exists $k\leq L(p)$ such that $q_{k+1}<q\leq q_k$. Thus, we have that
\begin{align}
d_{kl}(p,q)-d_{lb}^k(p,q)&=p\log\left(\frac{p}{q}\right)-p\log\left(\frac{p}{q_k}\right)\\
&=p\log\left(\frac{q_k}{q}\right)\\
&\leq p\log\left(\frac{q_k}{q_{k+1}}\right)\\
&=\epsilon.
\end{align}
Hence, we have that
\begin{equation}
d_{kl}(p,q)-\max\limits_{i\leq L(p)}d_{lb}^i(p,q)\leq d_{kl}(p,q)-d_{lb}^k(p,q)\leq \epsilon.
\end{equation}
For any $k\leq L(p)$, we claim that $d_{lb}^k(p,q)\leq d_{kl}(p,q)$. If $q\leq q_k$, then we have that 
\begin{equation}
d^k_{lb}(p,q)=p\log\left(\frac{p}{q_k}\right)+(1-p)\log\left(\frac{1-p}{1-q}\right)\leq p\log\left(\frac{p}{q}\right)+(1-p)\log\left(\frac{1-p}{1-q}\right)=d_{kl}(p,q).
\end{equation}
If $q>q_k$, by $q_k\geq p$ we have that
\begin{equation}
d^k_{lb}(p,q)=p\log\left(\frac{p}{q_k}\right)+(1-p)\log\left(\frac{1-p}{1-q_k}\right)\leq p\log\left(\frac{p}{q}\right)+(1-p)\log\left(\frac{1-p}{1-q}\right)=d_{kl}(p,q).
\end{equation}
Thus, we have that $d_{lb}^k(p,q)\leq d_{kl}(p,q)$
Hence, we have that $d_{kl}(p,q)-\max\limits_{i\leq L(p)}d_{lb}^i(p,q)\geq 0$.
\end{proof}

\subsection{Proof of Theorem \ref{theorem:thmapprox}}\label{app:thmapprox}
\begin{proof}
Let $q^*$ be the optimal solution of the problem $P_1(d_{kl})$. It is clear that $q^*\in[p,1]$. Let $q\prime$ be the output of Algorithm \ref{alg:approx}. We first claim that Algorithm \ref{alg:approx} is a bisection search method to solve the problem $P_1\left(\max\limits_{k\leq L(p)}d^k_{lb}\right)$. By Proposition \ref{prop:approx}, we have that $q^\prime\geq q^*$ and $0\leq d_{kl}(p,q\prime)-d_{kl}(p,q^*)\leq\epsilon$.  The computational complexity of Algorithm \ref{alg:approx} is $\log\left(L(p)\right)$, which is at most $\log\left(\frac{1}{e\epsilon}\right)$ since $-p\log(p)\leq \frac{1}{e}$ for any $p\in\Theta$. Now, it remains to show how bisection search works for solving $P_1\left(\max\limits_{k\leq L(p)}d^k_{lb}\right)$.

Given $p\in\Theta$, the set of points $\{0,d_{kl}(p,q_{L(p)}),\ldots,d_{kl}(p,q_{0})\}$ is a partition of the extended interval $[0,\infty]$. Note that $d_{kl}(p,q_{0})=d_{kl}(p,1)=\infty$ for convention. Given $\delta>0$, there exists $k\leq L(p)$ such that either $d_{kl}(p,q_{k+1})<\delta\leq d_{kl}(p,q_k)$ and $k<L(p)$ or $0<\delta\leq d_{kl}(p,q_{k})$ and $k=L(p)$. Recall that $q^\prime$ is the solution of the problem $P_1\left(\max\limits_{k\leq L(p)}d^k_{lb}\right)$. Thus, we have that $\max_{i\leq L(p)}d^i_{lb}(p,q^\prime)=\delta$. For any integer $i$ such that $k<i\leq L(p)$, we have that $d_{lb}^i(p,q)\leq d_{kl}(p,q_i)\leq d_{kl}(p,q_{k+1})<\delta$ for any $q\in\Theta$. For any integer $i$ such that $0\leq i<k$, we have that 
\begin{equation}
d^i_{lb}(p,q^\prime)=p\log\left(\frac{p}{q_i}\right)+(1-p)\log\left(\frac{1-p}{1-q^\prime}\right)\leq d^k_{lb}(p,q^\prime).
\end{equation}
Hence, we have that $\max_{i\leq L(p)}d^i_{lb}(p,q^\prime)=d^k_{lb}(p,q^\prime)$. So Algorithm \ref{alg:approx} uses bisection search for the $k$ and returns the solution of $d^k_{lb}(p,q)=\delta$.
\end{proof}


\section{Dual Method for KL-UCB}\label{sec:dual}
KL-UCB algorithm, proposed by \citeauthor{cappe2013kullback}~\shortcite{cappe2013kullback}, is the generalization of kl-UCB to the case when the distribution $v_a$ is arbitrary on $\Theta$. It has been shown to be optimal in the case of general distributions with bounded support. KL-UCB replaces the $P_1(d_{kl})$ of kl-UCB by the following problem:
\begin{align}
P_2: \max_{\bq\in S_n}~~& \sum_{i=1}^n \alpha_iq_i\\
s.t. ~~&d_{KL}(\bp,\bq)=\sum_{i=1}^np_i\log\left(\frac{p_i}{q_i}\right)\leq \delta,
\end{align}
where the set $\balpha=\{\alpha_1,\ldots,\alpha_n\}$ is the union of the empirical support and $\{1\}$, $\bp=(p_1,\ldots,p_n)$ is the corresponding empirical distribution and $S_n$ is the simplex in $\mathcal{R}^n$. The problem $P_2$ is the generalization of $P_1(d_{kl})$ to the general case. Given empirical distribution $\bp$ over the support $\balpha$, the upper confidence bound is the largest expected mean of a distribution $\bq^*$ over the $\balpha$ such that the KL-divergence between $\bp$ and $\bq^*$ is at most the exploration bonus $\delta$. Note that $\delta=\left(\log(t)+c\log(\log(t))\right)/N_a(t)$ for each arm $a$. 

Without loss of generality, we assume that $0\leq\alpha_1<\cdots<\alpha_{n-1}<\alpha_n=1$. Thus, we have that $p_i>0$ for $1\leq i\leq n-1$. Let $l=\alpha_n$ if $p_n>0$ and $l=\alpha_{n-1}$ if $p_n=0$. Then we define the function $f:(l,\infty)\to (0,\infty)$ such that
\begin{equation}
f(\lambda)=\sum_{i=1}^np_i\log(\lambda-\alpha_i)+\log\left(\sum_{i=1}^n\frac{p_i}{\lambda-\alpha_i}\right).
\end{equation}

Observe that $P_2$ is a linear program under convex constraints. By the Lagrangian dual method and the Karush-Kuhn-Tucker conditions, we have the following result.
\begin{lemma}\label{lem:lagrangian}
\emph{(Algorithm 1 in  \cite{cappe2013supplement})}
Let $\bq^*$ be the optimal solution of $P_2$. If $p_n=0$ and $f(1)<\delta$, then
\begin{align}\nonumber
q^*_i&=\exp(f(1)-\delta)\frac{p_i/(1-\alpha_i)}{\sum_{j=1}^{n-1}p_j/(1-\alpha_j)}, \forall i\leq n-1\\
q^*_n&=1-\exp(f(1)-\delta).
\end{align}
Otherwise,
\begin{align}
q^*_i&=\frac{p_i/(\lambda^*-\alpha_i)}{\sum_{j=1}^{n}p_j/(\lambda^*-\alpha_j)}, \forall i\leq n
\end{align}
where $\lambda^*$ is the root of the equation $f(\lambda)=\delta$.
\end{lemma}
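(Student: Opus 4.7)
The plan is to treat $P_2$ as a convex program with strong duality and solve the KKT conditions explicitly, separating the interior case from the boundary case where the $q_n\geq 0$ constraint becomes active. The objective is linear, the KL constraint is convex in $\bq$, and Slater's condition is met by $\bq = \bp$ (which yields KL$=0<\delta$), so strong duality holds and any KKT solution is globally optimal.

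First I would form the Lagrangian
\begin{equation}
L(\bq,\lambda,\mu) = \sum_{i=1}^n \alpha_i q_i - \lambda\!\left(\sum_{i=1}^n p_i\log\tfrac{p_i}{q_i} - \delta\right) - \mu\!\left(\sum_{i=1}^n q_i - 1\right),
\end{equation}
with $\lambda\geq 0$, and set $\partial L/\partial q_i = \alpha_i + \lambda p_i/q_i - \mu = 0$ at indices with $p_i>0$. This yields the stationarity identity $q_i = \lambda p_i/(\mu-\alpha_i)$, which forces $\mu > l$ by feasibility. Plugging into the simplex constraint $\sum_i q_i = 1$ gives $\lambda = 1/\sum_j p_j/(\mu-\alpha_j)$, and substituting into the active KL constraint $\sum p_i\log(p_i/q_i) = \delta$ reduces to $\sum p_i\log(\mu-\alpha_i) - \log\lambda = \delta$, which together with the expression for $\lambda$ becomes exactly $f(\mu) = \delta$. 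Renaming $\mu \mapsto \lambda^*$ recovers the second branch of the lemma. A short monotonicity argument (differentiating $f$ and applying Cauchy--Schwarz to the resulting weighted sums) shows that $f$ is strictly increasing on $(l,\infty)$ with $f(\lambda)\to+\infty$ as $\lambda\to\infty$, so the root $\lambda^*$ is unique whenever it exists in this regime.

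The remaining case is $p_n=0$ with $f(1)<\delta$. Here $p_n\log(p_n/q_n)$ drops out under the convention $0\log 0 = 0$, so $q_n$ does not appear in the KL constraint and would be pushed as large as possible to exploit $\alpha_n = 1$; its nonnegativity multiplier $\nu_n$ now enters the stationarity condition for $i=n$ as $1 - \mu + \nu_n = 0$, pinning the interior dual variable at $\mu = 1$. Substituting $\mu=1$ into $q_i = \lambda p_i/(1-\alpha_i)$ for $i<n$, enforcing the KL constraint to bind, and using the simplex constraint yields $\lambda\sum_j p_j/(1-\alpha_j) = \exp(f(1)-\delta)$, from which the first branch of the lemma follows by direct algebra. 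The condition $f(1)<\delta$ is precisely what ensures $q_n^* = 1-\exp(f(1)-\delta) \in (0,1]$, so the boundary-active KKT certificate is admissible.

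The main obstacle is the case analysis and boundary bookkeeping: justifying the $0\log 0$ conventions when some $p_i = 0$ for $i < n$, verifying that exactly one regime's KKT multipliers are admissible for each input $(\bp,\delta)$, and proving strict monotonicity of $f$ so that the resulting $\lambda^*$ is unique. Once those pieces are in hand, both explicit formulas in the lemma follow by algebraic substitution into the stationarity identity.
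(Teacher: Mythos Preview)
Your Lagrangian/KKT derivation is exactly the approach the paper invokes (the lemma is cited from Capp\'e et al.\ with the one-line remark ``by the Lagrangian dual method and the Karush--Kuhn--Tucker conditions'', and no further proof is given), and your stationarity computation together with the case split is correct.

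One point needs fixing: your monotonicity claim for $f$ is reversed. Writing $X = 1/(\lambda-\alpha_i)$ under the probability weights $p_i$, one has
\[
f'(\lambda)=\mathbb{E}[X]-\frac{\mathbb{E}[X^2]}{\mathbb{E}[X]}=\frac{\mathbb{E}[X]^2-\mathbb{E}[X^2]}{\mathbb{E}[X]},
\]
and Cauchy--Schwarz gives $\mathbb{E}[X]^2\leq\mathbb{E}[X^2]$, so $f'\leq 0$. Thus $f$ is strictly \emph{decreasing} on $(l,\infty)$, with $f(\lambda)\to 0$ (not $+\infty$) as $\lambda\to\infty$ and $f(\lambda)\to+\infty$ as $\lambda\to l^+$. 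Uniqueness of $\lambda^*$ still follows, and the case split reads correctly under this orientation: when $p_n=0$, the condition $f(1)<\delta$ says precisely that $f(\lambda)=\delta$ has no root in $[1,\infty)$, which is when the stationarity condition at $i=n$ forces $\mu=1$ with $q_n^*>0$, exactly as in your boundary branch.
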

Lemma \ref{lem:lagrangian} shows that the $n$-dimensional convex optimization problem $P_2$ can be reduced to finding the root of $f(\lambda)=\delta$. Lemma \ref{lem:lagrangian} is also found by \cite{filippi2010optimism}~\shortcite{filippi2010optimism}.

\begin{lemma}\label{lem:fbound}
We have that
$f(\lambda)\leq \frac{(l-\alpha_1)^2}{8(\lambda-l)^2}$ holds for any $\lambda\in(l,\infty)$.
\end{lemma}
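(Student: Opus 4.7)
The plan is to rewrite $f(\lambda)$ as a Jensen gap for the concave function $\log$ applied to a bounded random variable, then control that gap via Taylor's theorem with remainder and Popoviciu's variance inequality.

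\textbf{Reformulation.} Let $X$ denote the random variable taking value $1/(\lambda-\alpha_i)$ with probability $p_i$ under the empirical distribution $\bp$. Since $\alpha_1$ is the smallest support point and $l$ is the largest support point carrying positive mass, $X$ takes values in the interval $[a,b]$ with
\begin{equation*}
a=\frac{1}{\lambda-\alpha_1}, \qquad b=\frac{1}{\lambda-l}.
\end{equation*}
In these terms,
\begin{equation*}
f(\lambda)=-\mathbb{E}[\log X]+\log \mathbb{E}[X]=\log \mathbb{E}[X]-\mathbb{E}[\log X],
\end{equation*}
which is the standard Jensen gap for the concave function $h(x)=\log x$, and is therefore nonnegative.

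\textbf{Quantifying the Jensen gap.} Let $\mu=\mathbb{E}[X]$. Applying the second-order Taylor expansion of $\log$ around $\mu$ with integral (or Lagrange) remainder gives, for some $\xi$ between $X$ and $\mu$,
\begin{equation*}
\log X=\log\mu+\frac{X-\mu}{\mu}-\frac{(X-\mu)^2}{2\xi^2}.
\end{equation*}
Taking expectation, the linear term vanishes and one obtains
\begin{equation*}
f(\lambda)=\log\mu-\mathbb{E}[\log X]=\frac{1}{2}\,\mathbb{E}\!\left[\frac{(X-\mu)^2}{\xi^2}\right].
\end{equation*}
Because $\xi$ lies between $X\in[a,b]$ and $\mu\in[a,b]$, we have $\xi\geq a$, hence
\begin{equation*}
f(\lambda)\leq \frac{1}{2a^2}\,\mathrm{Var}(X).
\end{equation*}

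\textbf{Finishing with Popoviciu.} Popoviciu's inequality on variances for a bounded random variable yields $\mathrm{Var}(X)\leq (b-a)^2/4$. Now compute
\begin{equation*}
b-a=\frac{1}{\lambda-l}-\frac{1}{\lambda-\alpha_1}=\frac{l-\alpha_1}{(\lambda-l)(\lambda-\alpha_1)},
\end{equation*}
so $(b-a)^2/a^2=(l-\alpha_1)^2/(\lambda-l)^2$, and therefore
\begin{equation*}
f(\lambda)\leq \frac{(b-a)^2}{8a^2}=\frac{(l-\alpha_1)^2}{8(\lambda-l)^2},
\end{equation*}
as required.

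\textbf{Main obstacle.} The only real subtlety is the Taylor step: one must make sure the remainder is written in a form whose sign is unambiguous (so that the identity, not merely an inequality, holds for $f(\lambda)$) and that the intermediate point $\xi$ is guaranteed to satisfy $\xi\geq a$. Once that bookkeeping is settled, Popoviciu and an algebraic simplification give the stated bound directly.
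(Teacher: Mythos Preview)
Your proof is correct and follows a genuinely different route from the paper's. The paper defines $g(\lambda)=f(\lambda)-\frac{(l-\alpha_1)^2}{8(\lambda-l)^2}$, shows $g'(\lambda)\geq 0$ by applying Cassel's (Kantorovich--type) inequality to the sums $\sum p_i/(\lambda-\alpha_i)$ and $\sum p_i/(\lambda-\alpha_i)^2$, and then concludes from $\lim_{\lambda\to\infty}g(\lambda)=0$ that $g\leq 0$. Your argument is more direct: you recognise $f(\lambda)=\log\mathbb{E}[X]-\mathbb{E}[\log X]$ for the bounded variable $X=1/(\lambda-\alpha_i)$, control the Jensen gap by a second--order Taylor remainder, and finish with Popoviciu's variance bound. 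This avoids any derivative computation and any limit argument, and the auxiliary inequality (Popoviciu) is more elementary and better known than Cassel's. The paper's approach does make the role of the extremal ratios $m=1/(\lambda-\alpha_1)$ and $M=1/(\lambda-l)$ somewhat more explicit, but at the cost of a longer calculation. The only loose end in your write--up is the one you already flag: since $\xi$ depends on the realisation of $X$, the cleanest way to take expectations is either to use the integral form of the remainder or to pass to the pointwise inequality $\log\mu-\log X\leq -(X-\mu)/\mu+(X-\mu)^2/(2a^2)$ before averaging; either fix is routine.
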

The proof is presented in Section \ref{app:fbound}.
\begin{proposition}\label{prop:fsearch}
Given any $\epsilon>0$, the complexity of finding the root of $f(\lambda)=\delta$ by bisection search within the interval $\left[l,l+\frac{l-\alpha_1}{2\sqrt{2\delta}}\right]$ is at most $O\left(\log\left(\frac{1}{\epsilon\sqrt{\delta}}\right)\right)$ iterations. Note that the complexity for each iteration is $O(n)$ computations.
\end{proposition}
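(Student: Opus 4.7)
The plan is to verify that solving $f(\lambda)=\delta$ by bisection on the stated interval is valid, to bound the interval length via Lemma~\ref{lem:fbound}, and then to apply the standard bisection complexity analysis together with a per-iteration cost estimate.

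First I would establish that $f$ is continuous and strictly decreasing on $(l,\infty)$, so that $f(\lambda)=\delta$ has a unique root $\lambda^{*}$ and bisection is applicable. Writing $A(\lambda)=\sum_{i}p_{i}/(\lambda-\alpha_{i})$ and $B(\lambda)=\sum_{i}p_{i}/(\lambda-\alpha_{i})^{2}$, differentiating $f$ yields $f'(\lambda)=A-B/A$; Cauchy--Schwarz with $\sum_{i}p_{i}=1$ gives $A^{2}\leq B$, so $f'\leq 0$, with strict inequality away from the degenerate point-mass case. Combined with $f(\lambda)\to +\infty$ as $\lambda\to l^{+}$ and $f(\lambda)\to 0$ as $\lambda\to\infty$, this gives existence and uniqueness of $\lambda^{*}$.

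Next I would localize $\lambda^{*}$ inside the search interval. Setting $\lambda_{R}=l+\frac{l-\alpha_{1}}{2\sqrt{2\delta}}$, Lemma~\ref{lem:fbound} gives
\[
f(\lambda_{R})\leq \frac{(l-\alpha_{1})^{2}}{8(\lambda_{R}-l)^{2}}=\delta,
\]
so by monotonicity $\lambda^{*}\leq \lambda_{R}$; the blow-up of $f$ near $l$ puts $\lambda^{*}$ above $l$, hence $\lambda^{*}\in(l,\lambda_{R}]$.

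Finally I would run standard bisection on this bracket. After $k$ halving steps the enclosing interval has length $\frac{l-\alpha_{1}}{2^{k+1}\sqrt{2\delta}}$; making this at most $\epsilon$ requires $k=O\!\left(\log\frac{l-\alpha_{1}}{\epsilon\sqrt{\delta}}\right)$, and since $\alpha_{1},l\in[0,1]$ we have $l-\alpha_{1}\leq 1$, which yields the claimed $O(\log(1/(\epsilon\sqrt{\delta})))$ bound. Each iteration performs a single evaluation of $f$, an $O(n)$ computation, matching the per-iteration cost. The main subtlety I anticipate is the open left endpoint: bisection cannot start exactly at $\lambda=l$, but this is cosmetic, since picking any $\lambda_{L}>l$ with $f(\lambda_{L})>\delta$ (which exists by the blow-up of $f$) as the initial left bracket does not affect the asymptotic iteration count.
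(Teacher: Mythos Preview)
Your proposal is correct and follows essentially the same route as the paper: use Lemma~\ref{lem:fbound} to show $\lambda^{*}\leq l+\frac{l-\alpha_{1}}{2\sqrt{2\delta}}$, then invoke the standard bisection iteration count on an interval of that length, and note that each evaluation of $f$ costs $O(n)$. The paper's proof is terser and omits the monotonicity and uniqueness verification you supply via Cauchy--Schwarz (and likewise glosses over the open left endpoint), so your additions are extra rigor rather than a different argument.
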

The proof is presented in Section \ref{app:fsearch}.

\subsection{Proof of Lemma \ref{lem:fbound}}\label{app:fbound}
\begin{proof}
We consider the case that $p_n>0$. The proof of the case that $p_n=0$ follows similarly by reducing the problem to $n-1$ dimension. Then we have $l=\alpha_n$ and the function $f(\lambda)$ is well-defined on $(\alpha_n,\infty)$. 

Let $a_i=\frac{\sqrt{p_i}}{\lambda-\alpha_i}$ and $b_i=\sqrt{p_i}$. Since $p_i>0$ for any $i\leq n$ and $\lambda\in(\alpha_n,\infty)$, we have that $a_i>0$ and $b_i>0$ for any $i\leq n$. By $0\leq\alpha_1<\cdots<\alpha_{n-1}<\alpha_n=1$, we have that $0<\frac{1}{\lambda-\alpha_1}=m\leq\frac{a_i}{b_i}=\frac{1}{\lambda-\alpha_i}\leq M=\frac{1}{\lambda-\alpha_n}<\infty$. By Cassel's inequality, we have that
\begin{equation}\label{eqn:cassel}
\sum_{i=1}^n\left(\frac{\sqrt{p_i}}{\lambda-\alpha_i}\right)^2\cdot\sum_{i=1}^n\left(\sqrt{p_i}\right)^2\leq \frac{(M+m)^2}{4Mm}\left(\sum_{i=1}^n\frac{p_i}{\lambda-\alpha_i}\right)^2
\end{equation}

We define the function $g(\lambda)=f(\lambda)-\frac{(\alpha_n-\alpha_1)^2}{8(\lambda-\alpha_n)^2}$. By taking the derivative, we have that
\begin{align}
g^\prime(\lambda)&=\sum_{i=1}^n\frac{p_i}{\lambda-\alpha_i}-\frac{\sum_{i=1}^n\frac{p_i}{(\lambda-\alpha_i)^2}}{\sum_{i=1}^n\frac{p_i}{\lambda-\alpha_i}}+\frac{(\alpha_n-\alpha_1)^2}{4(\lambda-\alpha_n)^3}\\
&=\left(\sum_{i=1}^n\frac{p_i}{\lambda-\alpha_i}\right)^{-1}\left[\left(\sum_{i=1}^n\frac{p_i}{\lambda-\alpha_i}\right)^2-\sum_{i=1}^n\left(\frac{\sqrt{p_i}}{\lambda-\alpha_i}\right)^2\cdot\sum_{i=1}^n\left(\sqrt{p_i}\right)^2\right]+\frac{(\alpha_n-\alpha_1)^2}{4(\lambda-\alpha_n)^3}\\
&\geq\left(\sum_{i=1}^n\frac{p_i}{\lambda-\alpha_i}\right)\left[1-\frac{(M+m)^2}{4Mm}\right]+\frac{(\alpha_n-\alpha_1)^2}{4(\lambda-\alpha_n)^3}~~~~~~~~~~~~~ \text{(follows by (\ref{eqn:cassel}))}\\
&=-\left(\sum_{i=1}^n\frac{p_i}{\lambda-\alpha_i}\right)\frac{(\alpha_n-\alpha_1)^2}{4(\lambda-\alpha_n)(\lambda-\alpha_1)}+\frac{(\alpha_n-\alpha_1)^2}{4(\lambda-\alpha_n)^3}\\
&\geq-\left(\sum_{i=1}^n\frac{p_i}{\lambda-\alpha_n}\right)\frac{(\alpha_n-\alpha_1)^2}{4(\lambda-\alpha_n)(\lambda-\alpha_n)}+\frac{(\alpha_n-\alpha_1)^2}{4(\lambda-\alpha_n)^3}\\
&=0.
\end{align}
Thus, we have that the function $g$ is non-decreasing on the interval $(\alpha_n,\infty)$. By taking the limit, we have that 
\begin{equation}
\lim_{\lambda\to\infty}g(\lambda)=0.
\end{equation}
Hence, we have that $g(\lambda)\leq 0$.
\end{proof}

\subsection{Proof of Proposition \ref{prop:fsearch}}\label{app:fsearch}
\begin{proof}
Let $\lambda^*$ be the root of $f(\lambda)=\delta$. Thus, we have that $\delta=f(\lambda^*)\leq \frac{(l-\alpha_1)^2}{8(\lambda^*-l)^2}$. So we have that $\lambda^*\leq l+\frac{l-\alpha_1}{2\sqrt{2\delta}}$. We apply bisection search on the interval $[l,l+\frac{l-\alpha_1}{2\sqrt{2\delta}}]$ and the complexity is $\log\left(\frac{\frac{l-\alpha_1}{2\sqrt{2\delta}}}{\epsilon}\right)=O(\log(\frac{1}{\epsilon\sqrt{\delta}}))$ iterations. Note that each iteration needs to compute $f(\lambda)$ with $O(n)$ computations.
\end{proof}

\end{document}